\newcommand{\ajcomment}[1]{}
\newcommand{\labitem}[2]{%
\def\@itemlabel{\text{#1}}
\item
\def\@currentlabel{#1}\label{#2}}
\begin{document}

\title{Learning Rate Schedules in the Presence of Distribution Shift}

\author[1]{Matthew Fahrbach}
\author[1,2]{Adel Javanmard}
\author[1]{Vahab Mirrokni}
\author[1]{Pratik Worah}
\affil[1]{Google Research, \texttt{\{fahrbach,mirrokni,pworah\}@google.com}}
\affil[2]{University of Southern California, \texttt{ajavanma@usc.edu}}
\date{}

\maketitle

\begin{abstract}
We design learning rate schedules that minimize regret for SGD-based online learning in the presence of a changing data distribution.
We fully characterize the optimal learning rate schedule for online linear regression via a novel analysis with stochastic differential equations.
For general convex loss functions, we propose new learning rate schedules that are robust to distribution shift,
and we give upper and lower bounds for the regret that only differ by constants.
For non-convex loss functions,
we define a notion of regret based on the gradient norm of the estimated models
and propose a learning schedule that minimizes an upper bound on the total expected regret.
Intuitively, one expects changing loss landscapes to require more exploration,
and we confirm that optimal learning rate schedules typically increase in the presence of distribution shift.
Finally, we provide experiments
for high-dimensional regression models and neural networks
to illustrate these
learning rate schedules and their cumulative regret.
\end{abstract}
\section{Introduction}

A fundamental question when training neural networks is how much of the weight space to explore and when to stop exploring.
For stochastic gradient descent (SGD)-based training algorithms, this is primarily governed by the learning rate.
If the learning rate is high, then we explore more of the weight space and vice versa.
Learning rates are typically decreased over time
in order to converge to a local optimum,
and there is now a substantial literature focused on how fast learning rates should decay for fixed data distributions
(see, e.g.,~\citet{pmlr-v75-tripuraneni18a} and \citet{JMLR:v19:17-370}, and the references therein).

However, what should we do if the data distribution is constantly changing?
This is the case in many {large-scale} online learning systems where
(1) the data arrives in a stream,
(2) the model continuously makes predictions and computes the loss, and
(3) it always updates its weights based on the new data it sees~\citep{anil2022factory}.
The goal of such a system is to always keep the loss low.
In this setting, convergence is less of a priority since
the model needs to be able to adapt to distribution shifts.
Intuitively, if the loss landscape is consistently changing
(either gradually or due to infrequent sudden spikes),
then it is sensible for the model to always explore its weight space.
We formalize this idea in our work.

Such an investigation naturally leads to the question of how high the learning rate should be,
and what an optimal learning rate schedule is in an online learning scenario?
These questions are critical because while tuning the learning rate can lead to improved accuracy in many applications,
it can also make the online learner widely inaccurate if
the wrong learning rate is used as the distribution changes.

\begin{figure*}
    \centering
    \includegraphics[width=0.24\textwidth]{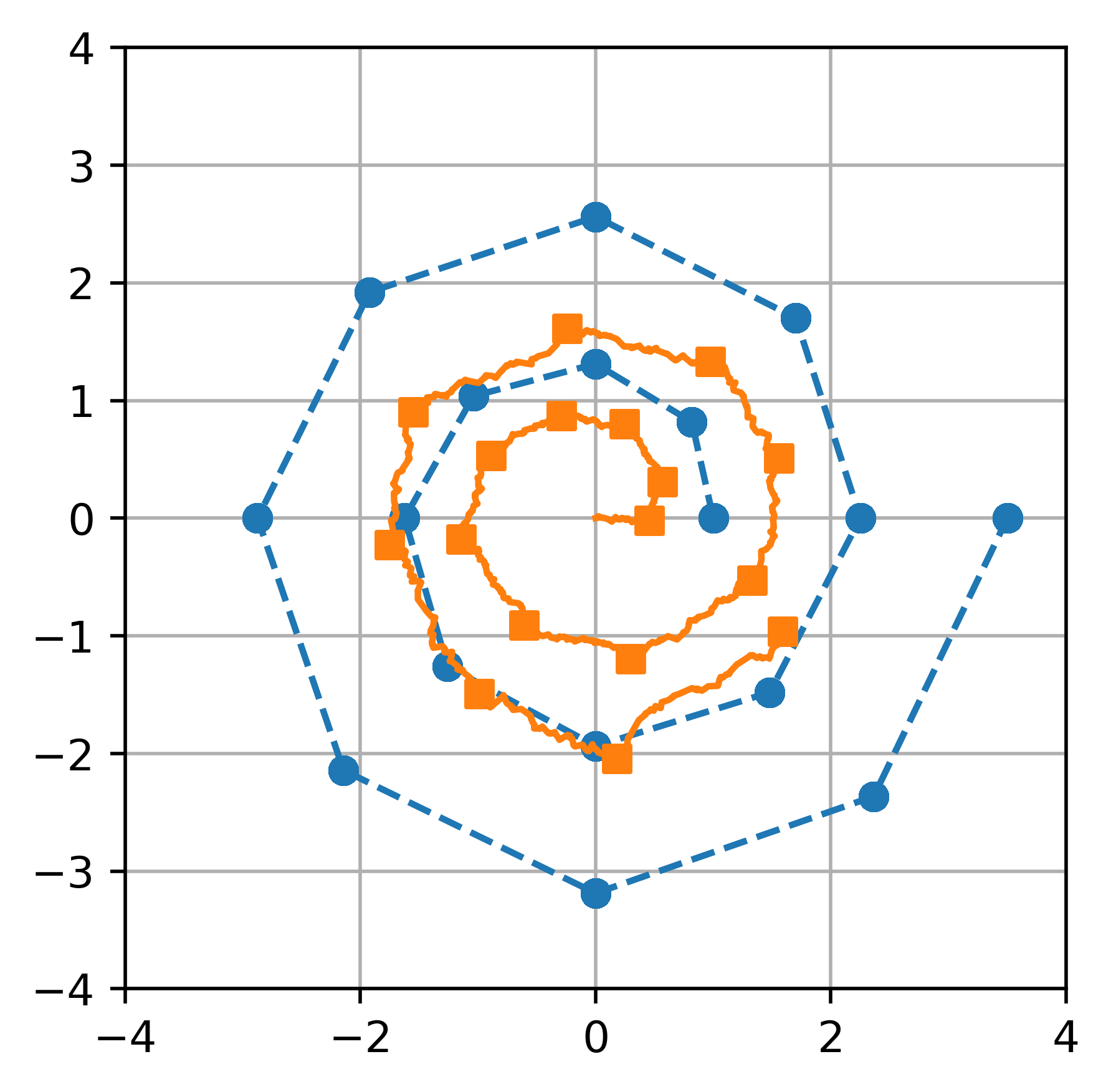}
    \hfill
    \includegraphics[width=0.24\textwidth]{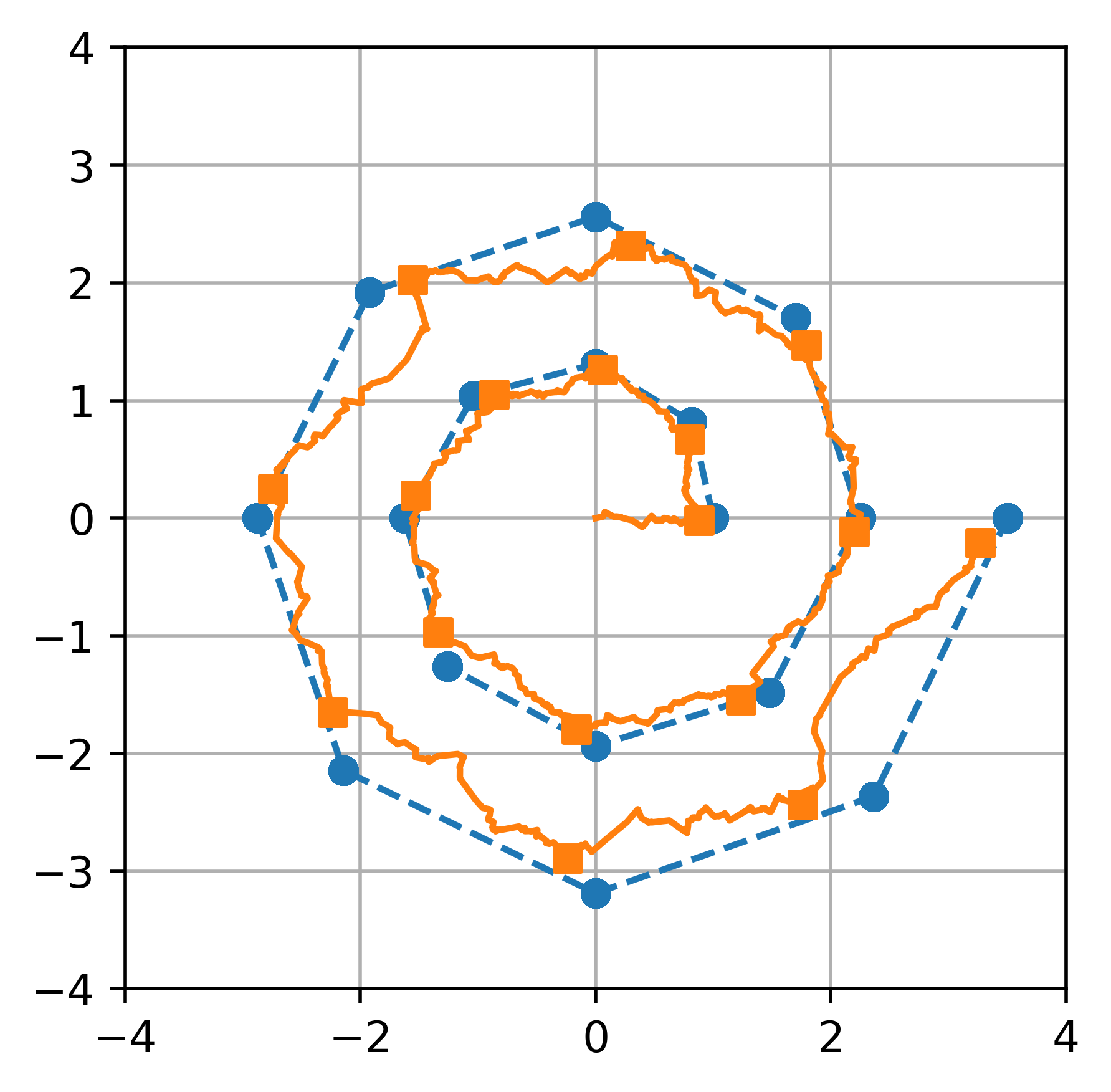}
    \hfill
    \includegraphics[width=0.24\textwidth]{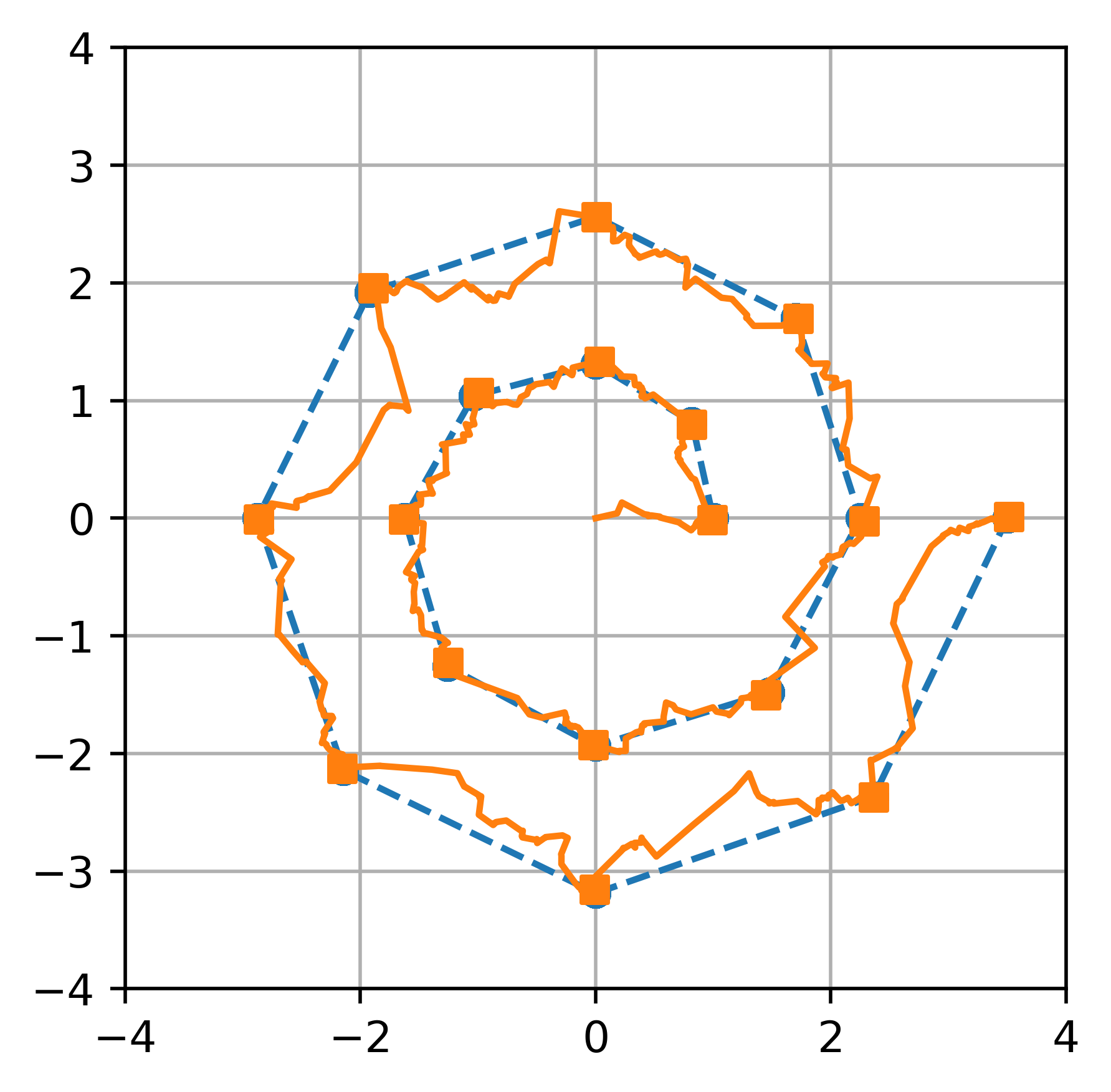}
    \hfill
    \includegraphics[width=0.24\textwidth]{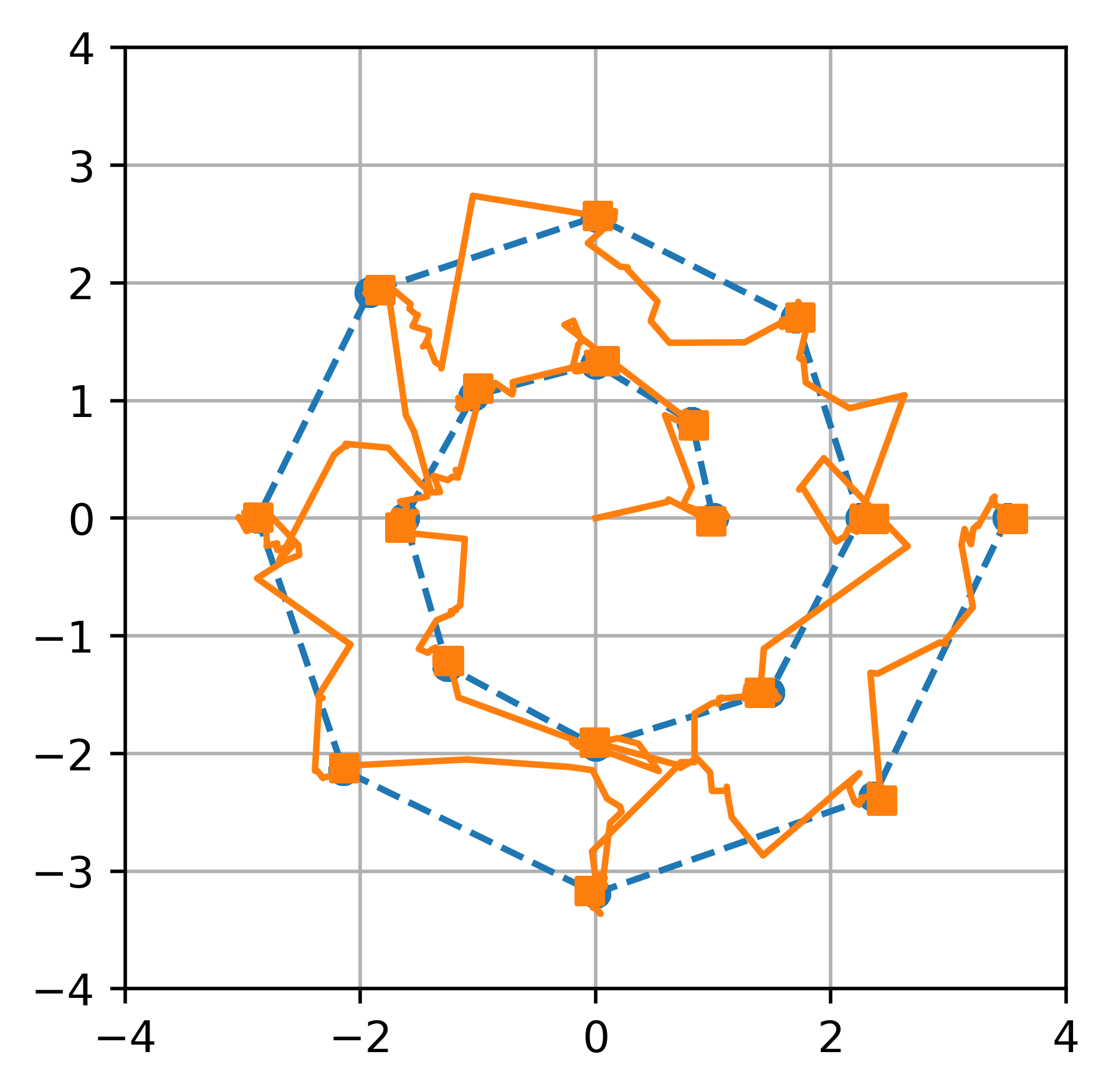}
    \caption{SGD trajectories for online linear regression with different constant learning rates.
    The discrete blue spirals are the optimal model weights $\theta_{t}^* \in \mathbb{R}^{2}$,
    which start at $(1,0)$ and jump clockwise
    every $100$ steps.
    The orange paths are the learned weights $\theta_t$,
    starting at $\theta_0 = 0$
    for $0 \le t \le 17 \cdot 100$.
    The orange squares depict the position every $100$ steps.
    We use batch size $B_t = 1$
    and step sizes
    $\eta_{t} \in \{0.003, 0.01, 0.03, 0.1\}$ from left to right.
    The rightmost SGD is the most out of control,
    but it incurs the least regret
    because it adapts to changes in $\theta_{t}^*$ the fastest
    without diverging.
    }
    \label{fig:warmup_spirals}
    \vspace{-0.1cm}
\end{figure*}

Formally, we study learning rate schedules in the presence of distribution shifts by considering \emph{dynamic regret}, a well-known notion in online optimization that measures the performance against a dynamic comparator sequence.
This regret framework captures the lifetime performance of an online learning system that makes predictions on incoming examples as they arrive (possibly from a time-varying distribution)
before using this data to update its weights. 
    
Our main contributions can be summarized as follows:

\paragraph{Linear regression.}
We consider a linear regression setup with time-varying coefficients $\{\theta^*_t\}_{t \ge 1}$, which are chosen upfront by an adversary such that $\|\theta^*_t-\theta^*_{t+1}\|_{2} \le \gamma_t$ for a sequence of positive numbers $\{\gamma_t\}_{t\ge 1}$.  The variation in the model coefficients results in response shift (while the covariates distribution remains the same across time). We consider a learner who updates their model estimates via mini-batch SGD with an adaptive step size sequence $\{\eta_t\}_{t\ge 1}$ chosen in an online manner (i.e., only with access to previous data points). We derive a novel stochastic differential equation (SDE) that approximates the dynamics of SGD under distribution shift, and by analyzing it, we derive the optimal learning rate schedule.

\paragraph{Convex loss functions.}
We generalize our problem formulation along the following directions: $(i)$ We consider general convex loss functions $\ell(\th,z)$ that measure the loss of a model $\th\in\reals^p$ on the data point $z\in\reals^d$. $(ii)$ At each step the learner observes a batch of data points $\{z_{t,k}\}$ drawn from a time-varying distribution $P_t$, which means it can model both response shift and covariate shift. $(iii)$ An adversary can choose the distributions $P_t$ adaptively at each step by observing the history (i.e., the data and model estimates from previous rounds), in contrast to the linear regression setup where the sequence of models are time-varying but fixed a priori.  For strongly convex loss functions, we give a lower bound for the total expected regret that is of the same form as our upper bound and differs only in the constants, demonstrating that our regret analysis is nearly tight. We then propose a learning rate schedule to minimize the derived upper bound on the regret. This schedule is adaptive, resulting in a time-dependent learning rate that tries to catch up with the amount of distribution shift in the moment. We refer to Section~\ref{sub:lit} for a  detailed comparison to the literature on online convex optimization in dynamic environments.
    
\paragraph{Non-convex loss functions.}
For settings with non-convex loss functions, we modify the notion of regret to use the gradient norm of the estimated model. We derive an upper bound for the expected cumulative regret and propose a learning rate schedule that minimizes it.
In our experiments in \Cref{app:cyto},
we use neural networks and dynamic learning rates to
continuously classify cells arriving in a stream of
small condition RNA data \citep{Bastdas-Ponce}.
This work simulates an online and deep learning-based
\emph{flow cytometry} algorithm.
We refer the reader to \citet{LMC19} for more details about this application.
One take-away message from our analysis and experiments
in all three settings is that an optimal learning rate schedule
typically increases in the presence of distribution shift. 

The organization of the paper is as follows.
In \Cref{sub:lit}, we proceed with a literature review.
In \Cref{sub:res}, we present an overview of our tools,
techniques, and informal statements of our theoretical results.
We formally define the problem in \Cref{sec:formulation}.
We present our results for linear regression in \Cref{sec:linear-regression},
convex losses in \Cref{sec:cvx}, and non-convex losses in \Cref{sec:nonconvex}.
In \Cref{sec:experiments},
we present experiments to study the effect
of the proposed learning rate schedules,
including high-dimensional regression and a
medical application to flow cytometry.
We defer the proofs of our technical results to the appendix.

\subsection{Related work}
\label{sub:lit}
With deep neural networks now being used in countless applications and SGD remaining the dominant algorithm for training these models, there has been a surge of effort to understand how learning rates affect the behavior of stochastic optimization methods~\citep{Bengio,Smith}. Most of the existing literature, however, assumes no shift in the underlying distribution across the iterations of SGD. Various trade-offs between learning rate and batch size have been studied~\citep{keskar2016large,smith2018don}. In particular, \citet{smith2018don} propose that instead of the decaying learning rate, one can increase the batch size during training and empirically show that it results in near-identical model performance with significantly fewer parameter updates.
\citet{shi2020learning} analyze the effect of learning rate on SGD by studying its continuum formulation given by a stochastic differential equation (SDE) and show that for a broad class of losses, this SDE converges to its stationary distribution
at a linear rate, further revealing the dependence of a linear convergence rate on the learning rate. Learning rate schedules for SGD, under fixed distribution, and for the setting of least squares has been studied in~\citep{ge2019step,jain2019making}.
Decaying learning rate via cyclical schedules has also
been proposed for training deep neural models (see, e.g.,~\citet{Smith,loshchilov2016sgdr,li2019exponential}).

The effects of SGD hyperparameters (e.g., batch size and learning rate) have also been studied for the adversarial robustness of the resulting models \citep{yao2018hessian,kamath2020sgd}.
In this setting, a model is trained on unperturbed samples, but at test time the sample features are slightly perturbed. In contrast, this paper considers settings where \emph{the data distribution is constantly changing}---even during training---and studies the effect of learning rates in presence of such distribution shifts.

\paragraph{Connections to online optimization.}
The notion of dynamic regret has been used in online convex optimization to evaluate the performance of a learner against a dynamic target, as opposed to the classical single best action in hindsight~\citep{zinkevich2003online,yang2016tracking,jadbabaie2015online,besbes2015non,bedi2018tracking}. In this setting, nature chooses a sequence of convex functions $f_1,f_2,\dots, f_T$ and the learner chooses a model (i.e, action) $\theta_t$ at each step and incurs loss~$f_t(\theta_t)$.
Our problem is closest to the works of~\citet{besbes2015non} and \citet{bedi2018tracking}, in which the learner only has noisy access to gradients $\nabla f_t(\theta_t)$. There is often a notion of variation to capture the change in the comparator.
For example,~\citet{yang2016tracking} consider ``path variation'', which measures how fast the minimizers of the sequence of loss
functions change;
\citet{besbes2015non} defines a ``functional variation''
based on the supremum distance between consecutive loss functions;
and \citet{bedi2018tracking} track the ``path length'' between minimizers
(i.e., what we call \emph{distribution shift} in this work).

\citet{yang2016tracking} bound the cumulative dynamic regret when a constant step size $\eta\propto\sqrt{\mathcal{V}_T/T}$ is used, where $T$ is the horizon length and $\mathcal{V}_T$ is the variation budget that controls the power that nature has in choosing the sequence of loss functions (see Theorem~7 therein).
\citet{besbes2015non} propose a restarting procedure, which for batch size $\Delta_T$ restarts an online gradient descent algorithm every $\Delta_T$ periods. Their analysis suggests to take $\Delta_T = (T/\mathcal{V}_T)^{2/3}$ and $\eta \propto 1/\sqrt{\Delta_T}$ (see Theorem 3 therein).
\citet{bedi2018tracking} design and analyze the
inexact online gradient descent (IOGD) algorithm.

While these works also suggest that in a changing environment the learning rate should in general be set higher,
our formulation and analysis for the convex setting departs from these works in the following ways: $(i)$ Instead of constant or a pre-determined learning rate, our framework allows for \emph{adaptive schedules} where the learning rate at each epoch can be set based on the history; $(ii)$ The notion of dynamic regret is often defined with respect to an arbitrary but fixed sequence of loss functions satisfying a variation budget. In contrast, we allow the data distribution to shift adaptively at each step after observing the history, and so the expected loss changes adaptively over time;
$(iii)$ \citet{besbes2015non} and \citet{yang2016tracking} establish lower bounds on the dynamic regret, but these bounds are for the worst-case regret over the choice of loss function sequences that satisfy the variation budget. These lower bounds are obtained by carefully crafting a sequence that is hard to optimize in an online manner. However, there is a subtle difference in our setting---the loss function $\ell(\theta,z)$ is fixed and the change in expected loss over time comes from a shift in the data distribution~$z$.
The lower bound we derive for dynamic regret assumes
the same \emph{fixed loss function} $\ell(\theta,z)$.

\subsection{Overview of  techniques}
\label{sub:res}

To analyze the behavior of SGD in a linear regression setting,
we derive a novel stochastic differential equation (SDE) that approximates the dynamics of SGD in the presence of distribution shift. Using Gr\"{o}nwall's inequality~\citep{gronwall1919note}, we control the deviation of the SGD trajectory from the continuous process and relate the regret of SGD to the second moment of the continuous process, which we characterize using the celebrated It\^o's lemma from stochastic calculus~\citep{oksendal2013stochastic} (see Lemma~\ref{lem:Ito}).
Using this characterization, we derive an
optimal learning rate schedule in a sequential manner.

Our results for general convex loss functions are based on an intricate treatment of the regret terms, taking the expectation with respect to a proper filtration and applying several properties of convex functions and SGD  itself.

Non-convex loss functions can have a complicated landscape with potentially many local minima and saddle points. Even without distribution shifts, first-order methods like SGD are not guaranteed to converge to a global minimum. To deal with this, we modify the definition of regret to use the norm of the gradient of the loss for the estimated models. Thus, a trajectory that stays close to local minima of the loss functions has low total regret. To upper bound the cumulative regret in this setting, we follow a similar proof technique as in the convex case, but rely only on the SGD update formulation and first-order optimality conditions on the sequence of optimal weights $\{\theta^*_t\}_{t \ge 1}$.
\section{Problem formulation: Dynamic regret}
\label{sec:formulation}

We consider an online sequential learning setting where at each step $t$ the learner observes a batch of size $B_t$ data points $\bz_t = \{z_{t,k}\}_{k=1}^{B_t}$ drawn independently from a distribution $P_t$. The distributions $P_t$ can vary with time and are defined on $\reals^d$. 
The batch loss incurred at step $t$ is $\frac{1}{B_t}\sum_{k=1}^{B_t} \ell(\theta_t,z_{t,k})$ for a function $\ell:\reals^p\times \reals^d\to \reals_{\ge 0}$. 
The learner then updates its model weights $\theta_t \to \theta_{t+1}\in\reals^{p}$.

Define the expected loss as $\ellbar_t(\theta) := \E_{P_t}[\ell(\theta,z_{t,k})]$.
Letting $(\theta_1,\theta_2,\dotsc)$ denote the sequence of learned models, the total expected loss up to time $T$ is $\sum_{t=1}^{T} \ellbar_t(\theta_t)$.
The goal of the learner is to minimize the above objective.
For each step $t$, we define an \emph{oracle model} with weights
\begin{equation}
\label{eqn:theta_star}
    \theta^*_t := \argmin_{\theta \in \mathbb{R}^{p}} \ellbar_t(\theta).
\end{equation}
Since the distributions $P_t$ can vary with time,
the weights~$\theta^*_t$ also shift over time.

Instead of minimizing the total loss, we equivalently work with the \emph{regret} of the learner defined
with respect to the comparator sequence $(\theta^*_1, \theta^*_2,\dotsc)$ below:
\begin{align}
\label{eqn:regret_def}
    \Reg(T)
    :=
    \sum_{t=1}^{T} \reg_{t}
    ,\quad
    \reg_{t} := \ellbar_t(\theta_t) - \ellbar_t(\theta_t^*).
\end{align}
Note that $\Reg(t)$ and $\reg_{t}$ are random variables
that depend on $\theta_t$. This framework can be seen
as a game between nature, who chooses the distributions $P_t$
(and thus the sequence of oracle models~$\theta^*_t$),
and the learner, who must choose
the sequence of models $\theta_t$ for $t\ge 1$.

The learner updates its weights using projected mini-batch stochastic gradient descent (mini-batch SGD) given by
\begin{align}\label{eq:BSGD}
    \theta_{t+1} &= \Pi_{\Theta}\left(\theta_t  - \eta_t \nabla\ell^B_t(\theta_t)\right) \\
    \nabla\ell^B_t(\theta_t) &:=
    \frac{1}{B_t} \sum_{k=1}^{B_t} \nabla\ell(\theta_t,z_{t,k})\,,
\end{align}
where $\nabla\ell(\theta_k,z_k)$ are stochastic gradients, $\Theta$ is a bounded convex set, and $\Pi_{\Theta}$ is the projection onto the admissible weight set $\Theta \subseteq \mathbb{R}^{p}$.
Observe that $\E[\nabla\ell(\theta_k,z_k)] = \nabla \ellbar(\theta_k)$, and therefore the sample average gradient above is an unbiased estimate of the gradient of the expected loss.

Nature is allowed to be \emph{adaptive} in that she can set $\theta^*_t$ after observing the
history of the data
\begin{align}\label{eq:history}
 \bz_{[t-1]} := \{ (z_{k,1}, z_{k,2}, \dots, z_{k,B_k})  : 1 \le k \le t-1\}.
\end{align}
The step sizes $\eta_t$, called the \emph{learning rate schedule},
can also change over time in an adaptive manner, i.e.,
the learning rate $\eta_t$ is a function of $\bz_{[t-1]}$.
Note that by the SGD update, $\theta_t$ is a function of $\bz_{[t-1]}$, and so $\theta^*_t$ can depend on the previously learned models $\theta_s$
for $s < t$.
The learning rate schedule controls how the step size changes across iterations.

\begin{definition}[Distribution shift]
\label{def:shift}
Recall the definition of oracle models $\theta_{t}^*$ in \eqref{eqn:theta_star}.
We quantify the \emph{distribution shift} (variation of $P_t$ over time) in terms of the variation in oracle models, namely
\begin{equation}
\label{eqn:gamma_t_def}
    \gamma_t: = \|\theta^*_t - \theta^*_{t+1}\|_{2}\,.
\end{equation}
\end{definition}

\noindent
This is related to the notion of path length between minimizers
in~\citet{bedi2018tracking}.
\section{Linear regression}
\label{sec:linear-regression}

We start by studying the linear regression setting with a time-varying coefficient model~\citep{fan2008statistical,hastie1993varying}.
Each sample $z_{t,k} = (x_{t,k},y_{t,k})$ is a pair of covariates $x_{t,k}\in\reals^d$ and a label $y_{t,k}$, with 
\begin{align}\label{eq:LinModel}
    y_{t,k} = \<x_{t,k},\theta^*_t\>+ \eps_{t,k}\,,
\end{align}
where $\eps_{t,k}\sim\normal(0,\sigma^2)$ is random noise. The covariate distribution is assumed to be the same across time, and for simplicity assumed as $x_{t,k}\sim\normal(0,I)$. The model $\theta^*_t$ changes over time, so we have label distribution shift.
We consider least squares loss $\ell(\theta,z) = \frac{1}{2}(y - \<x,\theta\>)^2$, for $z = (x,y)$. 


To provide theoretical insight on the dependence of SGD on the learning rate under distribution shift, we follow a recent line of work that studies optimization algorithms via the analysis of their behavior in continuous-time
limits~\citep{krichene2017acceleration,li2017stochastic,chaudhari2018deep,shi2020learning}.
Specifically, for SGD this amounts to studying stochastic differential equations (SDEs) as an approximation for discrete stochastic updates. The construction of this correspondence is based on the Euler--Maruyama method. We assume that the step sizes in SGD are given by $\eta_t = \eps \zeta(\eps t)$, where $\zeta(t)\in [0,1]$ is the adjustment factor and $\eps$ is the maximum allowed learning rate. In addition, the batch sizes are given by $B_t = \eps \nu(\eps t)$, for sufficiently regular functions $\zeta, \nu:\reals_{\ge 0}\to \reals_{\ge0}$.\footnote{More precisely, $B_t = \lceil\eps \nu(\eps t)\rceil$ must be an integer,
however, the rounding effect is negligible in the continuous time analysis.}  

We use $t$ to denote the iteration number of SGD and use~$\tau$ as the continuous time variable for the corresponding SDE.
We show that the trajectory of SGD updates can be approximated by the solution of the following SDE:
\begin{align}
\label{eq:SDE}
    \de X(\tau) =-(\zeta(\tau) X(\tau) +Y'(\tau))\de\tau
    + \tfrac{\zeta(\tau)}{\sqrt{\nu(\tau)}} \left((\|X(\tau)\|^2+\sigma^2)I + X(\tau)X(\tau)^\sT\right)^{1/2}\; \de W(\tau)\,,
\end{align}
where $X(0) = \theta_0 - \theta^*_0$
and $Y(\tau)$ is a sufficiently smooth curve so that $Y(\eps t) = \theta^*_t$. Further, $W(\tau)$ is $d$-dimensional vector with each entry being a standard Brownian motion, independent from other entries. To make this connection, we posit the following assumptions:
\begin{itemize}
    \item[{\sf A1.}] The functions $\zeta(\tau)$ and $\zeta(\tau)/\sqrt{\nu(\tau)}$ are bounded Lipschitz:
    $\|\zeta\|_\infty$, $\|\zeta\|_{\rm Lip}$,
    $\|\zeta/\sqrt{\nu}\|_\infty$, $\|\zeta/\sqrt{\nu}\|_{\rm Lip}\le K$.
    \item[{\sf A2.}] The function $Y(\tau)$ is bounded Lipschitz: $\|Y(\tau)\|\le K$ and $\|Y'(\tau)\|\le \Gamma/\eps$, for constants $K, \Gamma>0$. Recall that $Y(\tau)$ is the continuous interpolation of the sequence models $\theta^*_t$ and therefore $Y'(\tau)$ controls how fast $\theta^*_t$ are changing and is a measure of distribution shift in the response variable $y_{tk}$ in~\eqref{eq:LinModel}.
\end{itemize}

In {\sf (A1)} we use the notation $\|f\|_{\rm Lip} := \sup_{x\neq y} |f(x)-f(y)|/|x-y|$ to indicate the Lipschitz norm of a function and $\|f\|_\infty:= \sup_x |f(x)|$.  

\begin{propo}\label{pro:approx}
For any fixed $T,u>0$, there exists a constant $C = C(K,\Gamma, d,\sigma, T,u)$, with parameters $K,\Gamma$ given in Assumptions {\sf A1-A2}, such that with probability at least $1-e^{-u^2}$ we have
\[
    \sup_{t\in[0,T/\eps]\cap \mathbb{Z}_{\ge 0}} \Big|\|X_{t\eps}\|^2-\|\theta_t-\theta^*_t\|^2\Big|
    \le C\sqrt{\eps}\,.
\]
\end{propo}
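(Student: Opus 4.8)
The plan is to establish the stronger pathwise estimate $\sup_{0\le t\le T/\eps}\|X(\eps t)-\Delta_t\|\le C'\sqrt{\eps}$, where $\Delta_t:=\theta_t-\theta^*_t$, on an event of probability $1-e^{-u^2}$; the proposition then follows from the identity $\big|\|X(\eps t)\|^2-\|\Delta_t\|^2\big|=|\langle X(\eps t)+\Delta_t,\,X(\eps t)-\Delta_t\rangle|\le(\|X(\eps t)\|+\|\Delta_t\|)\|X(\eps t)-\Delta_t\|$ together with the a priori bounds $\sup_{\tau\le T}\|X(\tau)\|=O(1)$ (standard moment and modulus-of-continuity estimates for \eqref{eq:SDE} under {\sf A1}--{\sf A2}, with Gaussian tails) and $\|\Delta_t\|=O(1)$ (since $\Theta$ is bounded and $\|\theta^*_t\|\le K$). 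First I would write out the mini-batch SGD recursion for the least-squares loss: since $\nabla\ell(\theta,(x,y))=xx^\sT(\theta-\theta^*_t)-x\,\eps_{t,k}$ and $x_{t,k}\sim\normal(0,I)$, one obtains
\[
  \Delta_{t+1}=(1-\eta_t)\Delta_t-(\theta^*_{t+1}-\theta^*_t)+\xi_t+\rho_t^{\Pi},
\]
where $\rho_t^{\Pi}$ is the projection correction and $\xi_t$ is a martingale difference with respect to the filtration generated by $\bz_{[t]}$ with $\E[\xi_t\mid\bz_{[t-1]}]=0$ and conditional covariance $\eta_t^2 B_t^{-1}\Sigma_0(\Delta_t)=\eps\,\Sigma(\Delta_t)$, where $\Sigma_0(x):=(\|x\|^2+\sigma^2)I+xx^\sT$ and $\Sigma(x):=\tfrac{\zeta(\eps t)^2}{\nu(\eps t)}\Sigma_0(x)$ is exactly the (per unit time) diffusion coefficient of \eqref{eq:SDE}; moreover $\xi_t$ has sub-exponential tails at scale $\sqrt{\eps}$, as its entries are quadratics and products of Gaussians. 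Using the contractive factor $1-\eta_t$ and {\sf A1}--{\sf A2}, I would argue the iterates $\theta_t$ stay in the interior of $\Theta$ on an event of probability $1-e^{-\Omega(u^2)}$, so that $\rho_t^{\Pi}\equiv 0$ there; henceforth all statements are on this event.

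Next I would discretize \eqref{eq:SDE} over each interval $[\eps t,\eps(t+1)]$ by freezing the coefficients at the left endpoint, writing $X(\eps(t+1))=(1-\eta_t)X(\eps t)-(\theta^*_{t+1}-\theta^*_t)+\chi_t+r_t$. Here the distribution-shift term is \emph{exact}, because $\int_{\eps t}^{\eps(t+1)}Y'(s)\,\de s=Y(\eps(t+1))-Y(\eps t)=\theta^*_{t+1}-\theta^*_t$; the term $\chi_t$ is conditionally $\normal\!\big(0,\eps\,\Sigma(X(\eps t))\big)$, built from the Brownian increment $W(\eps(t+1))-W(\eps t)$; and $r_t$ collects the freezing errors, which split into a conditionally centered part of conditional second moment $O(\eps^2)$ (from the diffusion coefficient varying over the interval) and a bias part of size $O(\eps^{3/2})$, all controlled via {\sf A1}, the smoothness of $Y$, and $\E\big[\sup_{s\in[\eps t,\eps(t+1)]}\|X(s)-X(\eps t)\|^4\big]=O(\eps^2)$. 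The crucial step is to couple $\chi_t$ and $\xi_t$ on the SGD probability space so that, conditionally on $\bz_{[t-1]}$,
\[
  \E\big[\|\chi_t-\xi_t\|^2\mid\bz_{[t-1]}\big]\;\lesssim\;\eps\,\|X(\eps t)-\Delta_t\|^2+\eps^{2},
\]
the first term arising from the covariance mismatch $\|\Sigma(X(\eps t))^{1/2}-\Sigma(\Delta_t)^{1/2}\|_F^2\lesssim\|X(\eps t)-\Delta_t\|^2$ (Lipschitz continuity of the diffusion coefficient on bounded sets), and the second from the non-Gaussianity of the mini-batch gradient noise.

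Writing $D_t:=X(\eps t)-\Delta_t$ (so $D_0=0$) and subtracting the two recursions gives $D_{t+1}=(1-\eta_t)D_t+(\chi_t-\xi_t)+r_t$. Since the discount factors $\prod_{s<t}(1-\eta_s)$ are bounded away from $0$ by a constant depending on $\|\zeta\|_\infty$ and $T$, rescaling turns $D_t$ into a martingale plus a bounded-bias term, so Doob's maximal inequality and a Bernstein-type tail bound for sub-exponential increments control $\sup_T\|D_T\|$ once we bound the predictable quadratic variation $\sum_t\E[\|\chi_t-\xi_t\|^2\mid\bz_{[t-1]}]\lesssim\eps\sum_t\|D_t\|^2+\eps T$ and the accumulated bias $\sum_{t<T}\|r_t^{\mathrm{bias}}\|=O(\sqrt{\eps}\,T)$. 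I would close the circular dependence on $\sum_t\|D_t\|^2$ by a stopping-time bootstrap: on $\{\|D_s\|\le A\sqrt{\eps}\text{ for }s<T\}$ the bounds above give $\|D_T\|\le C'\sqrt{\eps}$ with $C'<A$ for $A=A(K,\Gamma,d,\sigma,T,u)$ large enough, so the first exit time exceeds $T/\eps$ on the intersection of the relevant $1-e^{-\Omega(u^2)}$ events; adjusting constants yields probability $1-e^{-u^2}$, and the opening display converts the bound on $\|D_t\|$ into the claimed bound on $\big|\|X_{t\eps}\|^2-\|\theta_t-\theta^*_t\|^2\big|$.

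I expect the main obstacle to be the per-step coupling inequality above. Matching only the first two conditional moments of $\xi_t$ and $\chi_t$ yields $\E\|\chi_t-\xi_t\|^2=O(\eps)$ --- the same scale as the noise itself --- which after summing over the $\Theta(T/\eps)$ steps produces an $O(1)$ rather than $O(\sqrt{\eps})$ deviation. To gain the extra factor of $\sqrt{\eps}$ one must exploit that $\xi_t$ is an average over the mini-batch and invoke a quantitative multivariate central limit theorem (a Wasserstein Berry--Esseen estimate) comparing its conditional law to the matching Gaussian, using the sub-exponential moment bounds for the least-squares gradients $xx^\sT\Delta_t-x\,\eps_{t,k}$; obtaining this quantitative CLT with the right dependence on the batch size, and then propagating the $\|D_t\|$-dependence consistently through the bootstrap while verifying that the projection stays inactive, is where the real work lies.
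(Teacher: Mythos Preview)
Your overall architecture---a priori bounds on the SDE, a one-step discretization, a martingale-plus-bias decomposition of the error, and a Gr\"onwall/bootstrap to close---matches the paper's structure. The substantive divergence is exactly at the point you flag as the obstacle: how to couple the SGD noise $\xi_t$ to the Brownian increment $\chi_t$.

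The paper does not solve this; it sidesteps it. After computing the conditional covariance $B_t^{-1}D_t$ of the mini-batch noise, the paper simply \emph{declares} ``We let $\xi_t=-D_t^{1/2}g_t$ with $g_t\sim\normal(0,I_d)$'' and then rewrites the cumulative noise $\sum_{\ell<t}\eta_\ell B_\ell^{-1/2}D_\ell^{1/2}g_\ell$ as the stochastic integral $\int_0^{t\eps}\tfrac{\zeta([s])}{\sqrt{\nu([s])}}D_{[s]}^{1/2}\,\de W(s)$ driven by the \emph{same} Brownian motion as the SDE. With this identification the remaining errors are purely freezing errors (coefficient variation over $[\eps t,\eps(t+1)]$), which the paper handles by a modulus-of-continuity lemma for $X$ and a direct Gr\"onwall bound on $\Delta(\tau):=\sup_{s\le\tau,\,s\in\eps\mathbb{Z}}\|X(s)-(\theta_{s/\eps}-\theta^*_{s/\eps})\|$. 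There is no Berry--Esseen step, no stopping-time bootstrap, and no projection analysis (projection is absent from the linear-regression SGD in the proof).

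So your proposal is more honest than the paper's argument: the paper's proof, as written, establishes the proposition only for the \emph{Gaussianized} SGD, and the sentence ``thus can be approximated by a normal distribution'' is not quantified. Your diagnosis that naive second-moment matching loses a factor $\sqrt{\eps}$ is correct, and your proposed remedy (a Wasserstein Berry--Esseen over the batch) is the natural one---but note the paper's scaling $B_t=\lceil\eps\,\nu(\eps t)\rceil$, which for bounded $\nu$ gives batch size $O(1)$ as $\eps\to0$, so there is no CLT gain in that regime; a genuine strong-approximation argument (KMT-type, applied along the time axis rather than within a batch) would be needed. If your goal is to reproduce the paper's result, you may simply adopt its Gaussian-noise convention and run Gr\"onwall; if your goal is to make the statement rigorous for the true SGD noise, you are attempting something strictly stronger than what the paper proves.
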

We defer the proof of this proposition and the exact expression
for the constant $C$ to \Cref{proof:approx}.

\noindent
The expected regret at time $t$ works out as:
\begin{align*}
    \E[\reg_t] &= \E[\ellbar_t(\theta_t)- \ellbar_t(\theta^*_t)] \\
    &= \frac{1}{2}\E[(\<x_{tk},\theta_t-\theta^*_t\>+\eps_{tk})^2] - \frac{1}{2}\E[\eps_{tk}^2] \\
    &=\frac{1}{2}\E[\|\theta_t - \theta^*_t\|^2]\,.
\end{align*}
 Using Proposition~\ref{pro:approx}, $|\E[\reg_t] - \frac{1}{2} \E[\|X(t\eps)\|^2]|\le C\sqrt{\eps}$. Henceforth, we focus on analyzing the second moment of the process $X$, as $\eps$ can be fixed to an arbitrarily small value.

For $X(\tau)$ the solution of SDE~\eqref{eq:SDE}, we define
\begin{align}\label{eq:moments}
m_\tau := \E[X(\tau)]\in\reals^d, \quad
v_{\tau}: = \E[\|X(\tau)\|^2]\,.
\end{align}
In our next theorem, we derive an ODE for $m_\tau$ and $v_\tau$, using It\^o's lemma from stochastic calculus~\citep{oksendal2013stochastic}.
The proof is deferred to Section~\ref{sec:Ito-SDE}.
\begin{thm}\label{pro:SDE}
Consider the SDE problem~\eqref{eq:SDE}, and the moments $m_\tau$ and
$v_{\tau}$ given by~\eqref{eq:moments}. We have
\begin{align}
    m'_\tau &= -\zeta(\tau)m_\tau - Y'(\tau)\,,\label{eq:m-ode} \\
    v'_\tau &=
    \Big((d+1)\frac{\zeta(\tau)^2 }{\nu(\tau)}-2\zeta(\tau)\Big) v_\tau  
    + \frac{\zeta(\tau)^2}{\nu(\tau)}\sigma^2 d - 2m_\tau^\sT Y'(\tau)\,.  \label{eq:v-ode}
\end{align}
\end{thm}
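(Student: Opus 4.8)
The plan is to apply It\^o's lemma to suitable functions of the process $X(\tau)$ solving~\eqref{eq:SDE}, take expectations so that the stochastic‑integral terms (being martingales) drop out, and read off the two ODEs. Write $\Sigma(\tau) := \frac{\zeta(\tau)^2}{\nu(\tau)}\big((\|X(\tau)\|^2+\sigma^2)I + X(\tau)X(\tau)^\sT\big)$ for the instantaneous diffusion covariance, i.e.\ the product of the diffusion coefficient in~\eqref{eq:SDE} with its transpose. For the first moment, integrating~\eqref{eq:SDE} gives $X(\tau) = X(0) - \int_0^\tau\big(\zeta(s)X(s)+Y'(s)\big)\,\de s + \int_0^\tau\Sigma(s)^{1/2}\,\de W(s)$; taking expectations, using that the It\^o integral has zero mean, and differentiating in $\tau$ gives $m'_\tau = -\zeta(\tau)m_\tau - Y'(\tau)$, which is~\eqref{eq:m-ode}.

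For the second moment, apply It\^o's lemma to $f(x)=\|x\|^2$, with $\nabla f(x)=2x$ and $\nabla^2 f(x)=2I$, to obtain
\begin{align*}
\de\|X(\tau)\|^2
&= 2\langle X(\tau), \de X(\tau)\rangle + \tfrac{1}{2}\Tr\big(2I\cdot\Sigma(\tau)\big)\,\de\tau \\
&= \Big(-2\zeta(\tau)\|X(\tau)\|^2 - 2X(\tau)^\sT Y'(\tau) + \Tr(\Sigma(\tau))\Big)\,\de\tau + 2X(\tau)^\sT\Sigma(\tau)^{1/2}\,\de W(\tau)\,.
\end{align*}
Using $\Tr\big((\|x\|^2+\sigma^2)I + xx^\sT\big) = (\|x\|^2+\sigma^2)d + \|x\|^2 = (d+1)\|x\|^2 + \sigma^2 d$, the drift equals $\big((d+1)\tfrac{\zeta(\tau)^2}{\nu(\tau)} - 2\zeta(\tau)\big)\|X(\tau)\|^2 + \tfrac{\zeta(\tau)^2}{\nu(\tau)}\sigma^2 d - 2X(\tau)^\sT Y'(\tau)$. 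Taking expectations and differentiating in $\tau$ yields~\eqref{eq:v-ode}.

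The main obstacle is rigor: one must justify that $\int_0^\tau\Sigma(s)^{1/2}\,\de W(s)$ and the scalar integral $\int_0^\tau 2X(s)^\sT\Sigma(s)^{1/2}\,\de W(s)$ are true (square‑integrable) martingales, so that their means vanish and $\E$ commutes with $\de/\de\tau$, rather than merely local martingales. This needs $\E\int_0^\tau\Tr(\Sigma(s))\,\de s<\infty$ for the first (a uniform‑in‑$s$ bound on $\E\|X(s)\|^2$) and $\E\int_0^\tau X(s)^\sT\Sigma(s)X(s)\,\de s<\infty$ for the second (a uniform bound on $\E\|X(s)\|^4$, since $X^\sT\Sigma X$ is of order $\|X\|^4$ up to the bounded factor $\zeta^2/\nu$). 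I would therefore first establish these a priori moment bounds by applying It\^o to $\|X\|^2$ and $\|X\|^4$, inserting a localizing sequence of stopping times to legitimately discard the stochastic terms, using Assumptions~{\sf A1}--{\sf A2} (boundedness of $\zeta$ and $\zeta/\sqrt{\nu}$ and the control on $Y'$) to derive linear differential inequalities for $\E\|X(s\wedge\tau_n)\|^2$ and $\E\|X(s\wedge\tau_n)\|^4$, closing them with Gr\"{o}nwall's inequality and passing to the limit via Fatou. With these moment bounds in hand, the localization is removed and both the first‑moment and second‑moment computations above become rigorous.
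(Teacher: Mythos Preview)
Your proposal is correct and follows essentially the same approach as the paper: take the expectation of the SDE to get~\eqref{eq:m-ode}, and apply It\^o's lemma to $\|X(\tau)\|^2$, compute the trace $(d+1)\|X\|^2+\sigma^2 d$, and take expectations to get~\eqref{eq:v-ode}. The paper's proof is purely formal and does not address the martingale-versus-local-martingale issue you raise; your localization/Gr\"onwall argument for the moment bounds is an addition, not a departure.
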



It is worth noting that from the above ODEs, larger distribution shift (quantified by the $Y'(\tau)$ term) increases the drift in $m_\tau$ as well as the drift in $v_\tau$ via the term $m_\tau^\sT Y'(\tau)$.
In this case, the learner needs to choose a larger step size $\zeta(\tau)$ to reduce the drift in $m_\tau$,
which is consistent with our message that in dynamic environments
the learning rate should often be set higher. 

The problem of finding an optimal learning rate can be
seen as an optimal control problem,
where the state of the system $(m_\tau,v_\tau)$ evolves according to ODEs~\eqref{eq:m-ode}--\eqref{eq:v-ode},
the control variables~$\zeta$ can take values in the set of   Borel-measurable functions from $[0,T]$ to $[0,1]$,
and the goal is to minimize the cost functional $\int_0^{T}v_\tau \de \tau$.
The optimal learning rate schedule can then be solved exactly by dynamic programming, using
the Hamilton--Jacobi--Bellman equation~\citep{bellman1956dynamic}.
However, the optimal learning rate will depends on $Y'(\tau)$, which is a $d$-dimensional time-varying vector.
We next do a simplification to reduce the dependence to $\|Y'(\tau)\|$. 

Note that $|m_\tau^\sT Y'(\tau)|\le \|Y'(\tau)\|\;\|m_\tau\|\le \|Y'(\tau)\| \sqrt{v_\tau}$. The first inequality becomes tight if the shift $Y'(\tau)$ is aligned with the expected error $m_\tau$. The second inequality becomes tighter as the batch size grows, since it reduces the variance in $X(\tau)$, which by~\eqref{eq:moments} is given by $v_\tau-\|m_\tau\|^2$. Therefore, we have
\[
v'_\tau \le
    \Big((d+1)\frac{\zeta(\tau)^2 }{\nu(\tau)}-2\zeta(\tau)\Big) v_\tau 
     + \frac{\zeta(\tau)^2}{\nu(\tau)}\sigma^2 d + 2\|Y'(\tau)\|\sqrt{v_\tau}.
\]
With this observation and the fact that
our objective is to minimize the cost $\int_0^T v_\tau\de\tau$,
we consider the process $\tilde{v}_\tau$ defined using the upper bound on $v_\tau'$, namely
\begin{align}\label{eq:tv}
    \tilde{v}'_\tau &=
    \Big((d+1)\frac{\zeta(\tau)^2 }{\nu(\tau)}-2\zeta(\tau)\Big) \tilde{v}_\tau
     + \frac{\zeta(\tau)^2}{\nu(\tau)}\sigma^2 d + 2\|Y'(\tau)\|\sqrt{\tilde{v}_\tau}\,.
\end{align}
Our next result characterizes an optimal learning rate with respect to process $\tilde{v}_\tau$.
\begin{thm}\label{thm:optimal-policy}
Consider the control problem
\[
\underset{\zeta:[0,T]\to [0,1]}{{\rm minimize}}\; \int_0^T \tilde{v}_\tau\de \tau\,,\quad \text{\emph{ subject to constraint}}~\eqref{eq:tv}\,.
\]
The optimal policy $\zeta$ is given by 
\begin{align}\label{eq:policy}
    \zeta^*(\tau) =\min\left\{1, \left(\frac{d+1}{\nu(\tau)}\tilde{v}_\tau + \frac{\sigma^2d}{\nu(\tau)}\right)^{-1} \tilde{v}_\tau\right\}\,.
\end{align}
\end{thm}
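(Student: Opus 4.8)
The plan is to recognize the feedback law \eqref{eq:policy} as the \emph{pointwise minimizer of the instantaneous drift} of $\tilde{v}_\tau$, and then to promote this local optimality to global optimality of the cost $\int_0^T\tilde{v}_\tau\,\de\tau$ via an ODE comparison argument (this is, in effect, a verification argument for the Hamilton--Jacobi--Bellman equation, once one knows the value function is monotone in the current state, but the comparison route is more self-contained). Write the right-hand side of \eqref{eq:tv} as $F(\tau,v,\zeta)$ and separate the control-dependent part: \[ F(\tau,v,\zeta)=g_{\tau,v}(\zeta)+2\|Y'(\tau)\|\sqrt v,\qquad g_{\tau,v}(\zeta):=\frac{\zeta^2}{\nu(\tau)}\big((d+1)v+\sigma^2 d\big)-2\zeta v . \] For fixed $\tau$ and $v>0$, the map $\zeta\mapsto g_{\tau,v}(\zeta)$ is a strictly convex quadratic whose unconstrained minimizer is $\nu(\tau)v/((d+1)v+\sigma^2 d)\ge 0$, so its minimizer over $[0,1]$ is that value clipped to $[0,1]$, which is exactly the expression inside the $\min$ in \eqref{eq:policy}. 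Hence, writing $F^*(\tau,v):=\min_{\zeta\in[0,1]}F(\tau,v,\zeta)$ and letting $\zeta^*(\tau;v)$ denote the right-hand side of \eqref{eq:policy} viewed as a function of the state $v$, we have $F(\tau,v,\zeta^*(\tau;v))=F^*(\tau,v)$. I would also record that, for each fixed $\zeta$, $v\mapsto g_{\tau,v}(\zeta)$ is \emph{affine}, so $v\mapsto\min_{\zeta\in[0,1]}g_{\tau,v}(\zeta)$ is a minimum of affine functions, hence concave and globally Lipschitz in $v$ with constant at most $(d+1)K^2+2$ (using $\zeta^2/\nu\le K^2$ from {\sf A1}).

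Next I would set up the closed-loop trajectory $\tilde{v}^*_\tau$ solving $w'=F^*(\tau,w)$ with $w(0)=\tilde{v}_0$, where $\tilde{v}_0=\|\theta_0-\theta^*_0\|^2=\E[\|X(0)\|^2]$, and I would assume $\tilde{v}_0>0$ (the degenerate case $\theta_0=\theta^*_0$ follows by sending $\tilde{v}_0\downarrow 0$). Two control-uniform bounds do all the work. First, $F^*(\tau,w)\ge -2w$ for $w\ge 0$, since the $\zeta^2$-terms and $2\|Y'(\tau)\|\sqrt w$ are nonnegative while $-2\zeta w\ge -2w$ for $\zeta\in[0,1]$; the right-hand side of \eqref{eq:tv} obeys the same bound under any admissible control. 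Consequently every trajectory of \eqref{eq:tv} --- $\tilde{v}^*_\tau$ included --- stays $\ge\tilde{v}_0 e^{-2\tau}\ge\tilde{v}_0 e^{-2T}>0$ on $[0,T]$. Second, $\sqrt w\le(1+w)/2$ together with {\sf A2} gives $F^*(\tau,w)\le 2\|Y'(\tau)\|\sqrt w\le(\Gamma/\eps)(1+w)$, which rules out finite-time blow-up, so $\tilde{v}^*_\tau$ exists on all of $[0,T]$; it is $C^1$, and the induced schedule $\tau\mapsto\zeta^*(\tau;\tilde{v}^*_\tau)$ is continuous and $[0,1]$-valued, hence an admissible learning rate schedule.

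For the comparison step, fix any admissible $\zeta(\cdot)$ with trajectory $\tilde{v}_\tau$ solving \eqref{eq:tv}. By the definition of $F^*$, $\tilde{v}'_\tau=F(\tau,\tilde{v}_\tau,\zeta(\tau))\ge F^*(\tau,\tilde{v}_\tau)$, so $\tilde{v}_\tau$ is a supersolution of the closed-loop ODE with the same initial value $\tilde{v}_0$. On the region $\{w\ge\tilde{v}_0 e^{-2T}\}$, which contains both $\tilde{v}_\tau$ and $\tilde{v}^*_\tau$, the map $w\mapsto F^*(\tau,w)$ is Lipschitz: the concave min-of-affines part is Lipschitz by the observation above, and $w\mapsto 2\|Y'(\tau)\|\sqrt w$ is Lipschitz there with constant at most $\|Y'(\tau)\|/\sqrt{\tilde{v}_0 e^{-2T}}\le(\Gamma/\eps)e^{T}/\sqrt{\tilde{v}_0}$ by {\sf A2}. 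A standard Gr\"onwall argument applied to $e_\tau:=(\tilde{v}^*_\tau-\tilde{v}_\tau)_+$, which is absolutely continuous, satisfies $e_0=0$, and obeys $e'_\tau\le L\,e_\tau$ a.e.\ on $\{e_\tau>0\}$ with $L$ the Lipschitz constant above, then forces $e_\tau\equiv 0$; that is, $\tilde{v}^*_\tau\le\tilde{v}_\tau$ on $[0,T]$. Integrating gives $\int_0^T\tilde{v}^*_\tau\,\de\tau\le\int_0^T\tilde{v}_\tau\,\de\tau$ for every admissible $\zeta$, proving that the policy in \eqref{eq:policy} is optimal.

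I expect the main obstacle to be exactly the non-Lipschitz $\sqrt{\tilde{v}_\tau}$ nonlinearity: it a priori breaks both uniqueness of the closed-loop solution and the ODE comparison principle near $\tilde{v}=0$ (note that $e'_\tau\le C\sqrt{e_\tau}$ with $e_0=0$ does \emph{not} force $e\equiv 0$), so one cannot simply invoke a textbook comparison theorem. The resolution is the control-uniform lower bound $\tilde{v}_\tau\ge\tilde{v}_0 e^{-2T}>0$ established above, which confines all admissible trajectories to a region on which $F^*$ is genuinely Lipschitz in the state, so the comparison follows from ordinary Gr\"onwall; the only remaining care is the boundary case $\tilde{v}_0=0$, which is handled by a limit $\tilde{v}_0\downarrow 0$ together with continuous dependence of the cost functional on the initial condition.
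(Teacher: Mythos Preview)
Your argument is correct and proceeds by a genuinely different route from the paper. The paper invokes the Hamilton--Jacobi--Bellman framework: since the running cost $C(\tilde v,\zeta)=\tilde v$ does not involve the control, the optimality condition $\arg\min_\zeta[\partial_{\tilde v}V\cdot\Phi(\tilde v,\zeta)+C]$ reduces to $\arg\min_\zeta\Phi(\tilde v,\zeta)$ \emph{provided} $\partial_{\tilde v}V>0$, and the paper asserts this monotonicity of the value function only heuristically (``larger $\tilde v_\tau$ means we are further from the sequence of models\ldots''). You bypass this by a direct ODE comparison: after identifying the same pointwise quadratic minimizer, you show that every admissible trajectory is a supersolution of the closed-loop ODE, and conclude $\tilde v^*_\tau\le\tilde v_\tau$ via Gr\"onwall once both trajectories are trapped in the region $\{w\ge\tilde v_0 e^{-2T}\}$ where $F^*$ is Lipschitz in the state. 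What the paper's route buys is brevity and a conceptual link to optimal control; what yours buys is a fully self-contained proof that actually \emph{proves} the monotonicity (pointwise trajectory domination is strictly stronger than monotonicity of the value function) and, importantly, confronts the non-Lipschitz $\sqrt{\tilde v}$ nonlinearity head-on --- an issue the paper's argument does not address.
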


Using the policy $\zeta^*(\tau)$ given by~\eqref{eq:policy} and~\eqref{eq:tv},
we get an ODE that can be solved for $v_\tau$
and then plugged back into~\eqref{eq:policy} to obtain an optimal policy $\zeta^*(\tau)$ and hence optimal learning rate.
We formalize this approach in Algorithm~\ref{alg:linear}, where we solve the ODE for $\tilde{v}_\tau$ (after substituting for optimal $\zeta^*(\tau)$) using the (forward) Euler method.
Translating from the continuous domain to the discrete domain,
we use the relations $\eta_t = \eps\zeta(\eps t)$, $B_t = \eps\nu(\eps t)$, and $\|Y'(\eps t)\|\approx \|\theta^*_{t+1}-\theta^*_t\|/\eps = \gamma_t/\eps$.

\begin{remark}
The learning rate schedule proposed in Algorithm~\ref{alg:linear} is an online schedule in the sense that $\eta_t$ is determined based on the history up to time $t$, i.e., it does does not look into future.
\end{remark}

\begin{remark}
The proposed learning rate in Algorithm~\ref{alg:linear} depends on the distribution shifts $\gamma_t$. In settings where $\gamma_t$ is not revealed (even after the learner proceeds to the next round),
we estimate $\gamma_t$ using an exponential moving average of the drifts in the consecutive estimated models $\theta_t$,
namely $\hat{\gamma}_t = \beta \hat{\gamma}_{t-1} + (1-\beta)\|\theta_t- \theta_{t-1}\|$, with a factor $\beta\in(0,1)$.
\end{remark}

\begin{algorithm}[tb]
   \caption{Optimal learning rate schedule for linear regression undergoing distribution shift.}
   \label{alg:linear}
\begin{algorithmic}
   \STATE {\bfseries Input:} max step size $\eps$,
                            discretization scale $\kappa\in (0,1]$
   \STATE{\bfseries Output:} step sizes $\eta^*_t$
   \STATE{\bfseries Initialization:} $v \gets 0$
   \FOR{$t = 1, 2, \dots$}
   \FOR{$j = 1, 2, \dots, \lceil 1/\kappa \rceil$}
   \STATE $r\leftarrow \min\Big(\frac{v B_t}{(d+1)v + \sigma^2 d},\eps \Big)$\;
   \STATE $v\leftarrow v+\kappa\Big(\frac{d+1}{B_t}r^2 - 2r\Big)v + \kappa \frac{\sigma^2d}{B_t} r^2+2\kappa \gamma_{t-1}\sqrt{v}$\;
   \ENDFOR
   \STATE $\eta^*_{t}\leftarrow\min\Big(\frac{v B_{t}}{(d+1)v+\sigma^2 d},\eps\Big)$
   \ENDFOR
\end{algorithmic}
\end{algorithm}

Figure~\ref{fig:LinReg_shift} shows the learning rate schedule $\eta_t^*$ given by Algorithm~\ref{alg:linear}:
\begin{itemize}
    \item {\bf Bursty shifts.} The left subplot corresponds to the setting where $\gamma_t$ follows a jump process. At the beginning of each episode (40 steps each), $\gamma_t$ jumps to a value~$s$ and then becomes zero for the rest of the episode.
    Therefore, the distribution remains the same within an episode but then switches to another distribution in the next episode.
    In this case, we see the learning rate restart at the beginning of each episode with a more aggressive step size
    (capped at $\varepsilon = 0.1$)
    but then decrease within the episode as there is no shift.
    
    \item {\bf Smooth shifts.} The right subplot illustrates the setting where $\gamma_t$ changes continuously as $\gamma_t = 1/t^{\alpha}$
    for a constant value $\alpha$.
    We see that a smaller value of $\alpha$
    (i.e., larger distribution shift) induces a larger learning rate.
\end{itemize}

\begin{figure*}
    \centering
    \hspace{-0.75cm}
    \begin{subfigure}[b]{0.45\textwidth}
    \includegraphics[width=\textwidth]{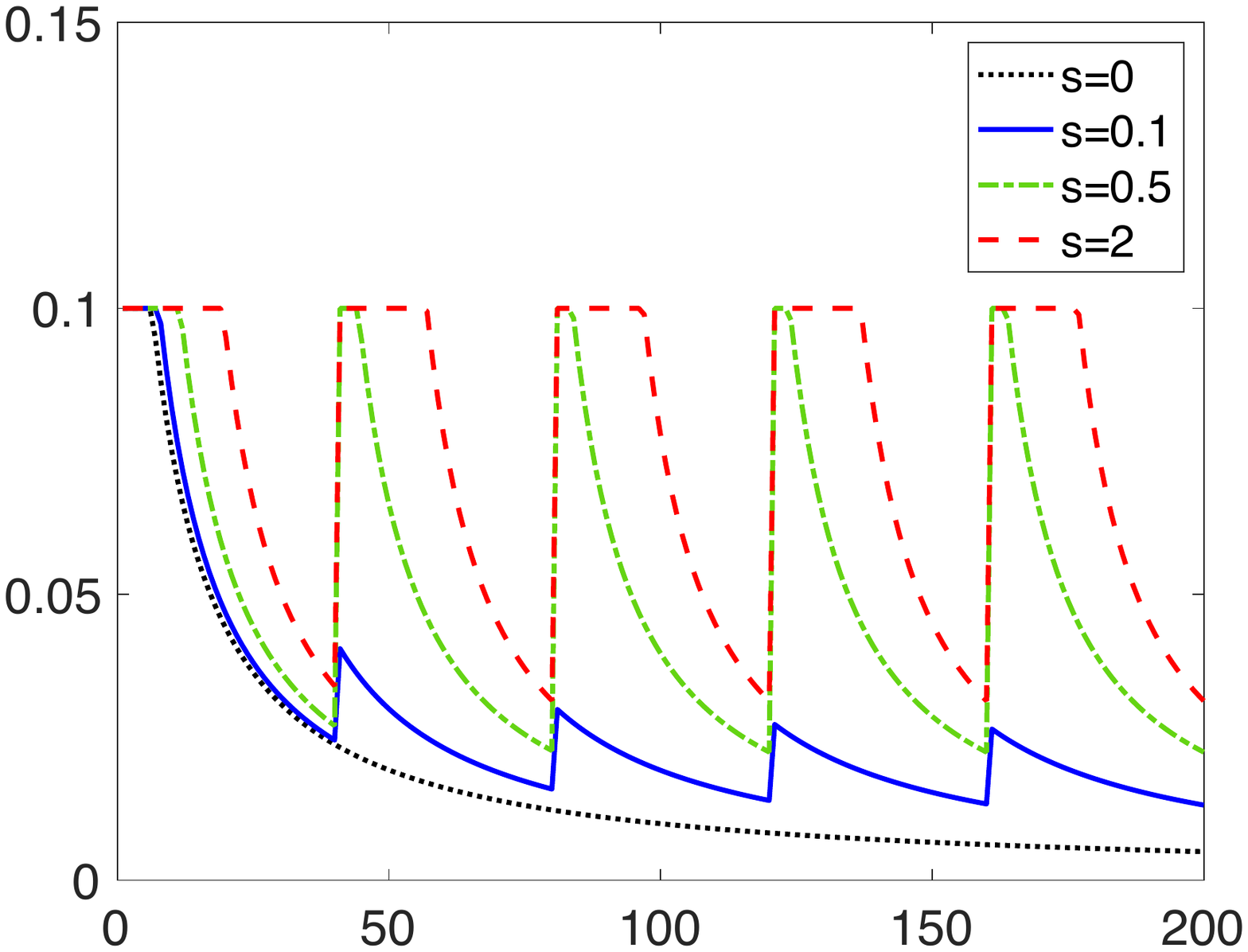}
    \caption{Bursty distribution shifts}\label{fig:LinReg_jump}
    \end{subfigure}
    \hspace{0.15cm}
    \begin{subfigure}[b]{0.45\textwidth}
        \includegraphics[width=\textwidth]{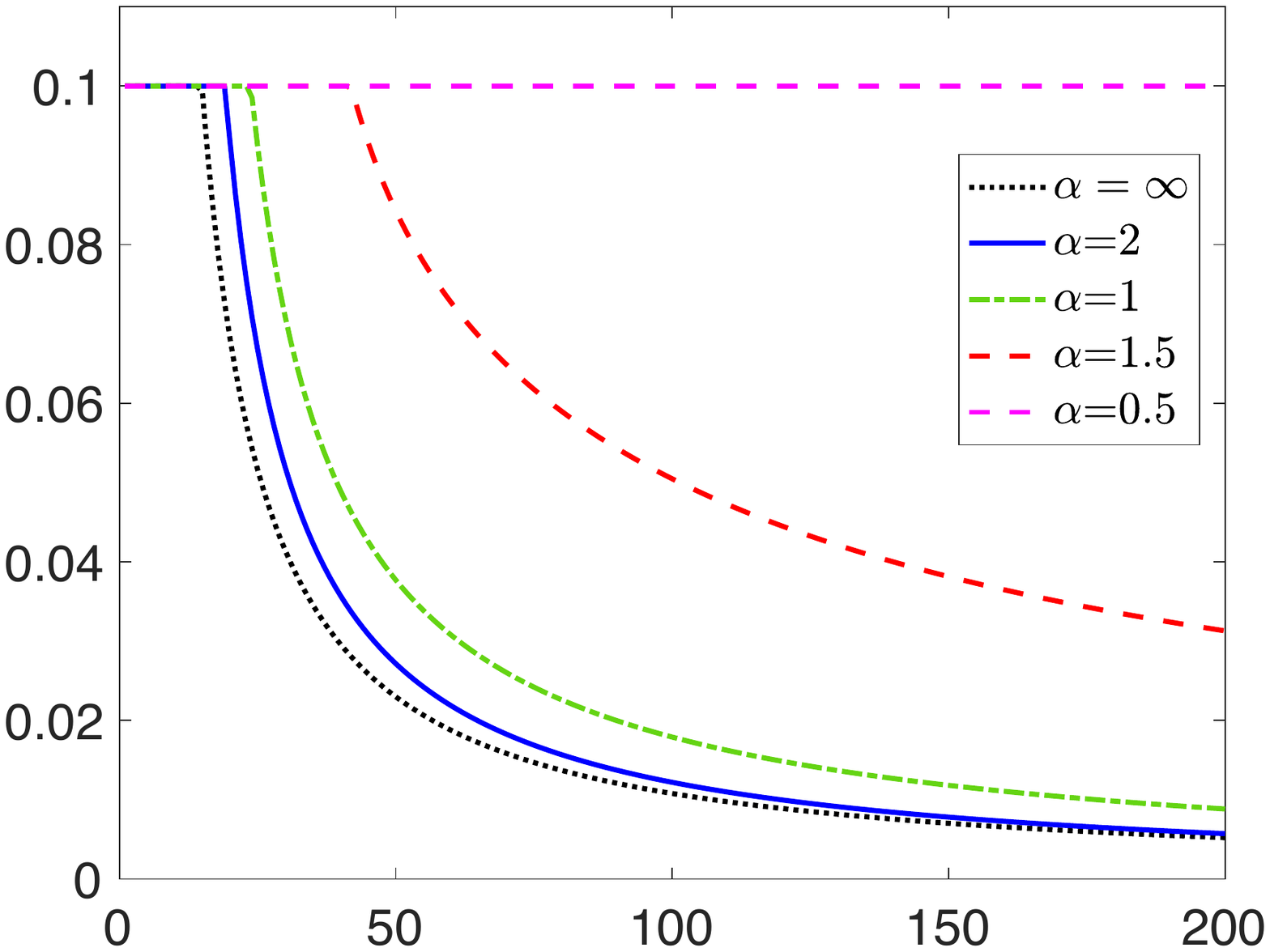}
    \caption{Smooth distribution shifts}\label{fig:LinReg_smooth}
    \end{subfigure}
    
    \caption{Learning rate schedules $\eta_t^*$ devised in Algorithm~\ref{alg:linear} for online linear regression.
    The batch size is $B_t = 100$ for all $1 \le t \le 200$,
    dimension $d = 100$, max step size $\eps = 0.1$, and $\sigma = 2$.}
    \label{fig:LinReg_shift}
\end{figure*}

\subsection{Case study: No distribution shift}
To  build further insight about the proposed schedule,
we study the behavior of \Cref{alg:linear}
when there is no shift in the data distribution
and the batch size is the same across SGD iterations.
Note that in this case, $Y'(\tau) = 0$ and $\nu(\tau) = B/\eps$.
The behavior of the learning rate schedule $\eta^*_t$ is described in the next lemma.  

\begin{lemma}\label{lem:no-shift-Lin}
Consider the following ODE:
\begin{align}\label{eq:tv-n}
    \tilde{v}'_\tau &=
    \Big({\sf a}\zeta(\tau)^2 -2\zeta(\tau)\Big) \tilde{v}_\tau
     + {\sf b}\zeta(\tau)^2\,\\
     {\sf a} &:=
     \eps\frac{d+1}{B}, \quad {\sf b}:=
     \eps\frac{\sigma^2 d}{B}, \notag
\end{align}
with optimal $\zeta(\tau)$ given by~\eqref{eq:policy}. Define 
\begin{align*}
\tau_*&:= \left[\frac{1}{2-{\sf a}}\log\left((1-{\sf a}) \left(\tilde{v}_0 \frac{2-{\sf a}}{{\sf b}} - 1\right)\right)\right]_+\,,\\
C &= {\sf a}\ln\Big(\frac{1-{\sf a}}{{\sf b}}\Big) +1-{\sf a} -\tau_*\,.
\end{align*}
We then have the following:
\begin{itemize}
    \item If $\tau\le \tau_*$, then
    \begin{align*}
    \tilde{v}_\tau = \left(\tilde{v}_0 - \frac{{\sf b}}{2-{\sf a}}\right) e^{-(2-{\sf a})\tau}+\frac{{\sf b}}{2-{\sf a}}, \quad
    \zeta(\tau)  = 1\,.
    \end{align*}
    \item As $\tau\to \infty$, we have
    \begin{align*}
    \lim_{\tau\to\infty}\frac{\tilde{v}_\tau}{\frac{b}{\tau+C}} = 1\,,\quad
    \lim_{\tau\to\infty} \frac{\zeta(\tau)}{\frac{1}{{\sf a}+C+\tau}} = 1.
    \end{align*}
\end{itemize}
\end{lemma}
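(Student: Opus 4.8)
The plan is to treat the two regimes separately, exploiting the structure of the optimal policy~\eqref{eq:policy}. When there is no shift, the policy specializes to $\zeta^*(\tau) = \min\{1, \tilde v_\tau/({\sf a}\tilde v_\tau+{\sf b})\}$ (reading off the constants ${\sf a},{\sf b}$ from~\eqref{eq:tv} with $\nu\equiv B/\eps$, $Y'\equiv 0$, noting that in the $\eps$-rescaled variables the coefficients of $\zeta^2\tilde v_\tau$ and $\zeta^2$ become ${\sf a}$ and ${\sf b}$). First I would identify the threshold: $\zeta^*(\tau)=1$ exactly when $\tilde v_\tau \ge {\sf a}\tilde v_\tau + {\sf b}$, i.e. $\tilde v_\tau \ge {\sf b}/(1-{\sf a})$ (assuming ${\sf a}<1$, which holds for small $\eps$). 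So the saturated regime is precisely $\{\tau : \tilde v_\tau \ge {\sf b}/(1-{\sf a})\}$.

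\textbf{Saturated regime ($\zeta\equiv 1$).} Here~\eqref{eq:tv-n} becomes the linear ODE $\tilde v'_\tau = ({\sf a}-2)\tilde v_\tau + {\sf b}$, whose solution with initial condition $\tilde v_0$ is exactly the claimed $(\tilde v_0 - \tfrac{{\sf b}}{2-{\sf a}})e^{-(2-{\sf a})\tau} + \tfrac{{\sf b}}{2-{\sf a}}$. I then need to check that this is consistent, i.e. that $\tilde v_\tau$ stays above the threshold ${\sf b}/(1-{\sf a})$ for all $\tau\le\tau_*$, and that $\tau_*$ as defined is precisely the crossing time. Setting the explicit solution equal to ${\sf b}/(1-{\sf a})$ and solving for $\tau$ gives, after rearrangement, $\tau = \tfrac{1}{2-{\sf a}}\log\big((1-{\sf a})(\tilde v_0\tfrac{2-{\sf a}}{{\sf b}}-1)\big)$; taking the positive part handles the case where $\tilde v_0$ is already below threshold (then $\tau_*=0$ and this regime is empty). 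One should note that since ${\sf a}<1<2$, the exponential decays and $\tilde v_\tau$ is monotonically decreasing toward ${\sf b}/(2-{\sf a}) < {\sf b}/(1-{\sf a})$, so the crossing is unique and the regime is exactly $[0,\tau_*]$.

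\textbf{Unsaturated regime and asymptotics.} For $\tau>\tau_*$ we have $\zeta^*(\tau) = \tilde v_\tau/({\sf a}\tilde v_\tau + {\sf b})$. Substituting into~\eqref{eq:tv-n} and simplifying, the $\tilde v_\tau$-linear and constant terms combine: one computes $({\sf a}\zeta^2-2\zeta)\tilde v_\tau + {\sf b}\zeta^2 = \zeta^2({\sf a}\tilde v_\tau+{\sf b}) - 2\zeta\tilde v_\tau = \zeta\tilde v_\tau - 2\zeta\tilde v_\tau = -\zeta\tilde v_\tau = -\tilde v_\tau^2/({\sf a}\tilde v_\tau+{\sf b})$, using $\zeta({\sf a}\tilde v_\tau+{\sf b}) = \tilde v_\tau$. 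So $\tilde v'_\tau = -\tilde v_\tau^2/({\sf a}\tilde v_\tau + {\sf b})$, a separable ODE. Separating variables gives $\big(\tfrac{{\sf a}}{\tilde v_\tau} + \tfrac{{\sf b}}{\tilde v_\tau^2}\big)\de\tilde v_\tau = -\de\tau$, which integrates to ${\sf a}\ln\tilde v_\tau - {\sf b}/\tilde v_\tau = -\tau + \text{const}$. Evaluating the constant at $\tau=\tau_*$ where $\tilde v_{\tau_*} = {\sf b}/(1-{\sf a})$ recovers the stated $C = {\sf a}\ln\big(\tfrac{1-{\sf a}}{{\sf b}}\big) + 1 - {\sf a} - \tau_*$. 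From the implicit relation ${\sf a}\ln\tilde v_\tau - {\sf b}/\tilde v_\tau + \tau = $ (expression yielding $C$ shifted), as $\tau\to\infty$ necessarily $\tilde v_\tau\to 0$, the ${\sf b}/\tilde v_\tau$ term dominates the logarithm, so $\tilde v_\tau \sim {\sf b}/(\tau + C)$; then $\zeta(\tau) = \tilde v_\tau/({\sf a}\tilde v_\tau+{\sf b}) \sim \tilde v_\tau/{\sf b} \sim 1/(\tau+C)$, and one sharpens this to $1/({\sf a}+C+\tau)$ by keeping the ${\sf a}\tilde v_\tau$ correction in the denominator: $\zeta = \tilde v_\tau/({\sf b}+{\sf a}\tilde v_\tau) = 1/(\tfrac{{\sf b}}{\tilde v_\tau}+{\sf a})$ and ${\sf b}/\tilde v_\tau = \tau + C + o(1)$ after absorbing the logarithmic term's lower-order contribution.

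\textbf{Main obstacle.} The routine ODE integrations are mechanical; the delicate part is the asymptotic bookkeeping in the unsaturated regime — specifically showing that the ${\sf a}\ln\tilde v_\tau$ term is genuinely lower order so that ${\sf b}/\tilde v_\tau = \tau + C + o(1)$ with the \emph{correct} additive constant, and then propagating this through to get $\zeta(\tau)\sim 1/({\sf a}+C+\tau)$ rather than merely $\Theta(1/\tau)$. I would handle this by writing $\tilde v_\tau = {\sf b}/(\tau+C) \cdot (1+\delta(\tau))$, plugging into the implicit relation, and showing $\delta(\tau) = O(\ln\tau / \tau) \to 0$, which suffices for both stated limits since they are ratios converging to $1$. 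A secondary point to state carefully is the standing assumption ${\sf a} = \eps(d+1)/B < 1$ (equivalently $2-{\sf a}>0$ and $1-{\sf a}>0$), under which all the logarithms and the threshold ${\sf b}/(1-{\sf a})$ are well-defined and positive; this holds whenever $\eps$ is small relative to $B/(d+1)$, consistent with the continuous-time regime.
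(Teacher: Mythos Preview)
Your plan is correct and follows essentially the same route as the paper: identify the threshold $\tilde v_\tau = {\sf b}/(1-{\sf a})$ separating $\zeta=1$ from $\zeta=\tilde v_\tau/({\sf a}\tilde v_\tau+{\sf b})$, solve the linear ODE in the saturated regime to extract $\tau_*$, then reduce the unsaturated regime to $\tilde v'_\tau=-\tilde v_\tau^2/({\sf a}\tilde v_\tau+{\sf b})$, integrate to the implicit relation ${\sf a}\ln(1/\tilde v_\tau)+{\sf b}/\tilde v_\tau=\tau+C$, and read off the asymptotics. One small correction: the statement ``${\sf b}/\tilde v_\tau=\tau+C+o(1)$'' is not quite right---the logarithmic term contributes $O(\ln\tau)$, not $o(1)$---but this does not affect the claimed ratio limits (since $O(\ln\tau)/({\sf a}+C+\tau)\to 0$), and your more careful ansatz $\tilde v_\tau=\tfrac{{\sf b}}{\tau+C}(1+\delta(\tau))$ with $\delta(\tau)=O(\ln\tau/\tau)$ handles it cleanly.
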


Recalling the relation $\eta_t = \eps \zeta(\eps t)$ and using Lemma~\ref{lem:no-shift-Lin}, we have
$\eta^*_t = \eps$ for $t\le t_*:=\lceil \tau_*/\eps\rceil$ and 
\[
    \lim_{t\to\infty} \frac{\eta^*_t}{\frac{\eps}{{\sf a}+ C+\eps t}}  = 1.
\]
In words, $\eta^*_t$ asymptotically has the rate $1/t$. In Figure~\ref{fig:LinReg_LR}, we plot an example of processes $\tilde{v}_\tau$ and the optimal learning rate $\eta^*_t$ for linear regression without any distribution shift.

\begin{figure*}
    \centering
    \begin{subfigure}[b]{0.434\textwidth}
    \includegraphics[width=\textwidth]{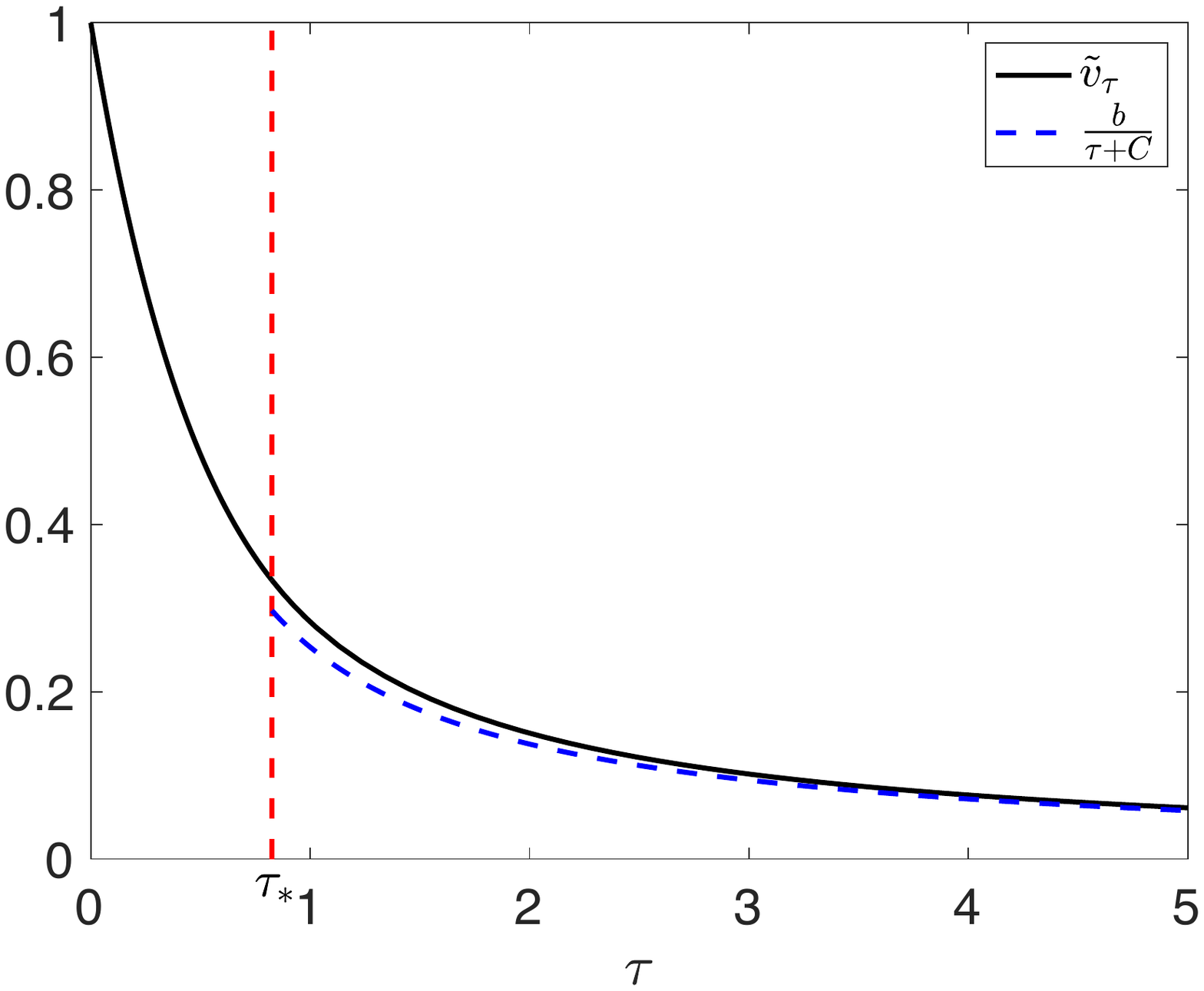}
    \end{subfigure}
    \hspace{0.18cm}
    \begin{subfigure}[b]{0.45\textwidth}
        \includegraphics[width=\textwidth]{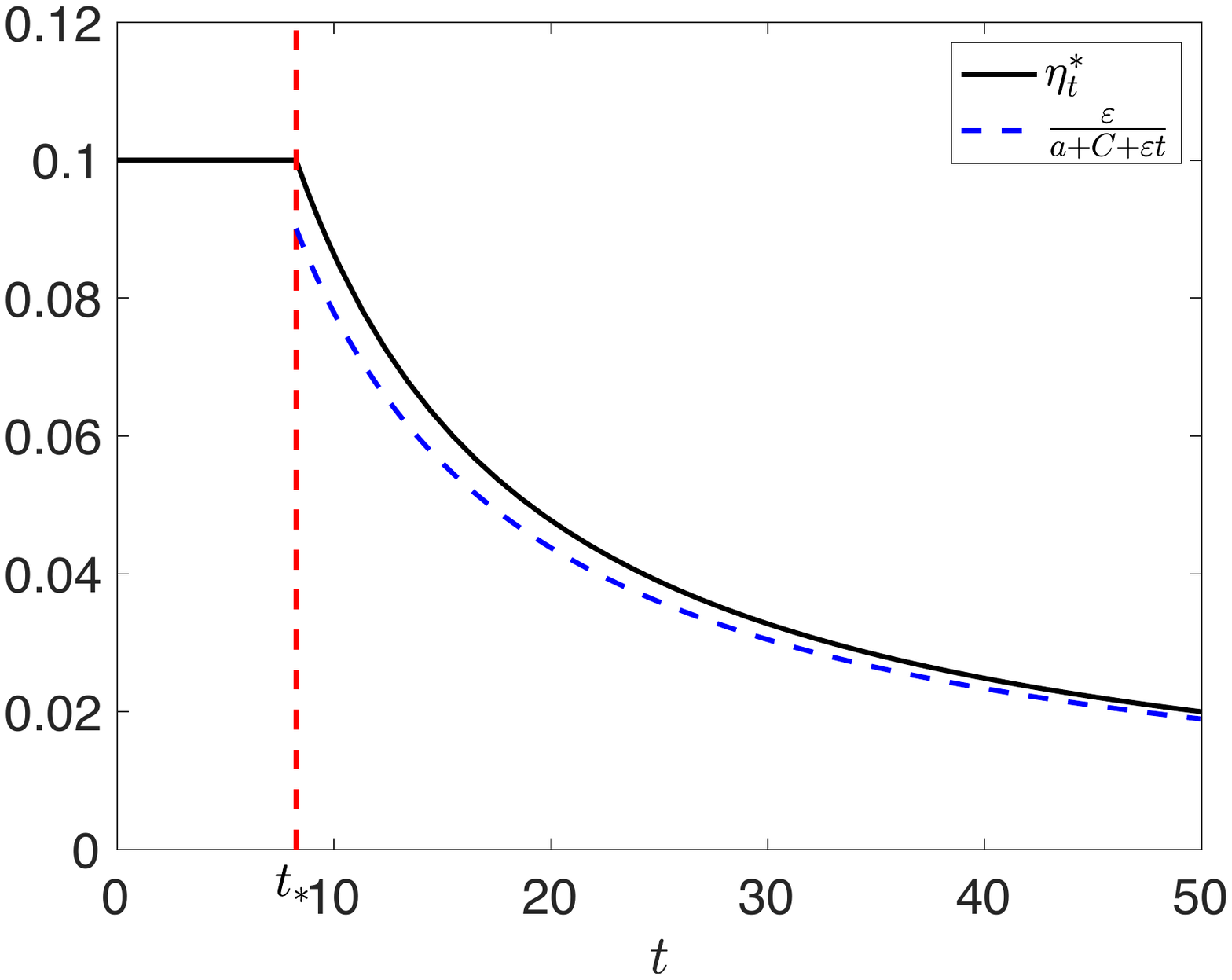}
    \end{subfigure}
    
    \caption{The process $\tilde{v}_\tau$ defined by ODE~\eqref{eq:tv-n} if there is no distribution shift (left).
    Here we have $\eps = 0.1$, ${\sf a}:=\eps(d+1)/B = 0.1$, ${\sf b}:= \eps \sigma^2 d/B = 0.3$, and initialization $\tilde{v}_0 = 1$.
    Behavior of the learning rate schedule $\eta^*_t$ given by Algorithm~\ref{alg:linear}, which asymptotically has the rate $1/t$ (right).}
    \label{fig:LinReg_LR}
\end{figure*}
\section{General convex loss}
\label{sec:cvx}

\subsection{Upper bound on the total regret}

Here we derive an upper bound on the total regret for general convex loss functions. We use this bound to study the behavior of optimal learning rates (by minimizing the regret upper bound) with respect to distribution shifts. 
We proceed by making the following assumption.
\begin{assumption}\label{ass:main}
Suppose that
\begin{itemize}
    \item[(i)] We have 
    $
    \E_{P_t}[\|\nabla\ell(\theta_t,z_{t,k}) - \nabla \ellbar_t(\theta_t)\|^2]\le \sigma^2,
    $
    for some parameter $\sigma \ge 0$. Since the data points in each batch are sampled i.i.d., this implies that
\[
\E_{P_t}\Big[\Big\|\frac{1}{B_t} \sum_{k=1}^{B_t} \nabla\ell(\theta_t,z_{t,k}) - \nabla \ellbar_t(\theta_t)\Big\|^2\Big]\le \frac{\sigma^2}{B_t}\,.
\]
\item[(ii)] We have $\nabla^2\ellbar_t(\theta)\preceq L I$ for $\theta\in \Theta$, or a weaker $L$-smooth condition
\[
    \|\nabla \ellbar_t(\theta_1) - \nabla \ellbar_t(\theta_2)\|\le L\|\theta_1 - \theta_2\|,
\]
for $\theta_1,\theta_2\in \Theta$.
\item[(iii)] We assume the oracle models $\theta_t^*$ are in $\Theta$ and that the diameter of $\Theta$ is bounded by $D_{\max}$. Alternatively, we assume that $\theta^*_t\in \Theta'$ for all $t$, and $D_{\max} = \max\{\|\theta - \theta'\|: \theta\in\Theta, \theta'\in\Theta'\}$.
\end{itemize}
\end{assumption}
Note that for all steps $t$, $\nabla\ell(\theta_t,z_{t,k})$ is an unbiased estimator of $\nabla\bar{\ell}_t(\theta_t)$ and Assumption $(i)$ bounds its variance. Assumption $(ii)$ is for technical analysis and is satisfied if the loss function has a continuous Hessian.
Assumption $(iii)$ assumes that the oracle models $\theta^*_t$ remain in a bounded set as $t$ grows. Since in practice the SGD is run for a finite number of iterations,
this is not a restricting assumption, e.g.,
$D_{\max}$ can depend on the horizon length $T$.

\begin{thm}\label{thm:convex-reg}
 Suppose the loss function $\ell(\theta,z)$ is convex in $\theta$, and assume that the oracle model~$\theta^*_t$ and the learning rate $\eta_t$ are adapted to the history $\bz_{t-1}$, defined by~\eqref{eq:history}.
 Let $D_t: = \|\theta_t^*-\theta_t\|$ and $a_t: = 2\eta_t - L\eta_t^2 > 0$ for $t\ge 1$. Under Assumption~\ref{ass:main}, and assuming $\eta_t\le \frac{1}{L}$, for all $t\ge 1$, the 
following bound holds on the total regret of SGD:
\begin{align}\label{eq:Reg-UB0}
\E[\Reg(T)] &\le
 \sum_{t=1}^T\E\Bigg[ \left(\frac{D_t^2}{a_t} - \frac{D_{t+1}^2}{a_{t}}\right) + \frac{\sigma^2\eta_t^2}{B_ta_t}
 + \frac{\|\theta^*_t - \theta^*_{t+1}\|^2}{a_t} + \frac{2}{a_t}\<\theta^*_t - \theta^*_{t+1}, \theta_{t+1}-\theta^*_t\>\Bigg]\,.
\end{align}
Here, the expectation is with respect to the randomness in data points observed in the $T$ steps.
\end{thm}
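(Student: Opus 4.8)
The plan is the standard projected‑SGD potential argument with the squared distance to the (now moving and adaptive) comparator as potential. Write $g_t:=\nabla\ell^B_t(\theta_t)$, $\bar g_t:=\nabla\ellbar_t(\theta_t)$, and $\xi_t:=g_t-\bar g_t$. By i.i.d.\ sampling within a batch and Assumption~\ref{ass:main}(i), conditionally on the history $\bz_{[t-1]}$ we have $\E[\xi_t\mid\bz_{[t-1]}]=0$ and $\E[\|\xi_t\|^2\mid\bz_{[t-1]}]\le\sigma^2/B_t$; moreover, by the adaptivity hypotheses, $\theta_t,\theta^*_t,\eta_t,a_t,\bar g_t$ are all $\bz_{[t-1]}$‑measurable. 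Since $\theta^*_t\in\Theta$ and $\Pi_\Theta$ is nonexpansive, squaring the update~\eqref{eq:BSGD} gives, pointwise,
\[
\|\theta_{t+1}-\theta^*_t\|^2\le\|\theta_t-\eta_t g_t-\theta^*_t\|^2=D_t^2-2\eta_t\langle g_t,\theta_t-\theta^*_t\rangle+\eta_t^2\|g_t\|^2 .
\]
Expanding $g_t=\bar g_t+\xi_t$, the deterministic part is handled by splitting $2\eta_t=a_t+L\eta_t^2$ and applying convexity $\langle\bar g_t,\theta_t-\theta^*_t\rangle\ge\reg_t$ to the $a_t$‑piece (legitimate since $a_t>0$, which is where $\eta_t\le 1/L$ enters) together with co‑coercivity of $\nabla\ellbar_t$ for the convex $L$‑smooth $\ellbar_t$, which since $\nabla\ellbar_t(\theta^*_t)=0$ reads $\langle\bar g_t,\theta_t-\theta^*_t\rangle\ge\frac1L\|\bar g_t\|^2$, to the $L\eta_t^2$‑piece. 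This gives $-2\eta_t\langle\bar g_t,\theta_t-\theta^*_t\rangle+\eta_t^2\|\bar g_t\|^2\le -a_t\reg_t$ (the $\|\bar g_t\|^2$ terms cancel exactly), hence
\[
\|\theta_{t+1}-\theta^*_t\|^2\le D_t^2-a_t\reg_t-2\eta_t\langle\xi_t,\theta_t-\theta^*_t\rangle+2\eta_t^2\langle\bar g_t,\xi_t\rangle+\eta_t^2\|\xi_t\|^2 .
\]

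Next I re‑center on $\theta^*_{t+1}$: writing $\theta_{t+1}-\theta^*_t=(\theta_{t+1}-\theta^*_{t+1})+(\theta^*_{t+1}-\theta^*_t)$ and expanding yields the identity
\[
-\|\theta_{t+1}-\theta^*_t\|^2=-D_{t+1}^2+\|\theta^*_t-\theta^*_{t+1}\|^2+2\langle\theta^*_t-\theta^*_{t+1},\theta_{t+1}-\theta^*_t\rangle .
\]
Substituting this, rearranging for $a_t\reg_t$, and dividing by $a_t>0$ gives a pointwise bound on $\reg_t$ equal to the four ``structural'' terms in~\eqref{eq:Reg-UB0} plus $\tfrac1{a_t}\bigl(-2\eta_t\langle\xi_t,\theta_t-\theta^*_t\rangle+2\eta_t^2\langle\bar g_t,\xi_t\rangle+\eta_t^2\|\xi_t\|^2\bigr)$. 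Taking total expectations and using the tower property with the $\bz_{[t-1]}$‑measurability of $a_t,\eta_t,\theta_t,\theta^*_t,\bar g_t$: the two inner‑product noise terms vanish since $\E[\xi_t\mid\bz_{[t-1]}]=0$, while $\E[\tfrac{\eta_t^2}{a_t}\|\xi_t\|^2]\le\E[\tfrac{\sigma^2\eta_t^2}{B_ta_t}]$ by Assumption~\ref{ass:main}(i). Summing the resulting bounds on $\E[\reg_t]$ over $t=1,\dots,T$ produces exactly~\eqref{eq:Reg-UB0}.

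The calculation is essentially routine SGD bookkeeping, so the ``hard part'' is less an obstacle than a matter of care in two spots: (a) choosing the split $2\eta_t=a_t+L\eta_t^2$ and invoking co‑coercivity (rather than the cruder $\|\bar g_t\|^2\le 2L\reg_t$) so that the $\|\bar g_t\|^2$ contributions cancel and the coefficient lands on precisely $a_t=2\eta_t-L\eta_t^2$; and (b) confirming that, after dividing by the \emph{random} quantity $a_t$, the noise terms remain martingale‑difference‑type and vanish in expectation — this is the step that genuinely requires $\eta_t$ and $\theta^*_t$ to be adapted to $\bz_{[t-1]}$, and is the reason that hypothesis appears in the statement.
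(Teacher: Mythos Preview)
Your proposal is correct and follows essentially the same route as the paper's proof: projection nonexpansiveness, the co-coercivity bound $\langle\nabla\ellbar_t(\theta_t),\theta_t-\theta^*_t\rangle\ge\frac{1}{L}\|\nabla\ellbar_t(\theta_t)\|^2$ to absorb the $\eta_t^2\|\bar g_t\|^2$ term, convexity to produce $\reg_t$, the re-centering identity for $\theta^*_{t+1}$, and the tower property with $\bz_{[t-1]}$-measurability to kill the martingale noise. The only cosmetic difference is ordering: the paper expands $D_{t+1}^2$ around $\theta^*_t$ first and then bounds the gradient pieces, whereas you bound $\|\theta_{t+1}-\theta^*_t\|^2$ first and re-center afterward; your explicit ``split $2\eta_t=a_t+L\eta_t^2$'' is exactly the same combination of the two inequalities the paper applies sequentially.
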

We next discuss how the regret bound~\eqref{eq:Reg-UB0} can be used to derive optimal learning rate schedules.
We would like to derive optimal rates $\eta^*_t$ by minimizing the bound~\eqref{eq:Reg-UB0} in a sequential manner. However, the bound depends on $D_t$ and $\theta_{t+1}$,
which are not observable. Indeed, $\theta_{t+1}$ is defined at step $t+1$ where $\eta_t$ should have already been determined. To address this issue, we use the fact that the projected SGD updates remain in the set $\Theta$ and by invoking Assumption $(iii)$, we have $D_t\le D_{\max}$ and $\|\theta_{t+1}-\theta^*_t\|\le D_{\max}$. Also recall our notation $\gamma_t = \|\theta^*_t - \theta^*_{t+1}\|$ for the distribution shift. Therefore, by rearranging the terms in~\eqref{eq:Reg-UB0} and telescope summing over $1/a_t$, we have
\begin{align}\label{eq:Reg-UB}
    \E[\Reg(T)] &\le
    D_{\max}^2 \E \left[ \frac{1}{a_1} +
    \sum_{t=2}^T \left(\frac{1}{a_t} - \frac{1}{a_{t-1}}\right)_+ \right]
    +\sum_{t=1}^T\E\left[\frac{1}{a_t} \left(\frac{\sigma^2\eta_t^2}{B_t} + \gamma_t^2 + 2D_{\max}\gamma_t\right)\right],
\end{align}
where $x_+ = \max(x,0)$ indicates the positive part of $x$.

We next discuss the choice of learning rates that minimizes the upper bound \eqref{eq:Reg-UB} in a sequential manner. Conditioned on $\bz_{[t-1]}$, the optimal $\eta_t$ is given by
\begin{align}\label{eq:etat*0}
    \eta_t^*:= \underset{0\le \eta\le \frac{1}{L}}{\text{argmin}} \bigg\{&D_{\max}^2
    \left(\frac{1}{2\eta-L\eta^2} - \frac{1}{2\eta_{t-1}-L\eta_{t-1}^2}\right)_+
    +\frac{\sigma^2}{B_t}\cdot\frac{\eta^2}{2\eta-L\eta^2}+ \frac{\gamma_t^2+2D_{\max}\gamma_t}{2\eta-L\eta^2}\bigg\}\,.
\end{align}
Our next proposition characterizes $\eta_t^*$.
\begin{propo}[Learning rate schedule]
\label{propo:optimal-eta}
Define the thresholds $\tau_{1,t}$ and $\tau_{2,t}$ as follows:
\begin{align}
    \tau_{1,t}&:= \tfrac{B_t}{2\sigma^2}\left(\sqrt{b_{1,t}^2L^2+\tfrac{4\sigma^2}{B_t} b_{1,t}} - b_{1,t}L\right),  \label{eq:tau1}\\
    \tau_{2,t}&:= \tfrac{B_t}{2\sigma^2}\left(\sqrt{b_{2,t}^2L^2+\tfrac{4\sigma^2}{B_t} b_{2,t}} - b_{2,t}L\right), \label{eq:tau2}\\
    b_{1,t} &:= \gamma_t^2+2D_{\max}\gamma_t,\quad  b_{2,t} := (\gamma_t+D_{\max})^2\,.\nonumber 
\end{align}
The optimal learning rate $\eta_t^*$ defined by~\eqref{eq:etat*0} is given by:
\begin{align}
    \eta^*_t = \begin{cases}
    \tau_{1,t} & \text{if } \eta_{t-1}^* \le \tau_{1,t},\\
    \eta_{t-1}^* & \text{if } \tau_{1,t} \le \eta_{t-1}^* \le \tau_{2,t}\\
    \tau_{2,t} & \text{if } \eta_{t-1}^* \ge \tau_{2,t}\,.
    \end{cases}
\end{align}
\end{propo}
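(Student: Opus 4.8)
The plan is to solve the one-dimensional optimization in \eqref{eq:etat*0} directly by calculus, treating the objective as a function of $\eta$ on $[0,1/L]$. Write $g(\eta) := 2\eta - L\eta^2$, which is increasing on $[0,1/L]$ with $g(0)=0$ and $g(1/L)=1/L$; so $1/g(\eta)$ is a decreasing, convex function of $\eta$ on this interval. The objective is a sum of three pieces: the \emph{variance} term $\frac{\sigma^2}{B_t}\cdot\frac{\eta^2}{g(\eta)}$, the \emph{shift} term $\frac{b_{1,t}}{g(\eta)}$ with $b_{1,t}=\gamma_t^2+2D_{\max}\gamma_t$, and the \emph{telescoping} term $D_{\max}^2\bigl(\frac{1}{g(\eta)}-\frac{1}{g(\eta_{t-1}^*)}\bigr)_+$. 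First I would note that $\frac{\eta^2}{g(\eta)}=\frac{\eta}{2-L\eta}$ is \emph{increasing} in $\eta$, while $\frac{1}{g(\eta)}$ is \emph{decreasing}; so the variance term pushes $\eta$ down and the shift term pushes it up, and the minimizer of the \emph{smooth} part (ignoring the positive-part clipping) is an interior stationary point.

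Next I would compute that stationary point. Differentiating $\frac{\sigma^2}{B_t}\frac{\eta}{2-L\eta} + \frac{b}{2\eta-L\eta^2}$ and setting the derivative to zero, the $\frac{1}{(2-L\eta)^2}$ factors cancel and one is left with a quadratic in $\eta$: $\frac{\sigma^2}{B_t}\eta^2 + b L\eta - b = 0$, whose positive root is exactly $\frac{B_t}{2\sigma^2}\bigl(\sqrt{b^2L^2 + \frac{4\sigma^2}{B_t}b}-bL\bigr)$. Plugging $b=b_{1,t}$ gives $\tau_{1,t}$; plugging $b=b_{2,t}=(\gamma_t+D_{\max})^2 = b_{1,t}+D_{\max}^2$ gives $\tau_{2,t}$. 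The point of the second value is that when the telescoping term is \emph{active} — i.e.\ when the candidate $\eta$ satisfies $g(\eta) < g(\eta_{t-1}^*)$, equivalently $\eta < \eta_{t-1}^*$ — the objective picks up an extra $\frac{D_{\max}^2}{g(\eta)}$, which effectively replaces $b_{1,t}$ by $b_{1,t}+D_{\max}^2 = b_{2,t}$ in the same quadratic, so the active-region stationary point is $\tau_{2,t}$. Since $b_{2,t}>b_{1,t}$ and the root is monotincreasing in $b$, we have $\tau_{1,t}\le\tau_{2,t}$ always, so the two regimes are consistent.

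The remaining work is the case analysis that glues the two regimes together, which is the part I expect to require the most care. The objective is continuous on $[0,1/L]$, piecewise-smooth with the only kink at $\eta=\eta_{t-1}^*$; on $[0,\eta_{t-1}^*]$ the positive part is active and the unconstrained minimizer of that branch is $\tau_{2,t}$, while on $[\eta_{t-1}^*,1/L]$ the positive part vanishes and the unconstrained minimizer of that branch is $\tau_{1,t}$. One then checks the three orderings: if $\eta_{t-1}^*\le\tau_{1,t}$, the left branch is decreasing throughout $[0,\eta_{t-1}^*]$ (since $\eta_{t-1}^*\le\tau_{1,t}\le\tau_{2,t}$) and the right branch on $[\eta_{t-1}^*,1/L]$ is minimized at $\tau_{1,t}$, giving $\eta_t^*=\tau_{1,t}$; if $\tau_{1,t}\le\eta_{t-1}^*\le\tau_{2,t}$, the left branch is still decreasing up to $\eta_{t-1}^*$ and the right branch is increasing from $\eta_{t-1}^*$ (since $\eta_{t-1}^*\ge\tau_{1,t}$), so the kink point $\eta_{t-1}^*$ is the minimizer; if $\eta_{t-1}^*\ge\tau_{2,t}$, the left branch is minimized at its interior point $\tau_{2,t}$ and the right branch is increasing, so $\eta_t^*=\tau_{2,t}$. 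I would also verify the boundary/feasibility conditions — that $\tau_{1,t},\tau_{2,t}\in(0,1/L]$ under Assumption~\ref{ass:main} (in particular that $\eta_t\le 1/L$ is not binding, or handle the case where it is), and that $a_t=g(\eta_t)>0$ — and note that the case $\gamma_t=0$ (so $b_{1,t}=0$, $\tau_{1,t}=0$) degenerates gracefully, recovering the "learning rate only decreases" behavior. The main obstacle is purely bookkeeping: tracking which branch is active, confirming monotonicity of each branch on the relevant subinterval from the signs of the derivatives computed above, and making sure the clipping to $[0,1/L]$ does not interfere.
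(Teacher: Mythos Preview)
Your proposal is correct and follows essentially the same approach as the paper: both proofs split the objective at the kink $\eta=\eta_{t-1}^*$, solve the resulting quadratic on each branch to obtain $\tau_{1,t}$ and $\tau_{2,t}$, and then do the three-case comparison with $\eta_{t-1}^*$, followed by the check that $\tau_{2,t}\le 1/L$. The only cosmetic difference is that the paper frames the argument via convexity of the objective and a subgradient condition, whereas you argue directly via monotonicity of each smooth branch; the computations and case analysis are identical.
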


\begin{remark}
The proposed learning rate in \eqref{eq:etat*0} depends on $\sigma$, $L$ and shifts $\gamma_t$.
Having access to the loss function $\ell(\theta,z)$, the learner can use sample estimates for $\sigma$, $L$. Also note that we can use any upper bound on $\gamma_t$ in the bound~\eqref{eq:Reg-UB} and obtain a similar schedule.
Of course, if the upper bound is crude, it results in a conservative learning rate schedule. In settings where an upper bound on the shifts $\gamma_t$ is not available, we estimate $\gamma_t$ using an exponential moving average of the drifts in the consecutive estimated models $\theta_t$, namely $\hat{\gamma}_t = \beta \hat{\gamma}_{t-1} + (1-\beta)\|\theta_t- \theta_{t-1}\|$, with a factor $\beta\in(0,1)$.
\end{remark}

\begin{remark}\label{rmk:opt-sch}
The values $b_{1,t}$ and $b_{2,t}$ in~\eqref{eq:tau1} and \eqref{eq:tau2} are increasing in the distribution shift $\gamma_t$ and it is easy to see that the thresholds $\tau_{1,t},\tau_{2,t}$ are also increasing in $\gamma_t$. As a result for every value of $\eta_{t-1}$, higher distribution shift $\gamma_t$ increases the optimal learning rate $\eta^*_t$.
\end{remark}

Note that Theorem~\ref{thm:convex-reg} and Remark~\ref{rmk:opt-sch} are optimized with respect to the upper envelope of the optimal regret. We also prove a corresponding lower envelope result for SGD. 

\subsection{Lower bound on the total regret}

 The learning rate schedule in \ref{propo:optimal-eta} is optimized with respect to the upper bound derived for the cumulative dynamic regret. We next prove a corresponding lower bound result for SGD, which matches the upper bound and only differs by constants.
 Thus, our analysis of the optimal learning rate schedules for SGD is tight up to constants.

Before we begin, we make an additional assumption.

\begin{assumption}\label{as:subexp0}
We assume that the loss function $\ell(\theta,z)$ is $\mu$-strongly convex in $\theta$, for some $\mu>0$, i.e., $\ell(\theta) - \frac{\mu}{2}\|\theta\|^2$ is convex in $\theta$. 
\end{assumption}

\begin{thm}\label{thm:le-convex-reg}
 Suppose the oracle model $\theta^*_t$ and the learning rate $\eta_t$ are adapted to the history $\bz_{t-1}$, defined by~\eqref{eq:history}. Let $D_t := \|\theta^*_t - \theta_t\|$, $\gamma_t := \|\theta^*_t - \theta^*_{t+1}\|$, and $a'_t: = 2(\eta_t+\frac{L}{\mu}\eta_t - \eta_t^2L)$. Under Assumptions~\ref{ass:main} and~\ref{as:subexp0}, and assuming $\eta_t\le \frac{1}{\mu}$, for all $t\ge 1$, we have the following bound on the total regret of the batch SGD:
\begin{align}\label{eq:Reg-LB}
\E[\Reg(T)]&\ge
 \sum_{t=1}^T\E\bigg[ \left(\frac{D_t^2}{a'_t} - \frac{D_{t+1}^2}{a'_{t}}\right) + \frac{\sigma^2\eta_t^2}{B_ta'_t} +\frac{\|\theta^*_t-\theta^*_{t+1}\|^2}{a'_t}  +\frac{2}{a_t'}\<\theta^*_t-\theta^*_{t+1},\theta_{t+1}-\theta^*_t\>\bigg],
\end{align}
where the expectation is with respect to the randomness in data points observed in the $T$ steps.
\end{thm}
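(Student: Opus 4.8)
The plan is to mirror the proof of the upper bound (Theorem~\ref{thm:convex-reg}), but to exploit strong convexity of $\ell(\theta,z)$ to reverse the direction of the key inequality. First I would start from the projected SGD update and the non-expansiveness of $\Pi_\Theta$ to write
\[
\|\theta_{t+1}-\theta^*_t\|^2 \le \|\theta_t - \eta_t \nabla\ell^B_t(\theta_t) - \theta^*_t\|^2
= D_t^2 - 2\eta_t\<\nabla\ell^B_t(\theta_t),\theta_t-\theta^*_t\> + \eta_t^2\|\nabla\ell^B_t(\theta_t)\|^2\,.
\]
Conditioning on $\bz_{[t-1]}$ and taking expectations, the cross term becomes $-2\eta_t\<\nabla\ellbar_t(\theta_t),\theta_t-\theta^*_t\>$ since the batch gradient is unbiased, and the quadratic term splits into $\|\nabla\ellbar_t(\theta_t)\|^2$ plus a variance term bounded by $\sigma^2/B_t$ via Assumption~\ref{ass:main}(i). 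For the lower bound I need a \emph{lower} bound on $\<\nabla\ellbar_t(\theta_t),\theta_t-\theta^*_t\>$ and an \emph{upper} bound on $\|\nabla\ellbar_t(\theta_t)\|^2$ in terms of $\reg_t$; strong convexity gives $\<\nabla\ellbar_t(\theta_t),\theta_t-\theta^*_t\>\ge \ellbar_t(\theta_t)-\ellbar_t(\theta^*_t) + \tfrac{\mu}{2}D_t^2 \ge \reg_t$, and $L$-smoothness together with $\nabla\ellbar_t(\theta^*_t)=0$ gives $\|\nabla\ellbar_t(\theta_t)\|^2 \le 2L\,\reg_t$ (the standard co-coercivity / descent-lemma bound). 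I would also want the opposite-sign control: by convexity $\reg_t \le \<\nabla\ellbar_t(\theta_t),\theta_t-\theta^*_t\>$ is already what appears, so the two bounds combine to sandwich things the way I need — the point is that here I am building a lower bound on $\E[\reg_t]$, so I use $\<\nabla\ellbar_t(\theta_t),\theta_t-\theta^*_t\>\le \reg_t + \tfrac{L}{\mu}\reg_t$ (obtained from smoothness: $\ellbar_t(\theta^*_t)\ge \ellbar_t(\theta_t) - \<\nabla\ellbar_t(\theta_t),\theta_t-\theta^*_t\> + \tfrac{1}{2L}\|\nabla\ellbar_t(\theta_t)\|^2$ is the wrong direction, so instead use strong convexity of $\ellbar_t$ to write $D_t^2 \le \tfrac{2}{\mu}\reg_t$ and smoothness to write $\reg_t \le \tfrac{L}{2}D_t^2$, chaining these). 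Putting the pieces together, the conditional expectation of $\|\theta_{t+1}-\theta^*_t\|^2$ is bounded \emph{below} by $D_t^2 - (2\eta_t + \tfrac{2L}{\mu}\eta_t - 2L\eta_t^2)\reg_t + \tfrac{\eta_t^2\sigma^2}{B_t}$ — wait, I need to be careful with signs: since I want a lower bound on $\sum\reg_t$, I should track the inequality so that $a'_t \reg_t$ lands on the correct side, which is exactly why $a'_t = 2(\eta_t + \tfrac{L}{\mu}\eta_t - \eta_t^2 L)$ appears rather than $a_t = 2\eta_t - L\eta_t^2$.

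Next I would handle the shift: write $D_{t+1}^2 = \|\theta_{t+1}-\theta^*_{t+1}\|^2 = \|\theta_{t+1}-\theta^*_t\|^2 + \|\theta^*_t-\theta^*_{t+1}\|^2 + 2\<\theta^*_t-\theta^*_{t+1},\theta_{t+1}-\theta^*_t\>$, which is an exact identity, so substituting the bound on $\E[\|\theta_{t+1}-\theta^*_t\|^2 \mid \bz_{[t-1]}]$ yields
\[
\E[D_{t+1}^2] \ge \E\Big[D_t^2 - a'_t\,\reg_t + \tfrac{\eta_t^2\sigma^2}{B_t} + \|\theta^*_t-\theta^*_{t+1}\|^2 + 2\<\theta^*_t-\theta^*_{t+1},\theta_{t+1}-\theta^*_t\>\Big]\,,
\]
using that $\eta_t$, $\theta^*_t$, and $\theta^*_{t+1}$ are all measurable with respect to $\bz_{[t-1]}$ by hypothesis (so $a'_t$ and $\gamma_t$ pull out of the conditional expectation), and using the tower property. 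Rearranging for $\E[\reg_t]$, dividing by $a'_t$ (positive under the hypothesis $\eta_t \le 1/\mu$, provided one checks $2 + 2L/\mu - 2\eta_t L > 0$, which holds since $L/\mu \ge 1$ and $\eta_t L \le L/\mu$), and summing over $t=1,\dots,T$ gives precisely the claimed bound~\eqref{eq:Reg-LB}, with the telescoping-style term $D_t^2/a'_t - D_{t+1}^2/a'_t$ appearing before any telescoping is performed (the statement leaves it un-telescoped, matching the form of~\eqref{eq:Reg-UB0}).

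The main obstacle I anticipate is getting the chain of convexity/smoothness inequalities to produce exactly the coefficient $a'_t = 2(\eta_t + \tfrac{L}{\mu}\eta_t - \eta_t^2 L)$ with the right sign, since for a lower bound one must be sure every inequality used points the correct way — in particular, the term coming from $\|\nabla\ellbar_t(\theta_t)\|^2$ must be bounded \emph{above} (to keep the lower bound valid) yet in a way that still involves $\reg_t$ linearly, and the term coming from $\<\nabla\ellbar_t(\theta_t),\theta_t-\theta^*_t\>$ must be bounded \emph{above} by $(1+\tfrac{L}{\mu})\reg_t$, which is where strong convexity enters nontrivially (it converts the gap in $D_t^2$ into a gap in $\reg_t$). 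A secondary, more routine, point is verifying positivity of $a'_t$ so the division is legitimate; this follows from $\eta_t \le 1/\mu \le 1/L$-type relations but should be stated explicitly. Everything else — unbiasedness, the variance bound, non-expansiveness of the projection, the exact cross-term identity, and the tower property over the filtration generated by $\bz_{[t-1]}$ — is identical in structure to the upper-bound proof and should go through verbatim with inequalities reversed.
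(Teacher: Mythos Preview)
Your overall plan---mirror the upper-bound argument, use strong convexity to reverse the key inequality, plug in the exact identity $D_{t+1}^2=\|\theta_{t+1}-\theta^*_t\|^2+\|v_t\|^2+2\<v_t,\theta_{t+1}-\theta^*_t\>$, take conditional expectations, divide by $a'_t$, and sum---is exactly what the paper does. But two steps in your execution do not work as written.

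First, starting from the projection contraction $\|\theta_{t+1}-\theta^*_t\|^2\le\|\theta_t-\eta_t\nabla\ell^B_t(\theta_t)-\theta^*_t\|^2$ points the wrong way: you need a \emph{lower} bound on $D_{t+1}^2$ to isolate $\reg_t$ on the large side. The paper's proof in fact writes this step as an equality (i.e.\ works with the unprojected update $\theta_{t+1}=\theta_t-\eta_t\nabla\ell^B_t(\theta_t)$), which is what you should do.

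Second, and more substantively, your route to the coefficient $a'_t$ does not land on $2(\eta_t+\tfrac{L}{\mu}\eta_t-\eta_t^2 L)$. You propose to bound the inner product directly by $(1+L/\mu)\reg_t$ and then ``bound $\|\nabla\ellbar_t(\theta_t)\|^2$ above.'' But in the expansion the gradient-norm term carries the \emph{positive} coefficient $+\eta_t^2$, so an upper bound on it would destroy the lower bound; and if you instead drop it or lower-bound it via $\|\nabla\ellbar_t\|^2\ge 2\mu\,\reg_t$, you get $a'_t=2\eta_t(1+L/\mu)$ or $2(\eta_t+\tfrac{L}{\mu}\eta_t-\mu\eta_t^2)$, neither of which matches the statement. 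The paper's device is to apply the strong-convexity inequality in the form
\[
\<\nabla\ellbar_t(\theta_t),\theta_t-\theta^*_t\>\ \le\ \reg_t+\tfrac{1}{2\mu}\|\nabla\ellbar_t(\theta_t)\|^2
\]
\emph{before} touching the gradient norm. Substituting this into $-2\eta_t\<\cdot,\cdot\>+\eta_t^2\|\nabla\ellbar_t(\theta_t)\|^2$ leaves a combined coefficient $(\eta_t^2-\eta_t/\mu)$ on $\|\nabla\ellbar_t(\theta_t)\|^2$, which is \emph{nonpositive} under the hypothesis $\eta_t\le 1/\mu$. Only then is the smoothness bound $\|\nabla\ellbar_t(\theta_t)\|^2\le 2L\,\reg_t$ applied---now in the correct direction---producing exactly the $-\eta_t^2 L$ contribution to $a'_t$. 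This sequencing (strong convexity first to flip the sign of the $\|\nabla\|^2$ coefficient, smoothness second) is the missing idea in your proposal.
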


\noindent
Note that Equations~\eqref{eq:Reg-LB} and~\eqref{eq:Reg-UB0} have the same form and thus give upper and lower ``envelopes'' for the cumulative expected dynamic regret under these assumptions.

One possible interpretation of terms in bounds~\eqref{eq:Reg-UB0} and~\eqref{eq:Reg-LB} is a predator-prey setting, as follows. The predator is $\theta_t$ and the prey is $\theta^*_t$.
    The first term in \eqref{eq:Reg-UB0} can be rearranged as 
    \[
    \sum_{t=1}^T \left(\frac{D_t^2}{a_t} - \frac{D_{t+1}^2}{a_t}\right)
    =
    \frac{D_1^2}{a_1}- \frac{D_{T+1}^2}{a_T} +
    \sum_{t=2}^T D_t^2\left(\frac{1}{a_t} -\frac{1}{a_{t-1}}\right)\,.
    \]
    Recall that $D_t = \|\theta_t-\theta^*_t\|$ is the distance between the prey and the predator. If the predator moves closer to the prey, it reduces its regret. 
    The other terms in \eqref{eq:Reg-UB0} involve  $\theta^*_t-\theta^*_{t+1}$ and reflect the movement of $\theta^*_t$ (the prey).
    If the prey moves further, it is harder to follow and the regret increases.
\section{Non-convex loss}
\label{sec:nonconvex}
When the loss function $\ell$ is non-convex, SGD like any other first order method can get trapped in a local minimum or a saddle point of the landscape. When there is no distribution shift,
there is a line of work showing that SGD can efficiently escape saddle points if the step size is large enough~\citep{lee2016gradient,jin2017escape}.
This superiority of SGD in non-convex settings is often attributed to  the stochasticity of the gradients,
which significantly accelerates the escape from saddle points.

In non-convex settings one cannot control convergence to a global minimum without making further structural assumption on the optimization landscape and the initialization of SGD. In view of that, we propose to consider the following notion of regret based on the cumulative gradient norm of the SGD trajectory:
\begin{align}
    \Reg(T) := \sum_{t=1}^T \|\nabla\ellbar_t(\theta_t)\|^2\,.
\end{align}
In words, the regret is defined with respect to the norm of gradient at the sequence of estimated models. This notion does not differentiate between local or global minima.

Further, due to the complex landscapes of non-convex loss,
we work with a more holistic measure of distribution shift, namely
\begin{align}\label{eq:gamma-Nconv}
    \gamma_t: = \sup_{\theta \in \mathbb{R}^p} |\ellbar_t(\theta) - \ellbar_{t+1}(\theta)|\,.
\end{align}
Recall that $\ellbar_t = \E_{P_t}[\ell(\theta,z_{t,k})]$ and obviously if there is no shift at step $t$, i.e., $P_t = P_{t+1}$ then $\gamma_t=0$. In contrast, in the convex setting, we measure the distribution shift only in terms of the difference between the global minimizers of $\ellbar_t$ and $\ellbar_{t+1}$, cf. Definition~\ref{def:shift}.

We can now state our regret bound in the non-convex setting.
\begin{thm}\label{thm:non-Nconvex-reg}
Suppose the learning rates $\eta_t$ are adapted to the history $\bz_{t-1}$, defined by~\eqref{eq:history}. Let $\gamma_t$ be defined as~\eqref{eq:gamma-Nconv}, and define $a_t: = 2\eta_t - L\eta_t^2$, for $t\ge 1$. Under Assumption~\ref{ass:main} $(i),(ii)$, and assuming $\eta_t\le \frac{1}{L}$, for all $t\ge 1$, we have the 
following bound on the total regret of batch SGD:
\begin{align}
\label{eq:Reg-UB-Nconv}
\E[\Reg(T)]
    &\le 
    \E\left[
      \frac{2\ellbar_1(\theta_1)}{a_1^2}
    + \sum_{t=2}^T 2\ellbar_t(\theta_t) \left(\frac{1}{a_t} - \frac{1}{a_{t-1}^2}\right)\right]
    +\sum_{t=1}^T\E\left[\frac{1}{a_t} \cdot \left(\frac{L\sigma^2\eta_t^2}{B_t} + 2\gamma_t\right)\right]\,.
\end{align}
\end{thm}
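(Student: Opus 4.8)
The plan is to mimic the proof of \Cref{thm:convex-reg}, but to track the expected loss values $\ellbar_t(\theta_t)$ in place of the squared distances $\|\theta_t-\theta^*_t\|^2$, and to use the function-level distribution shift \eqref{eq:gamma-Nconv} in place of Definition~\ref{def:shift}. The key inputs are the descent lemma for $L$-smooth functions (Assumption~\ref{ass:main}(ii)), the unbiasedness and variance bound on the mini-batch gradient (Assumption~\ref{ass:main}(i)), and nonnegativity of $\ellbar$.

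\emph{Step 1 (one-step descent in expectation).} Fix $t$, write $g_t:=\nabla\ell^B_t(\theta_t)$, and let $\mathcal{F}_{t-1}$ be the $\sigma$-field generated by $\bz_{[t-1]}$, with respect to which $\theta_t$, $\eta_t$, $B_t$ and $a_t$ are measurable. Applying $L$-smoothness of $\ellbar_t$ to the pair $\theta_t,\theta_{t+1}$, using $\|\theta_{t+1}-\theta_t\|\le\eta_t\|g_t\|$ (nonexpansiveness of $\Pi_\Theta$, since $\theta_t\in\Theta$) and the projection/variational inequality $\langle\theta_t-\eta_t g_t-\theta_{t+1},\theta_t-\theta_{t+1}\rangle\le0$ to control the first-order term, we get $\ellbar_t(\theta_{t+1})\le\ellbar_t(\theta_t)-\eta_t\langle\nabla\ellbar_t(\theta_t),g_t\rangle+\tfrac{L\eta_t^2}{2}\|g_t\|^2$. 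Taking $\E[\,\cdot\mid\mathcal{F}_{t-1}]$, substituting $\E[g_t\mid\mathcal{F}_{t-1}]=\nabla\ellbar_t(\theta_t)$ and $\E[\|g_t-\nabla\ellbar_t(\theta_t)\|^2\mid\mathcal{F}_{t-1}]\le\sigma^2/B_t$, and noting that $\eta_t\le1/L$ gives $a_t=2\eta_t-L\eta_t^2\ge\eta_t>0$ with $\eta_t-\tfrac{L\eta_t^2}{2}=\tfrac{a_t}{2}$, we obtain after rearranging
\[
\|\nabla\ellbar_t(\theta_t)\|^2\le\frac{2}{a_t}\Big(\ellbar_t(\theta_t)-\E[\ellbar_t(\theta_{t+1})\mid\mathcal{F}_{t-1}]\Big)+\frac{L\sigma^2\eta_t^2}{B_t a_t}.
\]

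\emph{Step 2 (de-conditioning, shift correction, telescoping).} Take total expectations and sum over $t=1,\dots,T$; by the tower property and $\mathcal{F}_{t-1}$-measurability of $1/a_t$ one may replace $\E[\tfrac{1}{a_t}\E[\ellbar_t(\theta_{t+1})\mid\mathcal{F}_{t-1}]]$ by $\E[\tfrac{1}{a_t}\ellbar_t(\theta_{t+1})]$. The indices on $\ellbar$ do not line up for a direct telescoping, so we invoke \eqref{eq:gamma-Nconv} in the form $\ellbar_t(\theta_{t+1})\ge\ellbar_{t+1}(\theta_{t+1})-\gamma_t$, together with $1/a_t>0$, to trade $\ellbar_t(\theta_{t+1})$ for $\ellbar_{t+1}(\theta_{t+1})$ at the cost of the $2\gamma_t/a_t$ terms. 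Reindexing $\sum_{t=1}^{T}\tfrac{2}{a_t}\ellbar_{t+1}(\theta_{t+1})=\sum_{t=2}^{T+1}\tfrac{2}{a_{t-1}}\ellbar_t(\theta_t)$, the two function-value sums collapse to $\tfrac{2}{a_1}\ellbar_1(\theta_1)+\sum_{t=2}^{T}2\ellbar_t(\theta_t)\big(\tfrac{1}{a_t}-\tfrac{1}{a_{t-1}}\big)-\tfrac{2}{a_T}\ellbar_{T+1}(\theta_{T+1})$, and the final term is nonpositive (as $\ellbar\ge0$) and is discarded. Bundling with the noise term from Step~1 produces the quantity $\tfrac{1}{a_t}\big(\tfrac{L\sigma^2\eta_t^2}{B_t}+2\gamma_t\big)$, giving the bound \eqref{eq:Reg-UB-Nconv}.

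\emph{Main obstacle.} The only genuinely subtle point is Step~1 when the projection $\Pi_\Theta$ is active: one must not pretend $\theta_{t+1}-\theta_t=-\eta_t g_t$, and the cross term $\langle\nabla\ellbar_t(\theta_t),\theta_{t+1}-\theta_t\rangle$ has to be handled through the projection inequality so that the coefficient $a_t=2\eta_t-L\eta_t^2$ comes out exactly right. Everything after that is bookkeeping: picking the filtration $\mathcal{F}_{t-1}=\sigma(\bz_{[t-1]})$ so the tower property applies to the $\mathcal{F}_{t-1}$-measurable weights $1/a_t$, the $\gamma_t$-correction to re-align the loss indices, and a single index shift to telescope.
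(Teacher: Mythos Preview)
Your proposal is correct and follows essentially the same route as the paper: the $L$-smoothness descent inequality, conditional expectation to exploit unbiasedness and the variance bound $\sigma^2/B_t$, the $\gamma_t$-correction to realign the loss indices from $\ellbar_t(\theta_{t+1})$ to $\ellbar_{t+1}(\theta_{t+1})$, then a single index shift to telescope and drop the nonnegative tail term $\tfrac{2}{a_T}\ellbar_{T+1}(\theta_{T+1})$. One minor caveat: the paper's proof simply uses the unprojected update $\theta_{t+1}-\theta_t=-\eta_t g_t$ (consistent with the theorem invoking only Assumption~\ref{ass:main}(i),(ii), not (iii)), so your projection treatment is unnecessary here---and as stated, the variational inequality $\langle\theta_t-\eta_t g_t-\theta_{t+1},\theta_t-\theta_{t+1}\rangle\le 0$ controls $\langle g_t,\theta_{t+1}-\theta_t\rangle$ rather than $\langle\nabla\ellbar_t(\theta_t),\theta_{t+1}-\theta_t\rangle$, so that step would need additional care if projection were genuinely active; with unprojected SGD it is an equality and your argument goes through verbatim.
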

The theorem above has a very similar format to the bound derived in Theorem~\ref{thm:convex-reg}.
By minimizing the regret of the upper bound
\eqref{eq:Reg-UB-Nconv} in sequential manner
conditioned on $\bz_{[t-1]}$,
the optimal learning rate is given by
\begin{align}\label{eq:etat*-Nconv}
    \eta_t^*:= \argmin_{0\le \eta\le \frac{1}{L}} 
    \frac{2\ellbar_t(\theta_t)+2\gamma_t}{2\eta-L\eta^2} 
    +\frac{L\sigma^2}{B_t}\cdot\frac{\eta^2}{2\eta-L\eta^2}\,.
\end{align}
The optimal $\eta^*_t$ admits a closed form solution given below:
\[
\eta^*_t = \tfrac{B_t}{L\sigma^2}\left(\sqrt{b_t^2+2\tfrac{\sigma^2}{B_t} b_t} - b_t\right)\,, \; b_t = L(\gamma_t+\ellbar_t(\theta_t))\,.
\]
The above characterization is derived by noticing that the function in~\eqref{eq:etat*-Nconv} is convex in $\eta$, for $\eta\in (0,1/L]$ and the stationary point of the function $\eta^*$ satisfies the boundary condition $0\le \eta^*\le 1/L$.

%
It is easy to see that the learning rate $\eta^*_t$ is increasing in the distribution shift $\gamma_t$. To implement this learning rate, we estimate $\ellbar_t(\theta_t)$ by $\ell^{B_t}(\theta_t)$, its sample average over the batch at time $t$.
The proofs are deferred to the supplementary materials
due to the space constraint.
\section{Experiments}
\label{sec:experiments}

We use TensorFlow \citep{abadi2016tensorflow} and Keras~\citep{chollet2015keras} for the following experiments.\footnote{The source code is  available at \url{https://github.com/fahrbach/learning-rate-schedules}.}
In \Cref{subsec:high_dimensional_regression} we study high-dimensional regression, and
in \Cref{app:cyto} we explore an application of neural networks to flow cytometry.

\subsection{High-dimensional regression}
\label{subsec:high_dimensional_regression}

We use the learning rate schedules
in \Cref{alg:linear} and \Cref{propo:optimal-eta}
for linear and logistic regression, respectively.
We consider paths $\{\theta_{t}^*\}_{t = 1}^{T}$
such that for $\theta_{t}^* \in \mathbb{R}^d, i \in [d]$,
\begin{equation}
\label{eqn:theta_path_def}
    \theta_{t}^*(i) = \begin{cases}
        r_{a,b}(t)^3 \cos(\lceil i/2 \rceil 2k\pi \alpha(t)) & \text{if $i$ odd}, \\
        r_{a,b}(t)^3 \sin(\lceil i/2 \rceil 2k\pi \alpha(t)) & \text{if $i$ even}, \\
    \end{cases}
\end{equation}
where $r_{a,b}(t) = \texttt{linspace}(a,b,T)$ controls the radius,
$\alpha(t) = \texttt{linspace}(0,1,T)$, and $k$ is the base frequency.
These paths have linearly independent components due to their trigonometric
frequencies and phases
(useful for high dimensions),
and move at non-monotonic speeds if $a \ne b$.

\begin{figure*}
    \centering
    \begin{subfigure}[b]{0.31\textwidth}
    \includegraphics[width=\textwidth]{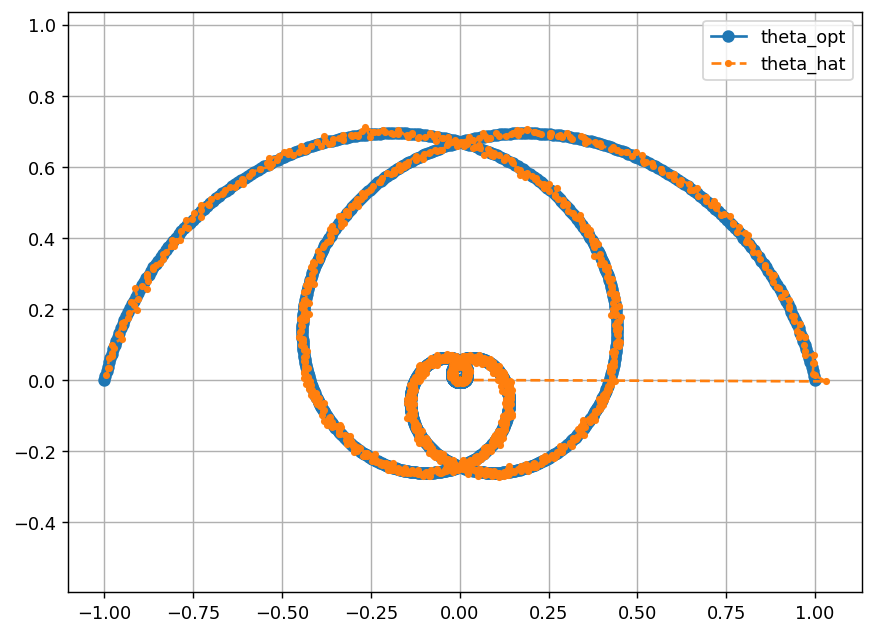}
    \end{subfigure}
    \hfill
    \begin{subfigure}[b]{0.31\textwidth}
    \includegraphics[width=\textwidth]{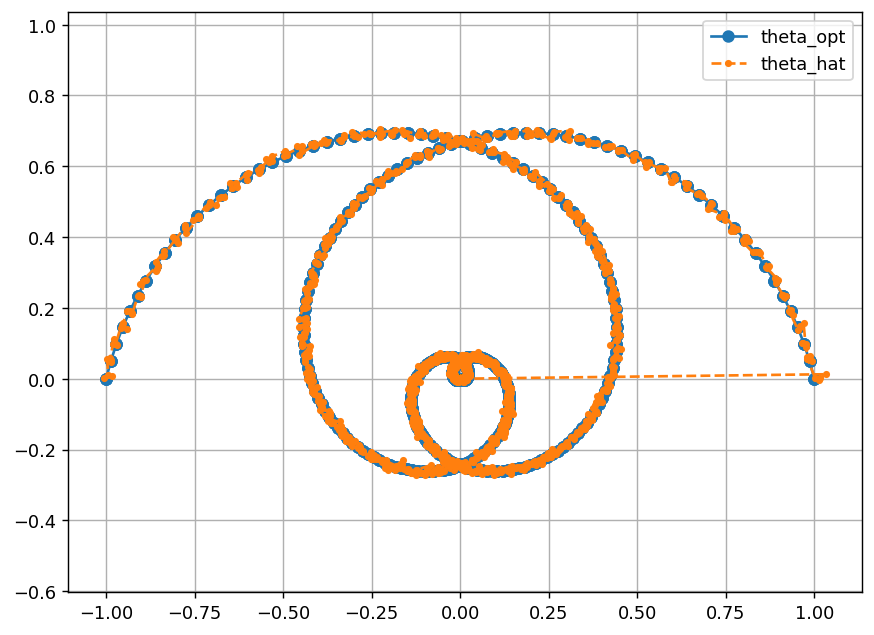}
    \end{subfigure}
    \hfill
    \begin{subfigure}[b]{0.31\textwidth}
    \includegraphics[width=\textwidth]{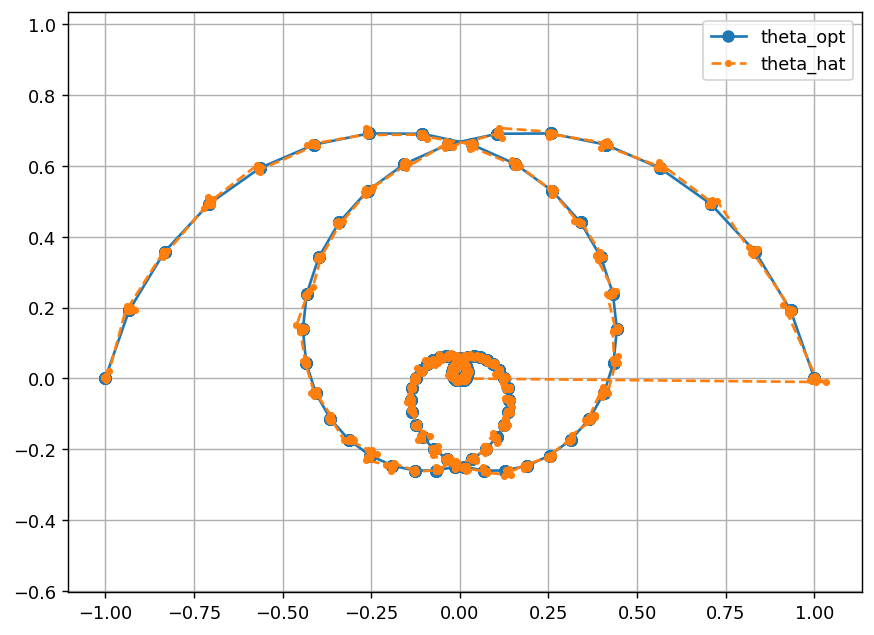}
    \end{subfigure}
    \\
    \begin{subfigure}[b]{0.31\textwidth}
    \includegraphics[width=\textwidth]{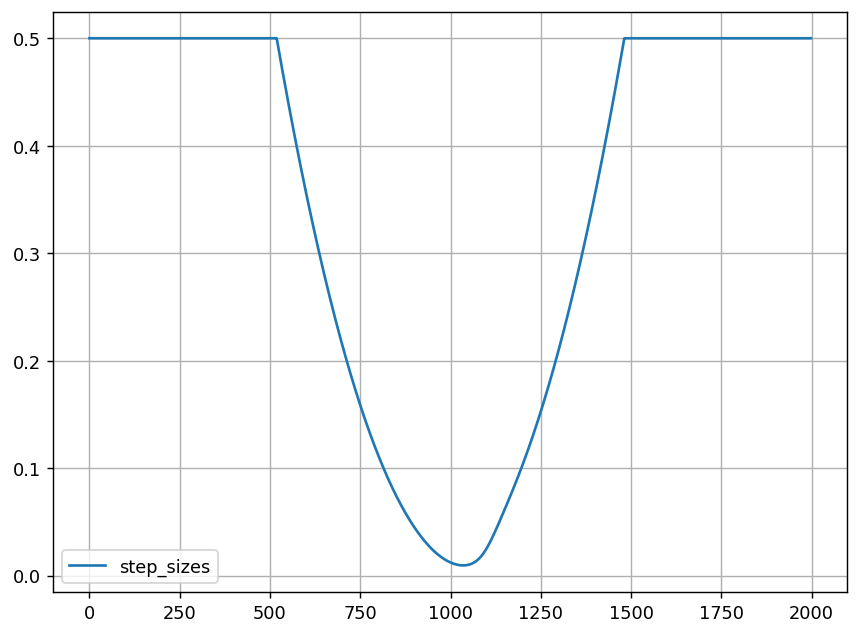}
    \end{subfigure}
    \hfill
    \begin{subfigure}[b]{0.31\textwidth}
    \includegraphics[width=\textwidth]{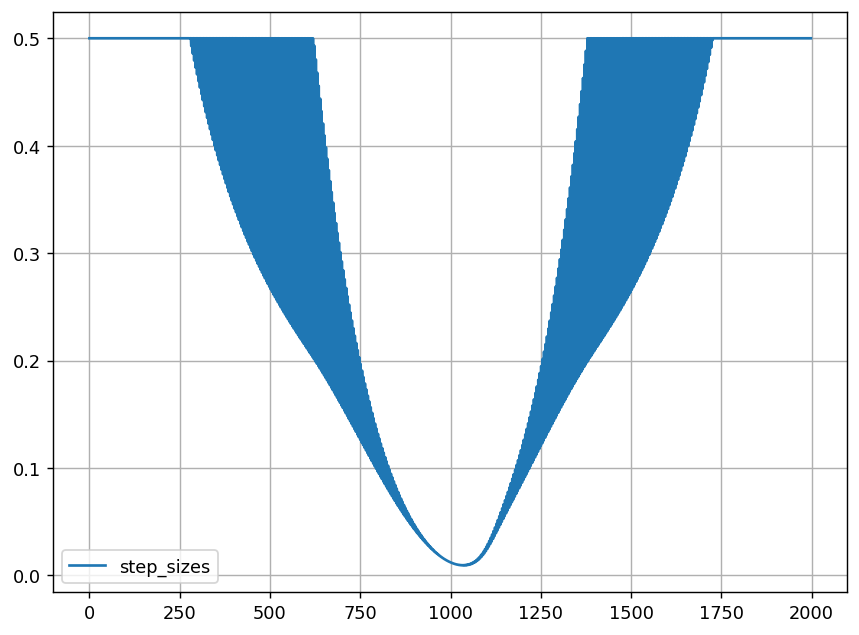}
    \end{subfigure}
    \hfill
    \begin{subfigure}[b]{0.31\textwidth}
    \includegraphics[width=\textwidth]{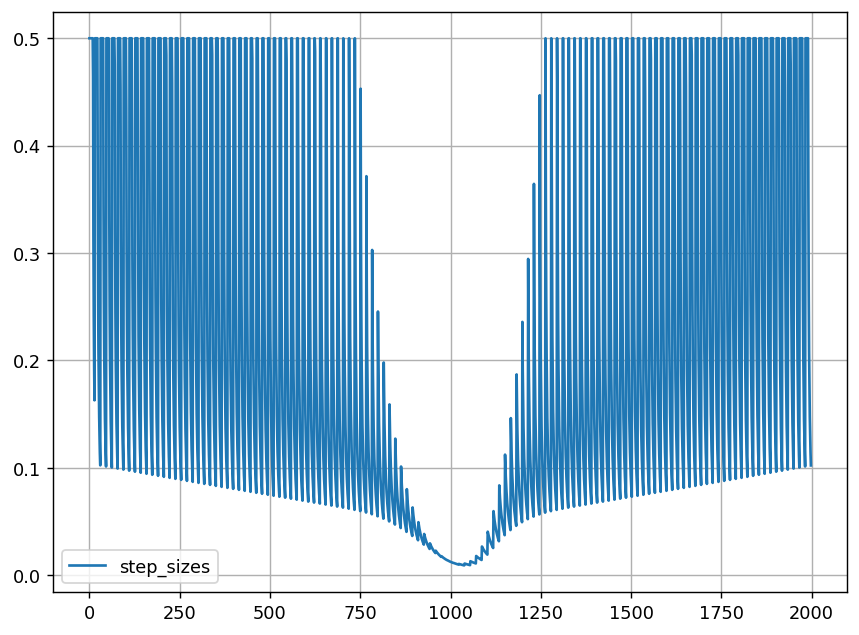}
    \end{subfigure}
    
    \caption{SGD trajectories of \Cref{alg:linear} (top);
    and oscillating learning rates $\eta_t$ as we
    discretize the path defined by $\theta_{t}^*$ where $\eta_{\max} = 0.5$ (bottom).}
    \label{fig:linear_regression_continuous_vs_discrete}
\end{figure*}

\subsubsection{Linear regression}
We start by investigating \Cref{alg:linear} for online least squares.
Setting $\theta_0 = 0$,
at each step $t$ we generate $X \in \mathbb{R}^{B_t \times d}$
for $x_{ij} \sim \normal(0, 1)$
and get back the response
$y = X \theta_{t}^* + \varepsilon$ for $\varepsilon_{i} \sim \normal(0, 0.1)$.

Consider the 2-dimensional trajectory in~\Cref{fig:linear_regression_continuous_vs_discrete}
defined by $r_{1,-1}(t)$, $k=4$, and $B_t=256$.
For $T=2000$, the path starts at 
$\theta_{1}^* = (1,0)$,
spirals into the origin,
and returns to $\theta_{T}^* = (-1,0)$.
To study the effect of \emph{continuous vs discrete distribution shifts},
we downsample the points by $\ell \in \{1,4,16\}$
to get the discretized paths
\[\hat{\theta}_{t}^{*} = \theta_{\lceil t / \ell \rceil \ell}^*,\]
for $t \in [T]$.
As $\ell$ increases
(i.e., from left to right in \Cref{fig:linear_regression_continuous_vs_discrete}),
the learning rate $\eta_t$ of \Cref{alg:linear} starts to oscillate---decreasing
when $\theta_t$ is near $\theta_t^*$
and returning to $\eta_{\max}=0.5$ when $\theta_{t}^*$ shifts.

Next, we increase the dimension $d$ and
plot the cumulative regret of~\Cref{alg:linear}
in \Cref{fig:regression-high-dimensions}.
We use the same $\ell=8$ discretized paths
and set $\eta_{\max} = 1/\sqrt{d}$.
Note that for all values of $d$,
the total regret increases, levels off,
and then increases again.
This corresponds to $\theta_{t}^*$ spiraling into the origin,
spending time there, and exiting.
The initial spike in regret is due to finding the $\theta_{t}^*$ path,
i.e., the first few steps when
$\theta_t$ moves from the origin to $\theta_{t}^*$.

\subsubsection{Logistic regression}

\begin{figure*}
    \centering
    \hspace{-0.75cm}
    \begin{subfigure}[b]{0.45\textwidth}
    \includegraphics[width=\textwidth]{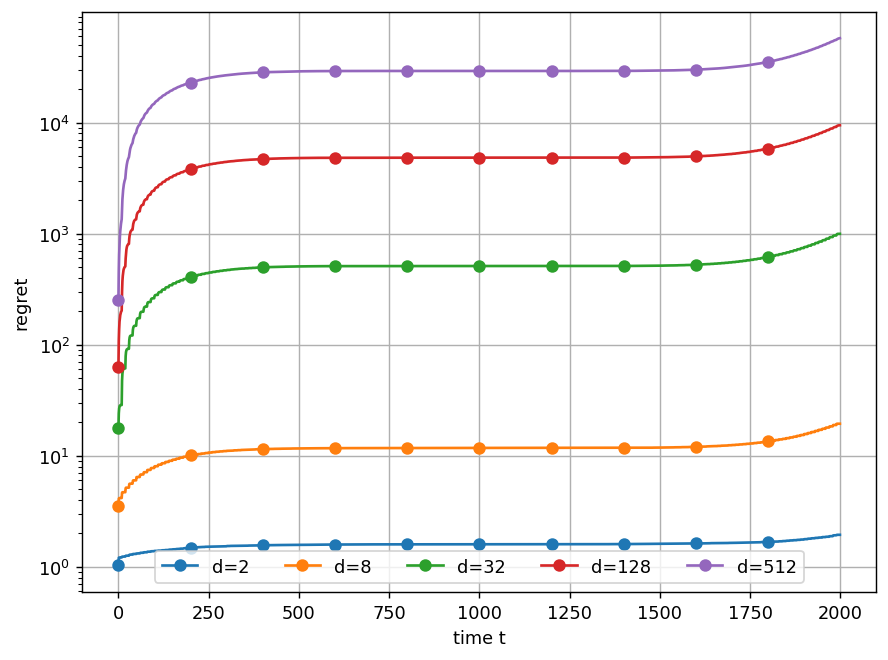}
    \end{subfigure}
    \hspace{0.15cm}
    \begin{subfigure}[b]{0.445\textwidth}
    \raisebox{0.14cm}{
        \includegraphics[width=\textwidth]{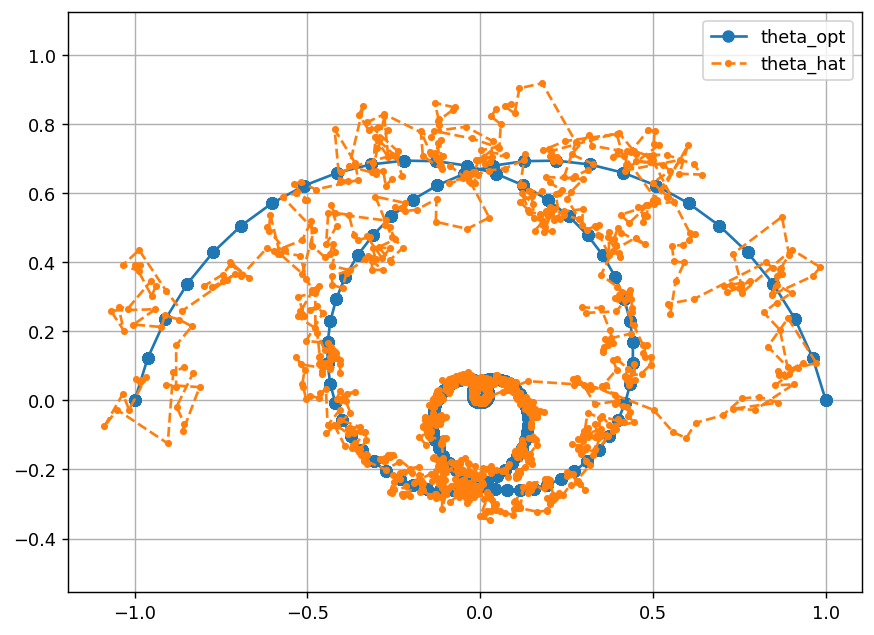}
    }
    \end{subfigure}
    \\
    \hspace{-0.75cm}
    \begin{subfigure}[b]{0.45\textwidth}
    \includegraphics[width=\textwidth]{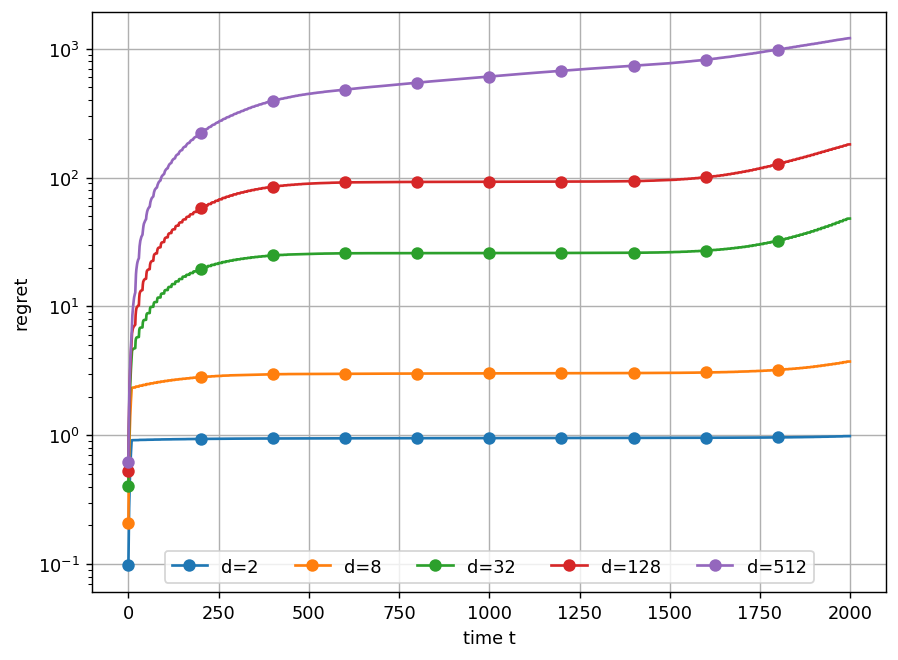}
    \end{subfigure}
    \hspace{0.15cm}
    \begin{subfigure}[b]{0.445\textwidth}
    \raisebox{0.14cm}{
        \includegraphics[width=\textwidth]{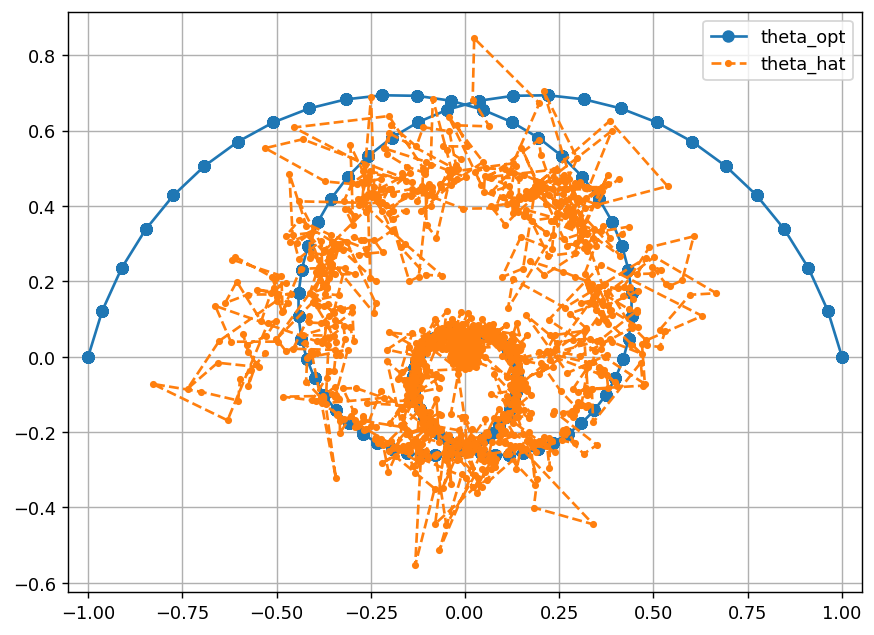}
    }
    \end{subfigure}
    
    \caption{Cumulative regret of \Cref{alg:linear} with $\eta_{\max} = 1/\sqrt{d}$
    for increasing dimensions $d$ (top-left);
    and the first and second coordinates of the SGD
    for $d=128$ and batch size $B_t = 256$ (top-right).
    Cumulative regret of \Cref{propo:optimal-eta} for $d$-dimensional logistic regression (bottom-left);
    and the first and second coordinates of the SGD
    for $d=128$ and batch size $B_t = 256$ (bottom-right).}
    \label{fig:regression-high-dimensions}
\end{figure*}

We also empirically study the learning rate schedule in~\Cref{propo:optimal-eta}
for $d$-dimensional logistic regression with binary cross entropy loss.
Similar to the linear regression experiments,
at each step $t$ we generate the covariates
$X \in \mathbb{R}^{B_t \times d}$,
but now we get back
$y = \text{sigmoid}(X \theta_{t}^* + \varepsilon)$.
We note that the learning rate schedule in \Cref{propo:optimal-eta}
is largely parameter-free for generalized linear models.
For example, setting $\sigma^{2}=d/4$ and $L=1/4$
minimizes the upper bound on the regret in \eqref{eq:Reg-UB}
for logistic regression with log loss,
so the only hyperparameter we set is $D_{\max} = d$.
\subsection{Flow cytometry}
\label{app:cyto}

Next we explore a medical application called \emph{flow cyotometry},
which uses neural networks and online stochastic optimization to
classify cells as they arrive in a stream from a shifting data distribution.
The features this model receives as input are measurements based on the
RNA expressions of each cell
(see, e.g.,~\citet{LMC19,cyt3,cyt2} and the references therein for details).
This induces a learning problem with a
non-convex loss landscape that changes with time,
where we do not have a tight characterization for an optimal learning rate schedule.

\subsubsection{Background}

We start with background on flow cytometry
to give more context for this application.
A sample of cells from a tissue is prepared
and a small number of selected RNA sequences in the cells
are bound to different fluorescent markers.
A laser then illuminates the incoming stream of cells,
which can now be separated based on the
intensity of the signals from different fluorescent markers.
Using fluorescent markers, however, comes at a cost
as they can interfere with normal cellular functioning.
In contrast, marker-free systems that use large convolutional neural nets
are often more accurate, but can be slower to adapt to distribution shifts. See~\citet{LMC19} for further details. 

We study a two-step system that does initial classification
with an inexpensive ``student'' neural network and only relies on a
small number of fluorescent markers.
This is followed by additional analysis
using a large pretrained convolutional neural network (CNN)
with near real-time feedback.
As a simplification, we assume the expensive CNN is a ``teacher'' model
whose predictions are ground truth labels.
We can achieve real-time feedback for the initial classifier
that first sees the cells by replicating the teacher across
servers to increase its inference throughput.
The goal is to optimize the (inexpensive) classifier online
and minimize its loss,
i.e., the number of misclassified cells.

The distribution of the arriving cells can change based on the
sample preparation and tissue characteristics.
For example, for pancreatic tissue, if we stream the cells starting from anterior to posterior,
the initial mixture of cells consists of more non-secreting cells
but later will have a higher proportion of secreting cells.
Thus, as a simplification, it is worth exploring the effect of different learning rate schedules for a simple online neural network
that classifies the input stream of cells into different cell types
based on a small number of RNA expression markers in each cell.
We use the pancreatic RNA expression data in
\cite{Bastdas-Ponce,Bergen20}.\footnote{This data is available at \url{https://scvelo.readthedocs.io/scvelo.datasets.pancreas/}.}

Specifically, we use the expression levels of ten RNA molecules
(corresponding to genes Pyy, Meg3, Malat1, Gcg, Gnas, Actb, Ghrl, Rsp3, Ins2 and Hspa8)
for the $4000$ murine pancreatic cells in the
\href{https://scvelo.readthedocs.io/scvelo.datasets.pancreas/}{\texttt{scVelo}} repository.
The expression levels of these genes determines the cell types completely.
We slightly perturb the expression levels to generate a stream of cells,
and within this stream we vary the distribution of secreting cells
(i.e., alpha, beta, and delta)
and non-secreting cells (i.e., ductal),
starting from non-secreting cells dominating the distribution
and ending with secreting cells dominating the distribution.
\Cref{fig:flow_cytometry} (left) is a two-dimensional embedding of these ten
signals labeled by their cell-type.
In practice, any stream of cells undergoes a similar distribution shift
depending on how the samples are prepared.

\begin{figure*}
    \centering

    \hspace{-0.3cm}
    \raisebox{0.3cm}{
        \includegraphics[width=0.45\textwidth]{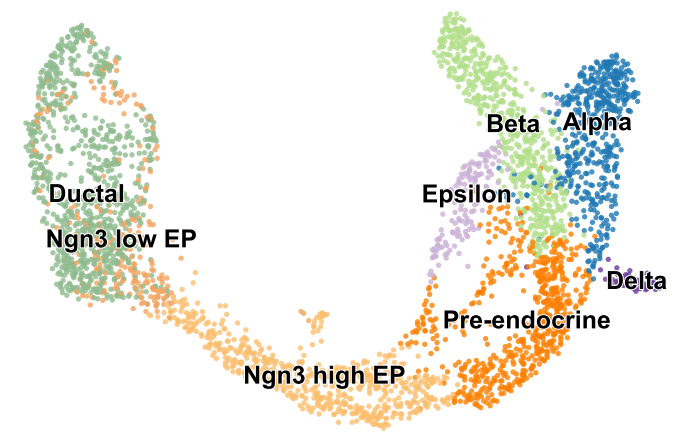}
    }
    \hspace{0.2cm}
    \includegraphics[width=0.48\textwidth]{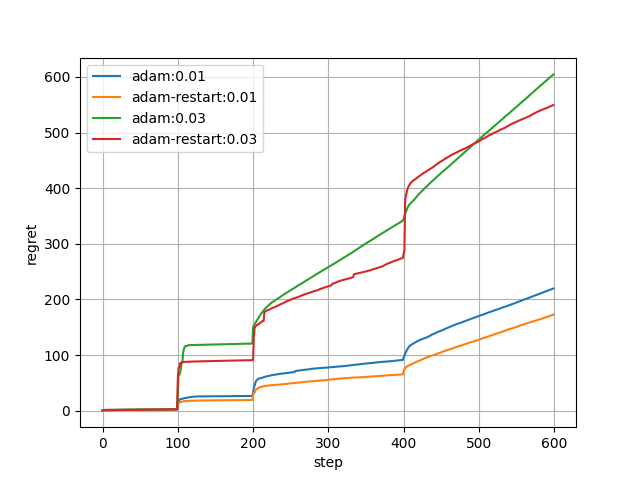}
    \caption{
    Visualization of the 10-dimensional cytometry data and their ground truth labels (left).
    Cumulative regret of online models using different initial learning rates and
    optional Adam restarts at the beginning of each distribution shift (right).
    }\label{fig:flow_cytometry}
    \vspace{-0.2cm}
\end{figure*}

\subsubsection{Experimental setup: Model and cytometry simulation}
The following is a description of our simulation setup:

\begin{itemize}
    \item \emph{Training data and distribution shift:}
    Each training example is a $10$-dimensional vector $x \in \mathbb{R}^{10}$
    drawn from a mixture distribution of 4000 murine pancreatic cells
    and updated by randomly perturbing each of its RNA expressions
    by a factor $U \sim [0.9, 1.1]$ drawn i.i.d.
    The label $y$ is the cell type: ductal, alpha, beta, delta.
    We consider a shift between four different mixture distributions:
    \begin{enumerate}
        \item $P_1(y) = (0.0, 0.0, 0.0, 1.0)$ for $100$ steps
        \item $P_2(y) = (0.0, 0.0, 0.1, 0.9)$ for $100$ steps
        \item $P_3(y) = (0.1, 0.0, 0.2, 0.7)$ for $200$ steps
        \item $P_4(y) = (0.3, 0.5, 0.1, 0.1)$ for $200$ steps
    \end{enumerate}
    The first distribution only contains perturbed 
    non-secretory (ductal) cells.
    Then, each successive mixture distribution increases the probability of
    a secretory cell, simulating the cell arrival statistics
    as we sweep from right to left over a section of the pancreas for this data.
    

    
    \item \emph{Neural network:}
    The input is a $10$-dimensional vector of RNA expression levels for the cell.
    We then use a feedforward neural network with five hidden layer and dimensions
    $(64, 32, 16, 8, 4)$.
    Each hidden layer uses an ELU activation,
    and the last $4$-dimensional embedding after activation are the logits
    for the cell type.

    
    \item \emph{Loss and optimizer:}
    We use categorical cross entropy loss
    with \texttt{from\_logits=true} for stability.
    Each step uses a batch of $B_t = 64$
    new examples to simulate the data stream.
    We optimize this model in an online manner using
    Adam~\citep{kingma2014adam}
    for different initial learning rates
    and by optionally resetting its parameters at the beginning of a distribution shift.
    We plot the cumulative regret in \Cref{fig:flow_cytometry} (right),
    where the regret for each step is defined in \eqref{eqn:regret_def}.

\end{itemize}

\subsubsection{Results}
We draw several conclusions from this experiment.
First, while larger learning rates are often better for minimizing the regret of an online
SGD-based system, there is a normally a sweet spot before the
first step size that causes the SGD to diverge.
In this experiment, an initial learning rate of $0.1$ for Adam
caused the model to diverge
but the total regret is minimized with an initial learning rate of $0.01$,
achieving less regret than $\eta_0 \in \{0.001, 0.003, 0.03\}$.
Second, resetting the Adam optimizer at the beginning of each distribution shift
(which increases its step size) allows us to achieve less
cumulative regret, as these models more quickly adapt to the new data distributions.
Finally, the models get stuck in local minima without
adaptive and increasing learning rate schedules, as evident by the
$\eta_0 = 0.03$ plots in \Cref{fig:flow_cytometry} (right),
which have different slopes in the final two phases.

\section*{Conclusion}

This work explores learning rate schedules that minimize regret
for online SGD-based learning in the presence of distribution shifts.
We derive a novel stochastic differential equation to approximate
the SGD path for linear regression with model shifts,
and we derive new adaptive schedules for general convex and non-convex losses
that minimize regret upper bounds.
These learning rate schedules can increase in the presence of distribution shifts
and allow for more aggressive optimization.

For future works, we propose extending our SDE framework to develop adaptive adjustment schemes for other hyperparameters in SGD variants such as Polyak averaging \citep{polyak1992acceleration}, SVRG \citep{johnson2013accelerating},
and elastic averaging SGD \citep{zhang2015deep},
as well as deriving effective adaptive momentum parameter adjustment policies.
We also propose studying a ``model hedging'' question
to quantify how neutral a model should remain at a given time
to optimally trade off between
underfitting the current distribution
and
being able to quickly adapt to a (possibly adversarial) future distribution.
We believe this area of designing adaptive learning rate schedules
is a fruitful and exciting area that combines control theory,
online optimization, and large-scale recommender systems~\citep{anil2022factory,coleman2023unified}.

\bibliographystyle{abbrvnat}
\bibliography{references}

\newpage
\appendix
\section{Proof of theorems and technical lemmas for linear regression}\label{sec:proofs}
\subsection{Proof of Proposition~\ref{pro:approx}}\label{proof:approx}
The integration form of the stochastic differential equation~\eqref{eq:SDE} reads as
\begin{align}\label{eq:X-proc}
X(\tau) = X_0+ Y_0- \int_0^\tau \zeta(s)X(s)\de s -Y(\tau)+ \int_0^\tau \frac{\zeta(s)}{\sqrt{\nu(s)}} D_s^{1/2}\de W(s)\,,
\end{align}
where $D_s = ((\|X(s)\|^2+\sigma^2)I + X(s)X(s)^\sT)$.
We start by proving some useful bounds on the solution of $X(\tau)$ process.
\begin{lemma}\label{lem:approx}
Consider the process $X(\tau)$ given by \eqref{eq:X-proc} with initialization $X_0$ satisfying $\|X_0\|\le K$. Under Assumptions {\sf A1-A2}, with probability at least $1-e^{-u^2}$ we have
\begin{align}\label{eq:XtauB}
\sup_{\tau\in[0,T]}\|X(\tau)\| \le C\sqrt{T}(\sqrt{d}+u)\exp\Big[C\Big(T^2+(\sqrt{d}+u)^2T\Big)\Big]\,.
\end{align}
and 
\begin{align}\label{eq:DXtauB}
\sup_{t\in [0,T/\eps]\cap\mathbb{Z}_{\ge 0}} \sup_{u\in[0,\eps]} \|X(t\eps+u) - X(t\eps)\|\le C'\sqrt{T\eps}(\sqrt{d}+u)^2\exp\Big[C\Big(T^2+(\sqrt{d}+u)^2T\Big)\Big]\,,
\end{align}
for any fixed $u>0$, and constants $C= C(K,\sigma)$,
$C' = C'(K,\sigma,\Gamma)$. 
\end{lemma}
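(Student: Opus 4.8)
The plan is to prove the uniform bound \eqref{eq:XtauB} first and then deduce the increment bound \eqref{eq:DXtauB} from it. The naive strategy for \eqref{eq:XtauB}---take norms in \eqref{eq:X-proc}, bound the stochastic integral on a high-probability event, and close with Gr\"onwall's inequality---fails because the diffusion coefficient $D_s^{1/2}$ has operator norm of order $\|X(s)\|$, so the quadratic variation of the martingale term grows quadratically in $\sup_{s\le\tau}\|X(s)\|$ and any Gr\"onwall bootstrap on $\|X\|$ becomes circular (the quantity one wants to control reappears multiplicatively). This multiplicative, geometric-type noise is the main obstacle, and I would handle it with a logarithmic Lyapunov function.

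Concretely: first set $Z(\tau):=X(\tau)+Y(\tau)$, which by \eqref{eq:X-proc} solves $\de Z=-\zeta(\tau)(Z(\tau)-Y(\tau))\,\de\tau+\tfrac{\zeta(\tau)}{\sqrt{\nu(\tau)}}D_\tau^{1/2}\,\de W(\tau)$; since $\|Y\|_\infty\le K$ by {\sf A2} the large $Y'$ drift ($\sim\Gamma/\eps$) is gone, the new drift $\zeta Y$ is bounded, and $\big|\,\|X(\tau)\|-\|Z(\tau)\|\,\big|\le K$, so it suffices to bound $\sup_{\tau\le T}\|Z(\tau)\|$. Next apply It\^o's lemma to $\Phi(\tau):=\log(1+\|Z(\tau)\|^2)$. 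Using $\Tr D_\tau=(d+1)\|X(\tau)\|^2+d\sigma^2$, the identity $Z^\sT D_\tau Z=(\|X\|^2+\sigma^2)\|Z\|^2+\langle X,Z\rangle^2$, $\zeta\le1$, $\zeta^2/\nu\le K^2$, and $\|X\|\le\|Z\|+K$, the key computation is that the drift of $\Phi$ is bounded by a constant $\mathsf a_1=\mathsf a_1(K,\sigma)\,(d+1)$ with no $\|Z\|$-dependence (each term picks up a factor $1/(1+\|Z\|^2)$ or $\|Z\|/(1+\|Z\|^2)$), while the It\^o correction is nonpositive and the martingale part $\widetilde N_\tau$ of $\Phi$ has quadratic-variation density bounded by a constant $\mathsf a_2=\mathsf a_2(K,\sigma)$ (the $\|Z\|^2$ growth of $Z^\sT D_\tau Z$ is killed by the $(1+\|Z\|^2)^2$ from the second derivative of $\log$). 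Hence $\Phi(\tau)\le\Phi(0)+\mathsf a_1\tau+\widetilde N_\tau$ with $\Phi(0)\le\log(1+4K^2)$ by {\sf A2} and $\|X_0\|\le K$, and $\langle\widetilde N\rangle_T\le\mathsf a_2 T$. Since $\exp(\lambda\widetilde N_\tau-\tfrac{\lambda^2}{2}\langle\widetilde N\rangle_\tau)$ is a supermartingale, optimizing over $\lambda$ gives $\prob(\sup_{\tau\le T}\widetilde N_\tau\ge x)\le e^{-x^2/(2\mathsf a_2 T)}$, so choosing $x$ of order $\sqrt{\mathsf a_2 T}\,u$ makes this $\le e^{-u^2}$; exponentiating $\Phi$ yields $\sup_{\tau\le T}\|X(\tau)\|\le C(K,\sigma)\exp(C(K,\sigma)(dT+u\sqrt T))$, a bound of the type \eqref{eq:XtauB} (for $T\ge1$ it is dominated by the stated right-hand side; the slightly looser exponent quoted in \eqref{eq:XtauB} just reflects cruder intermediate estimates, and any bound of this shape suffices in \Cref{pro:approx}).

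For \eqref{eq:DXtauB}, I would use \eqref{eq:XtauB} as an input. On the event $\event_u$ of probability $\ge1-e^{-u^2}$ on which $\sup_{\tau\le T}\|X(\tau)\|\le\Phi_\star$ (the right-hand side of \eqref{eq:XtauB}), write $X(t\eps+s)-X(t\eps)$ for $0\le s\le\eps$ as in \eqref{eq:X-proc}, a drift integral plus a stochastic integral over $[t\eps,t\eps+s]$. The drift integral is bounded by $\eps K\Phi_\star+\|Y(t\eps+s)-Y(t\eps)\|$, which is of the order appearing in \eqref{eq:DXtauB} by {\sf A1}--{\sf A2} and \eqref{eq:XtauB}. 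The stochastic integral has quadratic variation over $[t\eps,t\eps+s]$ at most $\eps K^2\Tr D\le\eps K^2((d+1)\Phi_\star^2+d\sigma^2)$ on $\event_u$, so a maximal inequality (Burkholder--Davis--Gundy, or the reflection bound applied coordinatewise), together with a union bound over the at most $T/\eps$ intervals (which costs only a negligible $\sqrt{\log(1/\eps)}$ additive factor that can be absorbed into $u$), bounds its supremum over $s\in[0,\eps]$ by a term of order $\sqrt{\eps d}\,\Phi_\star$; since $\Phi_\star$ carries the factor $\sqrt T(\sqrt d+u)$, this is of order $\sqrt{T\eps}\,(\sqrt d+u)^2$ times the exponential factor, as claimed. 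Summing the drift and diffusion contributions gives \eqref{eq:DXtauB}.

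In summary, the one genuinely delicate point is the multiplicative diffusion coefficient, which is neutralized by passing to $\log(1+\|X+Y\|^2)$ so that geometric growth becomes linear; everything else---the drift estimates, the supermartingale/reflection tail bounds, and the union bound over $\eps$-intervals---is routine bookkeeping with {\sf A1}--{\sf A2} and the a priori bound \eqref{eq:XtauB}.
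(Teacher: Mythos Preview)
Your proof sketch is correct, but it takes a genuinely different route from the paper, and your premise that the naive Gr\"onwall strategy ``fails'' is mistaken.

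The paper in fact uses precisely the approach you dismiss. It defines $V(\tau)=\int_0^\tau\frac{\zeta(s)}{\sqrt{\nu(s)}}D_s^{1/2}\,\de W(s)$, applies Doob's maximal inequality to bound $\sup_{\tau\le T}\|V(\tau)\|\le 2\sqrt{A_T}(\sqrt d+u)$ with $A_T:=K^2\int_0^T(2\|X_s\|^2+\sigma^2)\,\de s$, and then---here is the point you missed---applies Gr\"onwall to $\Delta_\tau:=\sup_{s\le\tau}\|X(s)\|^2$ rather than to $\|X\|$. After squaring the integral equation and using Cauchy--Schwarz, the martingale contribution is $O\big(A_T(\sqrt d+u)^2\big)$, which is \emph{linear} in $\int_0^T\|X_s\|^2\,\de s\le\int_0^T\Delta_s\,\de s$, and hence becomes just another Gr\"onwall-admissible integral term with coefficient $\sim(\sqrt d+u)^2$. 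The apparent circularity dissolves once you work with the squared process; this is why the exponent in \eqref{eq:XtauB} carries the factor $T^2+(\sqrt d+u)^2T$.

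Your log-Lyapunov argument via $\Phi=\log(1+\|X+Y\|^2)$ is a legitimate and arguably cleaner alternative: the change of variables $Z=X+Y$ removes the large $Y'$ drift (the paper never needs this because it bounds $\|Y\|$ directly rather than $\|Y'\|$), and the logarithm converts the multiplicative diffusion into a martingale with \emph{deterministically} bounded quadratic variation. This yields a tighter exponent (of order $dT+u\sqrt T$ rather than $T^2+(\sqrt d+u)^2T$) and sidesteps a subtle point the paper glosses over, namely that $A_T$ is itself random in the Doob step (the paper's inequality $\E[e^{\alpha\|V(T)\|^2/2}]\le(1-A_T\alpha)^{-d/2}$ requires a self-normalized or time-change justification that is not spelled out).

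For \eqref{eq:DXtauB} the two arguments are essentially the same: plug the a priori bound \eqref{eq:XtauB} into the drift and diffusion increments over each $[t\eps,t\eps+s]$. You are slightly more careful than the paper in noting the union bound over the $T/\eps$ intervals; the paper writes a single Doob-type inequality with a double supremum and does not make the union-bound cost explicit.
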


\begin{proof}[Proof of Lemma~\ref{lem:approx}]
Define $V(\tau): = \int_0^\tau \frac{\zeta(s)}{\sqrt{\nu(s)}} D_s^{1/2}\de W(s)$. We have 
$${\rm Cov}(V(\tau)) = \int_0^\tau \frac{\zeta(s)^2}{\nu(s)} D_s \de s\,,$$
so then
\begin{align}
\|{\rm Cov}(V(\tau))\|_{\rm op}\le K^2 \int_0^\tau\|D_s\|_{\rm op}\de s
\le
A_\tau:=K^2 \int_0^\tau (2\|X_s\|^2+\sigma^2)\de s\,.
\end{align}

Note that $\exp{\alpha \|V(\tau)\|^2}$ is a submartingale, and by virtue of Doob’s martingale inequality, we have
\[
\prob\Big(\sup_{\tau\le T}\|V(\tau)\| \ge \lambda\Big)
\le \E[\exp\{\alpha\|V(T)\|/2\}] \exp\{-\alpha \lambda^2/2\}\le (1-A_T\alpha)^{-d/2} \exp\{-\alpha \lambda^2/2\}\,.
\]
Take $\alpha = 1/(2A_T)$ and $\lambda = 2\sqrt{A_T}(\sqrt{d}+u)$ to obtain
\begin{align}\label{eq:VB}
    \prob\left(\sup_{\tau\le T}\|V(\tau)\| \ge 2\sqrt{A_T}(\sqrt{d}+u) \right)
\le 2^{d/2}\exp(-(\sqrt{d}+u)^2) \le e^{-u^2}\,.
\end{align}
Using \eqref{eq:X-proc} and recalling Assumptions {\sf A1-A2}, we get
\begin{align*}
    \|X(\tau)\| &\le \|X_0\|+ \|Y_0\|+ \|Y_\tau\| + \int_0^{\tau} \zeta(s) \|X(s)\|\de s + \|V(\tau)\|\\
    &\le 3K + \int_0^{\tau} K \|X(s)\|\de s + \|V(\tau)\|\,.
\end{align*}
We next use the inequality $(a+b+c)^2\le 3(a^2+b^2+c^2)$ to get
\begin{align*}
    \|X(\tau)\|^2 &\le 27K^2 + 3K^2\Big(\int_0^{\tau}  \|X(s)\|\de s\Big)^2 + 3\|V(\tau)\|^2\\
    &\le 27K^2 + 3K^2 \tau \int_0^{\tau} \|X(s)\|^2\de s + 3\|V(\tau)\|^2\,,
\end{align*}
where in the second line we used Cauchy--Shwarz inequality. Define $\Delta_T = \sup_{\tau\le T}\|X(\tau)\|^2$.
Taking the supremum over $\tau\le T$ of both sides of the previous inequality and using the bound \eqref{eq:VB}, we arrive at
\begin{align*}
\Delta_T&\le 27K^2 + 3K^2T \int_0^{T} \Delta_s\de s + 12A_T(\sqrt{d}+u)^2\\
&\le 27K^2 + 3K^2T \int_0^{T} \Delta_s\de s + 12A_T(\sqrt{d}+u)^2\\
&\le 27K^2 + 3K^2T \int_0^{T} \Delta_s\de s + 12(2K^2\int_0^T\Delta_s \de s+\sigma^2TK^2)(\sqrt{d}+u)^2\\
&= 27K^2+ 12\sigma^2TK^2(\sqrt{d}+u)^2+
(3T+24(\sqrt{d}+u)^2)K^2 \int_0^T \Delta_s \de s.
\end{align*}
Using Gronwall’s inequality, the above relation implies that
\[
\Delta_T\le K^2 (27+ 12\sigma^2T(\sqrt{d}+u)^2)
\exp((3T+24(\sqrt{d}+u)^2)K^2T).
\]
Taking square root of both sides and using $\sqrt{a+b}\le \sqrt{a}+\sqrt{b}$, we get
\[
\sup_{\tau\le T}\|X(\tau)\|\le K (\sqrt{27}+ \sqrt{12T}\sigma(\sqrt{d}+u)) \exp((3T+24(\sqrt{d}+u)^2)K^2T/2)\,,
\]
which completes the proof of~\eqref{eq:XtauB}.

We next proceed with proving~\eqref{eq:DXtauB}.
Define $\tilde{\Delta}(t,\eps) = \sup_{h\in[0,\eps]}\|X(t\eps+h)- X(t\eps)\|$.
Using \eqref{eq:X-proc}, we have
\begin{align}
\tilde{\Delta}(t,\eps) &\le \sup_{h\in[0,\eps]}\left\{ \Big\|\int_{t\eps}^{t\eps+h} \zeta(s) X(s)\de s\Big\|
+ \|Y(t\eps+h)-Y(t\eps)\| + \Big\|\int_{t\eps}^{t\eps+h} \frac{\zeta(s)}{\sqrt{\nu(s)}} D_s^{1/2} \de W(s)\Big\|\right\}\nonumber\\
 &\le K \eps\sup_{s\le T} \|X(s)\| + \sup_{h\in[0,\eps], \tau\in[t\eps, t\eps+h]} \|Y'(\tau)\| h+ \sup_{h\in[0,\eps]} \|V(t,h,\eps)\|\nonumber\\
 &\le K \eps\sup_{s\le T} \|X(s)\| + \Gamma+ \sup_{h\in[0,\eps]} \|V(t,h,\eps)\|
 \,,\label{eq:tD-B}
\end{align}
with $V(t,h,\eps): = \int_{t\eps}^{t\eps+h} \frac{\zeta(s)}{\sqrt{\nu(s)}} D_s^{1/2} \de W(s)$. In the last step, we used Assumption {\sf A2}, by which 
$\|Y'(\tau)\|\le \Gamma/\eps$.
Similar to the derivation of \eqref{eq:VB}, we have
\[
\prob\left(\sup_{t\in[0,T/\eps]\cap \mathbb{Z}_{\ge 0}} \sup_{h\in[0,\eps]}\|V(t,h,\eps)\| \ge 2\sqrt{B_\eps}(\sqrt{d}+u) \right)
\le 2^{d/2}\exp(-(\sqrt{d}+u)^2) \le e^{-u^2}\,,
\]
with $B_\eps:= \sup_{h\le \eps} K^2 \int_{t\eps}^{t\eps+h} (2\|X(s)\|^2+\sigma^2)\de s$. Plugging in~\eqref{eq:tD-B}, we have
\begin{align}
  \tilde{\Delta}(t,\eps) &\le  K \eps\sup_{s\le T} \|X(s)\| + 2 K\sqrt{(2\sup_{s\le T}\|X(s)\|^2+\sigma^2)\eps}\; (\sqrt{d}+u)\nonumber\\
  &= \Gamma+ K\left(\eps+2\sqrt{2\eps}(\sqrt{d}+u)\right) (\sup_{s\le T} \|X(s)\|) + 2K\sigma\sqrt{\eps} (\sqrt{d}+u)\nonumber\\
  &\le C'\sqrt{T\eps}(\sqrt{d}+u)^2\exp\Big[C\Big(T^2+(\sqrt{d}+u)^2T\Big)\Big]\,.
\end{align}
This concludes the proof of Equation~\eqref{eq:DXtauB}.

We next rewrite the stochastic gradient descent update as follows:
 \begin{align}
     \theta_{t+1} &= \theta_t - \eta_t \frac{1}{B_t}\sum_{k=1}^{B_t}\nabla\ell(\theta_t,z_{k,t})\nonumber\\
     &= \theta_t + \eta_t \frac{1}{B_t}\sum_{k=1}^{B_t} (y_{tk}-\<x_{tk},\theta\>) x_{tk}\nonumber\\
      &= \theta_t +\eta_t (\theta^*_t - \theta_t) +\frac{\eta_t}{B_t}\sum_{k=1}^{B_t} \left((x_{tk}x_{tk}^\sT - I)(\theta^*_t - \theta_t)+\eps_{tk} x_{tk}\right)\nonumber\\
      & =\theta_t +\eta_t (\theta^*_t - \theta_t) -\eta_t\xi_t \,, \label{eq:zeta}
 \end{align}
 where the noise term $\xi_t$ has mean zero, given that the data points $z_{t,k}$ are sampled independently at each step $t$. 

Note that   
 $\xi_t$ in~\eqref{eq:zeta} is the average of $B_t$ zero mean variables and thus can be approximated by a normal distribution with covariance $(1/B_t) D_t$, with
 \begin{align}
 D_t &= \left\{\E\left[(x_{tk} x_{tk}^\sT - I)(\theta^*_t - \theta_t)(\theta^*_t - \theta_t)^\sT (x_{tk} x_{tk}^\sT - I)^\sT\right] +\sigma^2 I\right\}\nonumber\\
 &= \left((\|\theta^*_t-\theta_t\|^2+\sigma^2)I +  (\theta^*_t-\theta_t)(\theta^*_t-\theta_t)^\sT \right)\,,
 \end{align} 
 where the above identity follows from Lemma~\ref{lem:stein}. We let $\xi_t = -D_t^{1/2}g_t$ with $g_t\sim\normal(0,I_d)$. Iterating update \eqref{eq:zeta} recursively, we have
 \begin{align}
     \theta_t - \theta^*_t &= \theta_0 - \theta^*_t  +\sum_{\ell=0}^{t-1}  \eta_\ell(\theta^*_\ell-\theta_\ell)+ \sum_{\ell=0}^{t-1} \frac{\eta_\ell}{\sqrt{B_\ell}} D_{\ell}^{1/2}g_\ell\nonumber\\
     &=\theta_0 -\theta^*_t +\eps\sum_{\ell=0}^{t-1}  \zeta(\ell \eps)(\theta^*_\ell-\theta_\ell)+ \int_0^{t\eps} \frac{s\zeta([s])}{\sqrt{s\nu([s])}} D_{[s]}^{1/2} \frac{\de W(s)}{\sqrt{s}}\nonumber\\
     &=\theta_0  - Y(\eps t)  +\eps\sum_{\ell=0}^{t-1}  \zeta(\ell \eps)(\theta^*_\ell-\theta_\ell)+ \int_0^{t\eps} \frac{\zeta([s])}{\sqrt{\nu([s])}} D_{[s]}^{1/2} \de W(s)\,,\label{eq:SGD-full}
 \end{align}
 where we adopt the notation $[s] = \eps \lfloor s/\eps\rfloor$, and $W(s)$ represents the standard Brownian motion.
 
 We take the difference of \eqref{eq:X-proc} and \eqref{eq:SGD-full}.
 Since $\theta_0 = \theta_0 - \theta^*_0 + \theta^*_0 = X_0 + Y_0$, for $\tau\in \mathbb{Z}_{\ge 0}\eps\cap[0,T]$, we have:
 \begin{align}
     \|(\theta_{\tau/\eps}- \theta^*) - X(\tau)\|
     &\le \left\|\eps\sum_{\ell=0}^{\tau/\eps-1}  \zeta(\ell \eps)(\theta_\ell- \theta^*_\ell) - \int_0^\tau \zeta(s)X(s)\de s\right\| \notag\\
     &\hspace{0.45cm}+ \left\|\int_0^\tau \Big(\frac{\zeta([s])}{\sqrt{\nu([s])}} - \frac{\zeta(s)}{\sqrt{\nu(s)}} \Big)\de W(s) \right\|.\label{eq:app1}
 \end{align}
 We first treat the first term. We have
 \begin{align*}
     &\eps\sum_{\ell=0}^{t-1}  \zeta(\ell \eps)(\theta_\ell- \theta^*_\ell) - \int_0^\tau \zeta(s)X(s)\de s \nonumber\\
     &= \int_0^{\tau} \zeta([s]) (\theta_{\lfloor s/\eps\rfloor} - \theta^*_{\lfloor s/\eps\rfloor}) - \int_0^\tau \zeta(s)X(s)\de s\\
     &=\int_0^{\tau} \zeta([s]) \Big(\theta_{\lfloor s/\eps\rfloor} - \theta^*_{\lfloor s/\eps\rfloor} -X([s])\Big) \de s + \int_0^\tau \zeta([s]) (X([s]) - X(s))\de s
     + \int_0^\tau (\zeta([s])-\zeta(s)) X(s)\de s.
 \end{align*}
 We have
 \begin{align}\label{eq:aux1}
    \left\| \int_0^\tau \zeta([s]) (X([s]) - X(s))\de s\right\|
     \le K\tau \sup_{t\in [0,T/\eps]\cap\mathbb{Z}_{\ge 0}} \sup_{h\in[0,\eps]} \|X(t\eps+h) - X(t\eps)\|\,.
 \end{align}
 Also,
 \begin{align}\label{eq:aux2}
   \left\| \int_0^\tau (\zeta([s])-\zeta(s)) X(s)\de s\right\|
   \le K\eps \tau \sup_{\tau\in[0,T]}\|X(\tau)\|\,.
 \end{align}
 Note that the right-hand side of~\eqref{eq:aux1} and \eqref{eq:aux2} are bounded in Lemma~\ref{lem:approx}.
 
 We next bound the second term on the right-hand side of~\eqref{eq:app1}.
 Define
 \[
 E(\tau): = \int_0^\tau \Big(\frac{\zeta([s])}{\sqrt{\nu([s])}} - \frac{\zeta(s)}{\sqrt{\nu(s)}} \Big)\de W(s).
 \]
 Note that $E(\tau)~\sim\normal(0,\alpha^2 I_d)$,
 where
 \[
 \alpha^2 = \int_0^\tau \Big(\frac{\zeta([s])}{\sqrt{\nu([s])}} - \frac{\zeta(s)}{\sqrt{\nu(s)}} \Big)^2\de s\le K ^2\eps \tau\,,
 \]
 using Assumption {\sf A2} by which $\|\zeta/\sqrt{v}\|_{\rm Lip}\le K$. By applying Doob's inequality to the martingale $\exp(\frac{1}{2\tau}\|E(\tau)\|)$, similar to derivation of \eqref{eq:VB}, we obtain
 \begin{align}\label{eq:E}
     \prob\left(\sup_{\tau\le T} \|E(\tau)\|\ge 2K \sqrt{\eps T}(\sqrt{d}+u)\right)\le e^{-u^2/2}\,.
 \end{align}

\noindent 
Now we define 
 $$\Delta(\tau) := \sup_{t\in[0,\tau/\eps]\cap\mathbb{Z}_{\ge 0}}\|X(t\eps) - (\theta_t-\theta^*_t)\|\,.$$
 Using Lemma~\ref{lem:approx} to bound \eqref{eq:aux1} and \eqref{eq:aux2} and then combining that with \eqref{eq:E} into \eqref{eq:app1} we arrive at
 \begin{align}
     \Delta(\tau) &\le K\int_0^\tau \Delta(s)\de s+ K\tau C'\sqrt{T\eps}(\sqrt{d}+u)^2\exp\Big[C\Big(T^2+(\sqrt{d}+u)^2T\Big)\Big]\nonumber\\
     &\hspace{0.45cm}+ K\eps \tau C\sqrt{T}(\sqrt{d}+u)\exp\Big[C\Big(T^2+(\sqrt{d}+u)^2T\Big)\Big] + 2K\sqrt{\eps T}(\sqrt{d}+u)\nonumber\\
     &\le K\int_0^\tau \Delta(s)\de s
     +C'' T^{3/2} \sqrt{\eps}(\sqrt{d}+u)^2\exp\Big[C\Big(T^2+(\sqrt{d}+u)^2T\Big)\Big].
 \end{align}
 Using Gronwall's inequality we obtain
 \begin{align}
     \Delta(T)\le C'' T^{3/2} \sqrt{\eps}(\sqrt{d}+u)^2\exp\Big[C\Big(T^2+(\sqrt{d}+u)^2T\Big)+KT\Big]\,.
 \end{align}
 We derive the final claim by noting that
 \begin{align}
     \sup_{t\in[0,\tau/\eps]\cap\mathbb{Z}_{\ge 0}}\Big|\|X(t\eps)\|^2 - \|\theta_t-\theta^*_t\|^2\Big|
     &\le \Delta(T)^2+ 2\Delta(T) \sup_{t\in[0,\tau/\eps]\cap\mathbb{Z}_{\ge 0}} \|X(t\eps)\|\nonumber\\
     &\le C_1 \sqrt{\eps}(\sqrt{d}+u)^4 T^3 \exp\Big[C_2\Big(T^2+(\sqrt{d}+u)^2T\Big)\Big]\,,
 \end{align}
 for some constants $C_1, C_2$, depending on $K,\sigma, \Gamma$. This completes the proof. 
\end{proof}
 
\subsection{Proof of Theorem~\ref{pro:SDE}}\label{sec:Ito-SDE}

Recall the SDE for process $X(\tau)$ given by
\[
\de X(\tau)=-(\zeta(\tau) X(\tau) +Y'(\tau))\de\tau+ \nonumber\\
     \frac{\zeta(\tau)}{\sqrt{\nu(\tau)}} \left((\|X(\tau)\|^2+\sigma^2)I + X(\tau)X(\tau)^\sT\right)^{1/2}\; \de W(\tau)\,,
\]
Let $m_\tau:=\E[X(\tau)]$. Taking expectation of the above SDE, we obtain
\[
 m_\tau'= - \zeta(\tau) m_\tau - Y'(\tau)\,.
\]

Next we define the stochastic process $Z(\tau) = \|X(\tau)\|^2$. By Ito's lemma (cf. Lemma~\ref{lem:Ito}), we have
\begin{align*}
\de Z(\tau) =& \left(-2\zeta(\tau)\|X(\tau)\|^2 - 2X(\tau)^\sT Y'(\tau) + \frac{\zeta(\tau)^2}{\nu(\tau)} \left((d+1) \|X(\tau)\|^2+d \sigma^2\right)\right) \de\tau \nonumber\\
&+ 2 \frac{\zeta(\tau)}{\sqrt{\nu(\tau)}} X(\tau)^\sT\left(( \|X(\tau)\|^2+\sigma^2)I+ X(\tau)X(\tau)^\sT\right)^{1/2}\; \de W(\tau)\,.
\end{align*}
Taking expectation of both sides, we arrive at the following ODE for $v_\tau = \E[Z(\tau)] = \E[\|X(\tau)\|^2]$:
\begin{align}
    v'_\tau &= -2\zeta(\tau)v_\tau - 2m_\tau^\sT  Y'(\tau)
    +\frac{\zeta(\tau)^2}{\nu(\tau)} ((d+1)v_\tau+ d\sigma^2)\nonumber\\
    &=\left((d+1)\frac{\zeta(\tau)^2}{\nu(\tau)} - 2\zeta(\tau)\right)v_\tau + \frac{\zeta(\tau)^2}{\nu(\tau)}\sigma^2 d- 2m_\tau^\sT Y'(\tau)\,.
\end{align}
\subsection{Proof of Theorem \ref{thm:optimal-policy}}

We start by giving a brief overview of the Hamilton--Jacobi--Bellman (HJB) equation~\cite{bellman1956dynamic}. 

Consider the following value function:
\begin{align}
    V(z(\tau_0),\tau_0) = \min_{\zeta:[\tau_0,T]\to \mathcal{A}} \int_{\tau_0}^T C(z(\tau),\zeta(\tau))\de \tau + D(z(T))\,,
\end{align}
where $z(\tau)$ is the vector of the system state, $\zeta(\tau)$, for $\tau\in[\tau_0,T]$ is the control policy we aim to optimize over and takes value in a set $\mathcal{A}$,   $C(\cdot)$ is the scalar cost function and $D(\cdot)$ gives the bequest value at the final state $z(T)$.

Suppose that the system is also subject to the constraint
\begin{align}\label{eq:HJB-con}
\frac{\de}{\de \tau}{z}(\tau) = \Phi(z(\tau),\zeta(\tau))\,, \quad \forall \tau\in[\tau_0,T]\,,
\end{align}
with $\Phi$ describing the evolution of the system state over time. The dynamic programming principle allows us to derive a recursion on the value function $V$, in the form of a partial differential equation (PDE).
Namely, the the Hamilton--Jacobi--Bellman PDE is given by
\begin{align}
\partial_\tau V(z,\tau) + \min_{\zeta\in \mathcal{A}} \left[ \partial_z V(z,\tau)\cdot  \Phi(z,\zeta)  + C(z,\zeta) \right] = 0\,,\label{eq:HJB}\\
\text{subject to}\quad V(z,T) = D(z)\,.\nonumber
\end{align}
The above PDE can be solved backward in time and then the optimal control $\zeta^*(\tau)$ is given by 
\begin{align}\label{eq:zeta*}
\zeta^*(\tau) = \arg\min_{\zeta\in \mathcal{A}} \left[ \partial_z V(z(\tau),\tau) \cdot \Phi(z(\tau),\zeta)  + C(z(\tau),\zeta) \right]\,.
\end{align}

We are now ready to prove the claim of Theorem \ref{thm:optimal-policy}, using the HJB equation.  

Consider $\tilde{v}_\tau$ as the system state at time $\tau$ ( i.e., $z(\tau) = \tilde{v}_\tau$), and the cost function $C(\tilde{v}_\tau, \zeta(\tau)) = \tilde{v}_\tau$. Also set $D(\cdot)$ to be the zero everywhere. The control variable $\zeta(\tau)$ takes values in $\mathcal{A} = [0,1]$.

The function $\Phi(\cdot,\cdot)$ in~\eqref{eq:HJB-con} is given by~\eqref{eq:tv}, which we recall here:

\[
\Phi(\tilde{v}_\tau,\zeta): = \Big((d+1)\frac{\zeta^2 }{\nu(\tau)}-2\zeta\Big) \tilde{v}_\tau 
     + \frac{\zeta^2}{\nu(\tau)}\sigma^2 d + 2\|Y'(\tau)\|\sqrt{\tilde{v}_\tau}\,.
\]

Note that in our case, the cost function $C$ does not depend on $\zeta(\tau)$. Also, it is easy to see that $\partial_z V(\tilde{v}_\tau,\tau)>0$ because larger $\tilde{v}_\tau$ means we are further from the sequence of models and so the minimum cost achievable in tracking the sequence of models will be higher. Therefore, \eqref{eq:zeta*} reduces to
\[
\zeta^*(\tau) = \arg\min_{\zeta\in[0,1]} \Phi(\tilde{v}_\tau,\zeta).
\]
Since $\Phi$ is quadratic in $\zeta$, solution to the above optimization has a closed form given by
\[
\zeta^*(\tau) = \min\left\{1, \left(\frac{d+1}{\nu(\tau)}\tilde{v}_\tau + \frac{\sigma^2d}{\nu(\tau)}\right)^{-1} \tilde{v}_\tau\right\}\,,
\]
which completes the proof.

\subsection{Proof of Lemma~\ref{lem:no-shift-Lin}}

Substituting for $\zeta(\tau)$ from~\eqref{eq:policy}, it is easy to verify that $\tilde{v}'(\tau)\le0$ and so $\tilde{v}(\tau)$ is decreasing in $\tau$.

Define the shorthand ${\sf a}: = (d+1)/\nu(\tau)$ and ${\sf b}:= \sigma^2d/\nu(\tau)$.
Note that if $\tilde{v}_\tau\ge {\sf b}/(1-{\sf a})$,
then by \eqref{eq:policy}, $\zeta(\tau) = 1$ and in this case ODE~\eqref{eq:tv} reduces to $\tilde{v}_\tau' = ({\sf a}-2)\tilde{v}_\tau+{\sf b}$, with the solution
\[
\tilde{v}_\tau = \left(\tilde{v}_0 + \frac{{\sf b}}{{\sf a}-2}\right) e^{({\sf a}-2)\tau}-\frac{{\sf b}}{{\sf a}-2}\,.
\]
However, the above solution is valid until $\tilde{v}_\tau\ge {\sf b}/(1-{\sf a})$, which is the assumption we started with, which using the above characterization is equivalent to 
\[
\tau\le \tau_*:= \left[\frac{1}{2-{\sf a}}\log\left((1-{\sf a}) \left(\tilde{v}_0 \frac{2-{\sf a}}{{\sf b}} - 1\right)\right)\right]_+\,.
\]
For $\tau>\tau_*$, we have $\tilde{v}_{\tau}\le {\sf b}/(1-{\sf a})$ and so $\zeta(\tau) = \tilde{v}_\tau/({\sf a}\tilde{v}_\tau + {\sf b})$ by~\eqref{eq:policy}. In this case, ODE~\eqref{eq:tv} reduces to
\[
\tilde{v}'_\tau = -\frac{\tilde{v}_{\tau}^2}{{\sf a}\tilde{v}_{\tau}+{\sf b}}\,.
\]
By rearranging the terms and integrating, the solution to above ODE satisfies
\begin{align}\label{eq:ODE2}
{\sf a} \ln\Big(\frac{1}{\tilde{v}_\tau}\Big) + \frac{{\sf b}}{\tilde{v}_{\tau}} = \tau+ C\,,
\end{align}
where $C$ can be obtained by the continuity condition of $\tilde{v}_{\tau}$ at $\tau_*$, i.e.,
\[
C = {\sf a}\ln\Big(\frac{1-{\sf a}}{{\sf b}}\Big) +1-{\sf a} -\tau_*\,.
\]
From~\eqref{eq:ODE2} we observe that as $\tau\to \infty$, $\tilde{v}_\tau\to 0$ and the term ${\sf b}/\tilde{v}_\tau$ becomes dominant by which we obtain
\[
\lim_{\tau\to \infty} \frac{\tilde{v}_\tau}{\frac{{\sf b}}{\tau+C}} = 1\,.
\]
In addition, invoking definition of optimal policy $\zeta(\tau)$, we obtain
\[
\lim_{\tau\to \infty} \frac{\zeta(\tau)}{\frac{1}{{\sf a}+C+\tau}} = 1\,,
\]
which completes the proof.

\section{Proof of theorems and technical lemmas for convex loss}
\subsection{Proof of Theorem~\ref{thm:convex-reg}}
We define the shorthand $D_t^2 = \|\theta^*_t - \theta_t\|^2$ and let $v_t = \theta^*_t - \theta^*_{t+1}$ be shifts in the optimal models. We also define the shorthand 
\[
\nabla\ell^B_t(\theta_t):=
\frac{1}{B_t} \sum_{k=1}^{B_t} \nabla\ell(\theta_t,z_{t,k})\,.
\]
Since projection on a convex set is contraction, we have
\[
\|\Pi_{\Theta}(u) - w\|\le \|u - w\|\,,
\]
for any $w\in \Theta$. Using this property, we have
\begin{align*}
D_{t+1}^2 &= \|\Pi_{\Theta}(\theta_t-\eta_t \nabla\ell^B_t(\theta_t)) - \theta^*_{t+1} \|^2\\
&=\|\Pi_{\Theta}(\theta_t-\eta_t \nabla\ell^B_t(\theta_t))-\theta^*_t+ \theta^*_t - \theta^*_{t+1}\|^2\\
&= \|\Pi_{\Theta}(\theta_t-\eta_t \nabla\ell^B_t(\theta_t))-\theta^*_t\|^2+\|v_t\|^2
+2\<v_t, \Pi_{\Theta}(\theta_t-\eta_t \nabla\ell^B_t(\theta_t))-\theta^*_t\> \\
&\le \|\theta_t-\eta_t \nabla\ell^B_t(\theta_t)-\theta^*_t\|^2+\|v_t\|^2
+2\<v_t, \Pi_{\Theta}(\theta_t-\eta_t \nabla\ell^B_t(\theta_t))-\theta^*_t\>\\
&= D_t^2 - 2\eta_t\<\nabla \ell^B_t(\theta_t), \theta_t - \theta_t^*\> +\|v_t\|^2
+2\<v_t, \Pi_{\Theta}(\theta_t-\eta_t \nabla\ell^B_t(\theta_t))-\theta^*_t\> +\eta_t^2\|\nabla \ell^B_t(\theta_t)\|^2.
\end{align*}

Define
\[
\delta_t: = \nabla\ell^B_t(\theta_t) -\nabla \ellbar_t(\theta_t)\,,
\]
as the difference between the gradient of the expected loss (at step $t$) and the gradient of the batch average loss at that step.

Writing the above bound in terms of this notation, we get
\begin{align}
D_{t+1}^2 
&\le D_t^2 - 2\eta_t\<\nabla \ellbar_t(\theta_t)+\delta_t, \theta_t - \theta_t^*\> +\|v_t\|^2
+2\<v_t, \Pi_{\Theta}(\theta_t-\eta_t \nabla\ell^B_t(\theta_t))-\theta^*_t\>\nonumber\\ &\quad +\eta_t^2\Big(\|\nabla \ellbar_t(\theta_t)\|^2+ \|\delta_t\|^2+2\<\delta_t,\nabla\ellbar_t(\theta_t)\> \Big)\,.\label{eqn:common}
\end{align}

By \citet[Lemma 4]{zhou2018fenchel} for any $L$-smooth convex function $f$, we have
\begin{align}\label{eq:conv-prop1}
\frac{1}{L} \|\nabla f(y)-\nabla f(x)\|^2\le \<\nabla f(y)-\nabla f(x), y-x\>\,.
\end{align}

Since the loss function $\ell(\theta,z)$ is convex, the expected loss functions $\ellbar_t(\theta)$ are also convex for $t=1,\dotsc, T$.
Using~\eqref{eq:conv-prop1} together with the fact that $\nabla\ellbar_t(\theta^*_t) = 0$ by optimality of $\theta^*_t$, we get
\begin{equation}\label{eqn:cvx1}
\frac{1}{L}\|\nabla\ellbar_t(\theta_t)\|^2\le \<\nabla \ellbar_t(\theta_t), \theta_t - \theta_t^*\>\,.
\end{equation}

Using the above bound, we obtain
\begin{align*}
D_{t+1}^2 
&\le D_t^2 - (2\eta_t-L\eta_t^2)\<\nabla \ellbar_t(\theta_t), \theta_t - \theta_t^*\> +\|v_t\|^2
+2\<v_t, \Pi_{\Theta}(\theta_t-\eta_t \nabla\ell^B_t(\theta_t))-\theta^*_t\>\\ &\quad +\eta_t^2\|\delta_t\|^2- 2\eta_t \<\delta_t,\theta_t - \theta_t^*-\eta_t\nabla \ellbar_t(\theta_t)\>.
\end{align*}
Recall our assumption $\eta_t\le 2/L$.
Using the convexity of $\ellbar_k$, we have 
\begin{equation}\label{eqn:cvx2}
\ellbar_t(\theta_t)-\ellbar_t(\theta^*_t)\le \<\nabla \ellbar_t(\theta_t), \theta_t - \theta^*_t\>,
\end{equation}
which along with the above bound implies that
\begin{align*}
D_{t+1}^2 
&\le D_t^2 - (2\eta_t-L\eta_t^2)(\ellbar_t(\theta_t)-\ellbar_t(\theta^*_t)) +\|v_t\|^2
+2\<v_t, \Pi_{\Theta}(\theta_t-\eta_t \nabla\ell^B_t(\theta_t))-\theta^*_t\>\\ &\quad +\eta_t^2\|\delta_t\|^2- 2\eta_t \<\delta_t,\theta_t - \theta_t^*-\eta_t\nabla \ellbar_t(\theta_t)\> \,.
\end{align*}

 Note that $\Pi_{\Theta}(\theta_t-\eta_t \nabla\ell^B_t(\theta_t))-\theta^*_t = \theta_{t+1}-\theta^*_t$.
We let $a_t: = 2\eta_t - L\eta_t^2 > 0$, and by rearranging the terms in the above equation we obtain
\begin{align}
    \ellbar_t(\theta_t)-\ellbar_t(\theta^*_t)
    \le \frac{D_t^2}{a_t} - \frac{D_{t+1}^2}{a_t}+
    \frac{\|v_t\|^2}{a_t}+\frac{2}{a_t}\<v_t, \theta_{t+1}-\theta^*_t\>+\frac{\eta_t^2\|\delta_t\|^2}{a_t} - \frac{2\eta_t}{a_t}\<\delta_t,\theta_t - \theta_t^*-\eta_t\nabla \ellbar_t(\theta_t)\>\,.\label{eq:Di-B}
\end{align}

We next note that $\theta_t, \theta^*_t, \eta_t$ are adapted to the filtration $\bz_{[t-1]}$, and therefore,
\[
\E[\<\delta_t,\theta_t - \theta^*_t-\eta_t\nabla\ellbar_t(\theta_t)\>|\bz_{[t-1]}] = \<\E[\delta_t|\bz_{[t-1]}],\theta_t - \theta^*_t-\eta_t\nabla \ellbar_t(\theta_t)\>= 0\,.
\]
Taking iterated expectations of both sides of \eqref{eq:Di-B} with respect to filtration $\bz_t$ (first conditional on $\bz_{[t-1]}$ and then with respect to $\bz_{[t-1]}$), we get
\begin{align}
\E[\reg_t] 
\le \E\left[\frac{D_t^2-D_{t+1}^2}{a_t} 
+ \frac{\sigma^2}{B_t}\frac{\eta_t^2}{a_t} + \frac{\|v_t\|^2}{a_t} + \frac{2}{a_t}\<v_t, \theta_{t+1}-\theta^*_t\>\right]\,,\label{eq:Di-B2}
\end{align}
with $\reg_t = \ellbar_t(\theta_t) - \ellbar_t(\theta^*_t)$.
Summing both sides over $t =1,\dotsc, T$, we obtain the desired result.


\subsection{Proof of Proposition~\ref{propo:optimal-eta}}
Recall the optimization problem for $\eta^*$ given below:
\begin{align}\label{eq:etat*}
    \eta_t^*:= \arg\min_{0\le \eta\le \frac{1}{L}} D_{\max}^2
    \left(\frac{1}{2\eta-L\eta^2} - \frac{1}{2\eta_{t-1}-L\eta_{t-1}^2}\right)_+
    +\frac{\sigma^2}{B_t}\cdot\frac{\eta^2}{2\eta-L\eta^2}+ \frac{\gamma_t^2+2D_{\max}\gamma_t}{2\eta-L\eta^2}\,.
\end{align}
Note that the functions $1/(2\eta - L\eta^2)$ and $\eta^2/(2\eta - L\eta^2)$ are convex for $\eta\le 1/L$. Also the pointwise maximum of convex functions is convex, which implies that the objective function above is convex. With that, we first derive the stationary points of the objective function and then compare them to the boundary points $0$ and $1/L$.

Setting the subgradient of the objective to zero we arrive at the following equation:
\begin{align}\label{eq:stationary}
    \frac{2\sigma^2}{B_t}\cdot\frac{1}{(2-L\eta)^2}+ 
    2\Big(\gamma_t^2+2D_{\max}\gamma_t+ D_{\max}^2\ind(\eta<\eta_{t-1})\Big) \frac{L\eta-1}{(2\eta-L\eta^2)^2} = 0\,.
\end{align}
We consider the two cases below:
\begin{itemize}
    \item $\eta\ge\eta_{t-1}$: In this case, \eqref{eq:stationary} reduces to
    \[
    \frac{\sigma^2}{B_t}+ 
    \Big(\gamma_t^2+2D_{\max}\gamma_t\Big) \frac{L\eta-1}{\eta^2} = 0\,,
    \]
    which is a quadratic equation in $\eta$. Solving for $\eta$, the positive solution is given by $\tau_1$~\eqref{eq:tau1}. This case happens only when the solution satisfies the condition of the case, namely $\eta_{t-1}\le \tau_{1,t}$.
    \item $\eta\le\eta_{t-1}$. In this case, \eqref{eq:stationary} reduces to
    \[
    \frac{\sigma^2}{B_t}+ 
    \Big(\gamma_t^2+2D_{\max}\gamma_t+ D_{\max}^2\Big) \frac{L\eta-1}{\eta^2} = 0\,,
    \]
    which admits the positive solution $\tau_{2,t}$~\eqref{eq:tau2}. This case happens only when the solution satisfies the condition of the case, namely $\tau_{2,t}\le\eta_{t-1}$.
\end{itemize}
If $\tau_{1,t}<\eta_{t-1}<\tau_{2,t}$, then in both of the above cases, the solution happens at the boundary value $\eta_{t-1}$. This brings us to the following characterization for $\eta^*_t$:
\begin{align}
    \eta^*_t = \begin{cases}
    \tau_{1,t} & \text{if } \eta_{t-1}^* \le \tau_{1,t},\\
    \eta_{t-1}^* & \text{if } \tau_{1,t} \le \eta_{t-1}^* \le \tau_{2,t}\\
    \tau_{2,t} & \text{if } \eta_{t-1}^* \ge \tau_{2,t}\,.
    \end{cases}
\end{align}
Note that the above characterization was based on the stationary points of the objective. we next examine if the above solution satisfies the boundary conditions. Obviously $\eta_t^*>0$. We also claim that $\eta^*_t\le 1/L$. For this, we only need to show that $\tau_{2,t}\le 1/L$ (because $\eta^*_t\le \tau_{2,t}$ for all values of $\eta_{t-1}$). Invoking definition of $\tau_{2,t}$, we have 
\[
\tau_{2,t}:= \frac{B_t}{2\sigma^2}\left(\sqrt{b_{2,t}^2L^2+\frac{4\sigma^2}{B_t} b_{2,t}} - b_{2,t}L\right), \quad b_{2,t} := (\gamma_t+D_{\max})^2\,.
\]
It is easy to see that $\tau_{2,t}\le 1/L$ follows simply from $b_{2,t}^2L^2+4\frac{\sigma^2}{B_t}b_{2,t} < (\frac{2\sigma^2}{LB_t}+ b_{2,t} L)^2$.
\subsection{Proof of Theorem~\ref{thm:le-convex-reg}}

Recall that
\[
    \delta_t: = \nabla\ell^B_t(\theta_t) -\nabla \ellbar_t(\theta_t)\,,
\]
as the difference between the gradient of the expected loss (at step $t$) and the gradient of the batch average loss at that step. Writing $D_{t+1}$ in terms of this notation, we get
\begin{align}\label{eqn:common_}
D_{t+1}^2 
&= D_t^2 - 2\eta_t\<\nabla \ellbar_t(\theta_t)+\delta_t, \theta_t - \theta_t^*\> +\|v_t\|^2
+2\<v_t, (\theta_t-\eta_t \nabla\ell^B_t(\theta_t))-\theta^*_t\>\nonumber\\ &\quad +\eta_t^2\Big(\|\nabla \ellbar_t(\theta_t)\|^2+ \|\delta_t\|^2+2\<\delta_t,\nabla\ellbar_t(\theta_t)\> \Big)\,.
\end{align}

Since the loss function $\ell(\theta,z)$ is $L$-smooth and $\mu$-strongly convex, the expected loss $\ellbar_t(\theta)$ is also $L$-smooth and $\mu$-strongly convex and by invoking \citet[Lemma 3($iii$)]{zhou2018fenchel},
we have
\[
\<\nabla \ellbar_t(\theta_t), \theta_t - \theta^*_t\>
\le \ellbar_t(\theta_t) - \ellbar_t(\theta^*_t) + \frac{1}{2\mu}\|\nabla \ellbar_t(\theta_t)\|^2\,.
\]
Using this bound in \eqref{eqn:common_}, we obtain
\begin{align}\label{eqn:common_1}
D_{t+1}^2 
&\ge D_t^2 - 2\eta_t(\ellbar_t(\theta_t)- \ellbar_t(\theta^*_t)) +\|v_t\|^2
+2\<v_t, (\theta_t-\eta_t \nabla\ell^B_t(\theta_t))-\theta^*_t\>\nonumber\\ &\quad +\Big(\eta_t^2 - \frac{\eta_t}{\mu}\Big)\|\nabla\ellbar_t(\theta_t)\|^2+\eta_t^2\|\delta_t\|^2- 2\eta_t \<\delta_t,\theta_t - \theta_t^*-\eta_t\nabla \ellbar_t(\theta_t)\>.
\end{align}
We next use \citet[Lemma 4, item 5]{zhou2018fenchel} and the fact that $\nabla\ellbar_t(\theta^*_t) = 0$ to get
\begin{align}
    \|\nabla\ellbar_t(\theta_t)\|^2\le 2L(\ellbar_t(\theta_t)-\ellbar_t(\theta^*_t))\,.
\end{align}
Using the above bound into \eqref{eqn:common_1}, for $\eta_t\le 1/\mu$,
we obtain
\begin{align}\label{eqn:common_1}
D_{t+1}^2 
&\ge D_t^2 - 2\Big(\eta_t+\frac{L}{\mu}\eta_t - \eta_t^2L\Big)(\ellbar_t(\theta_t)- \ellbar_t(\theta^*_t)) +\|v_t\|^2
+2\<v_t, (\theta_t-\eta_t \nabla\ell^B_t(\theta_t))-\theta^*_t\>\nonumber\\ &\quad +\eta_t^2\|\delta_t\|^2- 2\eta_t \<\delta_t,\theta_t - \theta_t^*-\eta_t\nabla \ellbar_t(\theta_t)\>\,.
\end{align}
We recognize that $\theta_t-\eta_t \nabla\ell^B_t(\theta_t)=\theta_{t+1}$ by the SGD update, and let $a'_t: = 2(\eta_t+\frac{L}{\mu}\eta_t - \eta_t^2L)$, with $\eta_t \le 1/\mu$.

Next we obtain a telescoping series for $\Reg(T)$ as before. Continuing as before (in Theorem~\ref{thm:convex-reg}), we can (1) isolate $\ellbar_t(\theta_t)-\ellbar_t(\theta^*_t)$ on the left-hand side, and (2) take expectations: first conditioned on the filtration $\bz_{[t-1]}$ and then an unconditioned expectation, to get:
\begin{align*}
&\E[\Reg(T)]= \sum_{t=1}^T \E[\reg_t]
\ge \mathbb{E}\left[\sum_{t=1}^T \left(\frac{D_{t}^2}{a'_{t}} - \frac{D_{t+1}^2}{a'_{t+1}}\right) +  \frac{\sigma^2\eta_t^2}{B_ta'_t} + \frac{\|v_t\|^2}{a'_t} + 2\frac{\<v_{t}, \theta_{t+1}-\theta^*_{t}\>}{a'_{t}}\right], 
\end{align*}
which completes the proof of theorem.

\section{Proof of theorems and technical lemmas for non-convex loss}
\subsection{Proof of Theorem~\ref{thm:non-Nconvex-reg}}
We note that by Assumption~\ref{ass:main},
\begin{align}\label{eq:L-smooth}
    \Big|\ellbar_{t}(\theta_{t+1}) - \ellbar_{t}(\theta_{t})
    -\<\nabla \ell_t(\theta_t),\theta_{t+1}-\theta_t\>\Big|
\le \frac{L}{2}\|\theta_{t+1}-\theta_t\|^2= \frac{L}{2} \eta_t^2\|\nabla\ell_t^B(\theta_t)\|^2\,.
\end{align}
Therefore,
\begin{align*}
    \ellbar_{t}(\theta_{t+1}) - \ellbar_{t}(\theta_{t}) &\le \<\nabla \ellbar_t(\theta_t),\theta_{t+1}-\theta_t\> + \frac{L}{2}\eta_t^2\|\nabla\ell_t^B(\theta_t)\|^2\\
    &\le -\eta_t\<\nabla \ellbar_t(\theta_t), \nabla \ell^B_t(\theta_t)\>+\frac{L}{2}\eta_t^2\|\nabla\ell_t^B(\theta_t)\|^2.
\end{align*}
Recall the notation $\delta_t := \nabla\ell_t^B(\theta_t)- \nabla \ellbar_t(\theta_t)$, by which we get
\begin{align*}
\ellbar_{t}(\theta_{t+1}) - \ellbar_{t}(\theta_{t}) &\le
-\eta_t\|\nabla \ellbar_t(\theta_t)\|^2-\eta_t\<\nabla\ellbar_t(\theta_t),\delta_t\> + \frac{L}{2}\eta_t^2 \left(\|\nabla\ellbar_t(\theta_t)\|^2+2\<\nabla\ellbar_t(\theta_t),\delta_t\> + \|\delta_t\|^2\right)\\
&= -\left(\eta_t - \frac{L}{2}\eta_t^2\right)\|\nabla \ellbar_t(\theta_t)\|^2 - (\eta_t - L\eta_t^2) \<\nabla\ellbar_t(\theta_t),\delta_t\> + \frac{L}{2}\eta_t^2 \|\delta_t\|^2\,.
\end{align*}
By condition $\eta_t\le 1/L$, we have $a_t = \eta_t - L\eta_t^2 >0$. Rearranging the terms in the above inequality, we obtain
\begin{align}\label{eq:nconv2}
    \|\nabla \ellbar_t(\theta_t)\|^2 \le 2\frac{\ellbar_t(\theta_t) - \ellbar_t(\theta_{t+1})}{a_t} -2\<\nabla\ellbar_t(\theta_t),\delta_t\> + \frac{L\eta_t^2}{a_t} \|\delta_t\|^2\,.
\end{align}
Since $\theta_t, \theta^*_t,\eta_t$ are adapted to the filtration $\bz_{[t-1]}$, we have
\[
\E[\<\nabla\ellbar_t(\theta_t),\delta_t\>|\bz_{[t-1]}] = 
\<\nabla\ellbar_t(\theta_t),\E[\delta_t\>|\bz_{[t-1]]} = 0\,.
\]
Therefore, by taking expectation from the both sides of \eqref{eq:nconv2}, first conditional on $\bz_{[t-1]}$ and then with respect to $\bz_{[t-1]}$ we get
\begin{align}\label{eq:nconv2}
    \E[\|\nabla \ellbar_t(\theta_t)\|^2] &\le 2\frac{\ellbar_t(\theta_t) - \ellbar_t(\theta_{t+1})}{a_t}  + \frac{L\eta_t^2}{a_t} \frac{\sigma^2}{B_t}\nonumber\\
    &\le 2\frac{\ellbar_t(\theta_t) - \ellbar_{t+1}(\theta_{t+1}) }{a_t} + \frac{|\ellbar_{t+1}(\theta_{t+1})-\ellbar_t(\theta_{t+1})|}{a_t} + \frac{L\eta_t^2}{a_t} \frac{\sigma^2}{B_t}\nonumber\\
    &= 2\frac{\ellbar_t(\theta_t) - \ellbar_{t+1}(\theta_{t+1}) }{a_t} + \frac{\gamma_t}{a_t} + \frac{L\eta_t^2}{a_t} \frac{\sigma^2}{B_t}\,.
\end{align}
Summing both sides over $t=1,\dotsc, T$, we have
\begin{align*}
\E[\Reg(T)] &=\sum_{t=1}^T \E[\|\nabla \ellbar_t(\theta_t)\|^2]\\ 
&\le \sum_{t=1}^T\E\left[ \left(\frac{2\ellbar_t(\theta_t)}{a_t} - \frac{2\ellbar_{t+1}(\theta_{t+1})}{a_{t}}\right) + L\frac{\sigma^2\eta_t^2}{B_ta_t} + \frac{\gamma_t}{a_t}\right]\\
    &= \sum_{t=2}^T \E\left[2\ellbar_t(\theta_t) \left(\frac{1}{a_t} - \frac{1}{a_{t-1}}\right)\right]
    +\E\left[\frac{2\ellbar_1(\theta_1)}{a_1}-\frac{2\ellbar_{T+1}(\theta_T)}{a_{T+1}}\right] + \sum_{t=1}^T \E\left[L\frac{\sigma^2\eta_t^2}{B_ta_t} + \frac{\gamma_t}{a_t} \right]\,.
\end{align*}
The result follows by noting that $\ellbar_{T+1}(\theta_{T+1})\ge 0$.

\section{Auxiliary lemmas}\label{app:lem}
\begin{lemma}\label{lem:stein}
Let $x\in\reals^d$ such that $x\sim \normal(0,I_d)$.
For any fixed vector $u\in\reals^d$, we have
\[
    \E[(xx^\sT-I) uu^\sT (xx^\sT - I)^\sT] = \|u\|^2 I + uu^\sT\,.
\]
\end{lemma}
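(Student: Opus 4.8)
The plan is to expand the matrix product inside the expectation and reduce everything to one nontrivial fourth-moment computation for the standard Gaussian. First I would note that $xx^\sT - I$ is symmetric, so the target quantity equals $\E[(xx^\sT-I)\,uu^\sT\,(xx^\sT-I)]$, and I would expand this into four terms:
\[
\E[xx^\sT uu^\sT xx^\sT] \;-\; \E[xx^\sT uu^\sT] \;-\; \E[uu^\sT xx^\sT] \;+\; uu^\sT .
\]
Since $\E[xx^\sT] = I$, the middle two terms are each exactly $uu^\sT$, so the whole problem reduces to evaluating the first term. Using $xx^\sT uu^\sT xx^\sT = x\,(x^\sT u)(u^\sT x)\,x^\sT$, this is $\E[(u^\sT x)^2\, xx^\sT]$.

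For that term I would invoke the rotational invariance of $\normal(0,I_d)$: pick an orthogonal $Q$ with $Q^\sT u = \|u\| e_1$ and substitute $x = Qy$ with $y\sim\normal(0,I_d)$, so that $xx^\sT - I = Q(yy^\sT - I)Q^\sT$ and the entire expectation on the left conjugates out a factor of $Q$ on each side. This reduces the claim to the scalar case $u = \|u\| e_1$, where I only need $\E[y_1^2\, yy^\sT]$. That matrix is diagonal: the $(1,1)$ entry is $\E[y_1^4] = 3$, the $(i,i)$ entry for $i\ne 1$ is $\E[y_1^2]\E[y_i^2] = 1$ by independence, and all off-diagonal entries vanish by parity (some coordinate appears to an odd power). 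Hence $\E[y_1^2\, yy^\sT] = I + 2 e_1 e_1^\sT$, and collecting the four terms in the $u=\|u\|e_1$ case gives $\|u\|^2 I + 2\|u\|^2 e_1 e_1^\sT - \|u\|^2 e_1e_1^\sT - \|u\|^2 e_1e_1^\sT + \|u\|^2 e_1e_1^\sT = \|u\|^2 I + \|u\|^2 e_1 e_1^\sT$. Conjugating back by $Q$ and using $Q e_1 = u/\|u\|$ yields $\|u\|^2 I + uu^\sT$, as claimed. (Alternatively one can skip the rotation and compute $\E[(u^\sT x)^2 x_i x_j] = \sum_{k,l} u_k u_l\,\E[x_k x_l x_i x_j]$ directly via Isserlis' theorem, $\E[x_k x_l x_i x_j] = \delta_{kl}\delta_{ij} + \delta_{ki}\delta_{lj} + \delta_{kj}\delta_{li}$, which gives $\|u\|^2\delta_{ij} + 2u_i u_j$.)

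There is no real obstacle here: the computation is elementary Gaussian moment bookkeeping. The only points requiring a little care are justifying that the off-diagonal entries of $\E[y_1^2 yy^\sT]$ are zero (parity plus independence) and correctly tracking the three Wick pairings if one uses the direct route; I would present the rotation-invariance argument since it minimizes index manipulation.
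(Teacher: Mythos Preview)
Your proof is correct and follows the same overall reduction as the paper: expand $(xx^\sT-I)uu^\sT(xx^\sT-I)$ into four terms and observe that everything hinges on computing $\E[(u^\sT x)^2\,xx^\sT]$. The only difference is in how that fourth moment is evaluated. The paper invokes Stein's lemma in the form $\E[(xx^\sT - I)g(x)] = \E[\nabla^2 g(x)]$ with $g(x)=(u^\sT x)^2$, which immediately gives $\E[xx^\sT(u^\sT x)^2] = 2uu^\sT + \|u\|^2 I$ without any coordinate work. Your route---rotational invariance to reduce to $u=\|u\|e_1$ and then a direct computation of $\E[y_1^2\,yy^\sT]$, or equivalently the Isserlis/Wick pairing---is equally valid and arguably more self-contained, since it does not appeal to an auxiliary identity. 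The Stein-lemma route is slightly slicker and coordinate-free; your route makes the Gaussian moment structure more explicit. Either is fine here.
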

\begin{proof}
By Stein's lemma, for any function $g:\reals^d \to \reals$ we have
\[
\E[(xx^\sT - I)g(x)] = \E[\nabla^2g(x)]\,.
\]
Using the above identity with $g(x) = (u^\sT x)^2$ we obtain
\begin{align}
    \E[xx^\sT(u^\sT x)^2] = 2uu^\sT + \|u\|^2 I\,.
\end{align}
Using the above characterization, we get
\begin{align*}
 \E[(xx^\sT-I) uu^\sT (xx^\sT - I)^\sT] &=
 \E[xx^\sT (u^\sT x)^2] - u(u^\sT x) x^\sT - x(x^\sT u) u^\sT + uu^\sT\\
 &=2uu^\sT + \|u\|^2 I - 2 uu^\sT+ uu^\sT\\
 &= uu^\sT+ \|u\|^2 I\,,
\end{align*}
which completes the proof.
\end{proof}

We next present Ito's lemma, which allows to find the differential of a time-dependent function of a stochastic process. 
\begin{lemma}[It\^o's lemma,~\cite{oksendal2013stochastic}]
\label{lem:Ito}
Let $X_t\in\reals^p$ be a vector of  It\^o drift-diffusion process, such that
\[
    \de X_t = f(t,X_t) \de t+ g(t,X_t) \de W_t\,,
\]
with $W_t$ being an $q$-dimensional standard Brownian motion and $f(t,X_t)\in\reals^p$ and $g(t,X_t)\in\reals^{p\times q}$. Consider a scalar process $Y(t)$ defined by $Y(t) = \phi(t, X(t))$, where $\phi(t, X)$ is a scalar function which is continuously differentiable with respect to $t$ and twice continuously differentiable with respect to $X$. We then have
\begin{align*}
    \de Y_t &= \tilde{f}(t,X_t) \de t+ \tilde{g}(t,X_t) \de W_t\,,\\
    \tilde{f}(t,X_t)&= \phi_t(t,X_t) + \phi_x(t,X_t)^\sT f(t,X_t) +\frac{1}{2}{\rm tr}\left(g(t,X_t)^\sT \phi_{xx}(t,X_t)g(t,X_t)\right)\\
    \tilde{g}(t,X_t)&=  \phi_x(t,X_t)^\sT g(t,X_t)\,.
\end{align*}
\end{lemma}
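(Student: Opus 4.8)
The plan is to prove Lemma~\ref{lem:Ito} by the classical route for the It\^o formula: write the increment of $Y_t=\phi(t,X_t)$ as a telescoping sum over a partition, Taylor-expand $\phi$ to second order, and pass to the limit using the quadratic variation of Brownian motion. Concretely, I would fix $t>0$ and a partition $0=t_0<t_1<\dots<t_n=t$ with mesh $\delta\to0$, and write
\[
Y_t-Y_0=\sum_{i=0}^{n-1}\big(\phi(t_{i+1},X_{t_{i+1}})-\phi(t_i,X_{t_i})\big).
\]
Taylor-expanding each summand to second order in $x$ and first order in $t$ about $(t_i,X_{t_i})$ gives
\[
\phi(t_{i+1},X_{t_{i+1}})-\phi(t_i,X_{t_i})=\phi_t\,\Delta t_i+\phi_x^\sT\Delta X_i+\tfrac{1}{2}\,\Delta X_i^\sT\phi_{xx}\,\Delta X_i+R_i,
\]
where $\phi_t,\phi_x,\phi_{xx}$ are evaluated at $(t_i,X_{t_i})$, $\Delta t_i=t_{i+1}-t_i$, $\Delta X_i=X_{t_{i+1}}-X_{t_i}$, and $R_i$ is the higher-order remainder.

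I would first dispose of the first-order terms: $\sum_i\phi_t(t_i,X_{t_i})\Delta t_i\to\int_0^t\phi_t(s,X_s)\,\de s$ by continuity of $s\mapsto\phi_t(s,X_s)$, and $\sum_i\phi_x(t_i,X_{t_i})^\sT\Delta X_i\to\int_0^t\phi_x^\sT\,\de X_s=\int_0^t\phi_x^\sT f\,\de s+\int_0^t\phi_x^\sT g\,\de W_s$ directly from the definition of the It\^o integral (the left-endpoint evaluation makes $\phi_x$ the correct adapted integrand). The crux is the quadratic term. Substituting $\Delta X_i\approx f(t_i,X_{t_i})\Delta t_i+g(t_i,X_{t_i})\Delta W_i$, the pieces carrying $\Delta t_i^2$ or $\Delta t_i\Delta W_i$ are of strictly smaller order and vanish, leaving $\tfrac{1}{2}\sum_i\Delta W_i^\sT A_i\,\Delta W_i$ with $A_i:=g(t_i,X_{t_i})^\sT\phi_{xx}(t_i,X_{t_i})\,g(t_i,X_{t_i})$. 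I would then use $\E[\Delta W_i\Delta W_i^\sT\mid\mathcal{F}_{t_i}]=\Delta t_i\,I_q$ (independence of the $q$ Brownian coordinates kills the off-diagonal entries) together with an $L^2$ estimate resting on orthogonality of martingale increments to show that $\sum_i\big(\Delta W_i^\sT A_i\Delta W_i-\mathrm{tr}(A_i)\Delta t_i\big)\to0$ in probability, and hence $\tfrac{1}{2}\sum_i\mathrm{tr}(A_i)\Delta t_i\to\tfrac{1}{2}\int_0^t\mathrm{tr}\big(g^\sT\phi_{xx}g\big)\,\de s$. Collecting the surviving limits yields exactly $\de Y_t=\tilde f(t,X_t)\,\de t+\tilde g(t,X_t)\,\de W_t$ with $\tilde f,\tilde g$ as in the statement.

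To make the approximation steps and the remainder rigorous, I would first localize: set $\tau_m:=\inf\{s:\|X_s\|\ge m\}$ so that $f$, $g$, and the derivatives of $\phi$ are bounded on $[0,t\wedge\tau_m]$, prove the identity for the stopped process $X_{\cdot\wedge\tau_m}$, and then let $m\to\infty$ using path continuity. On the stopped interval $\sum_i\|R_i\|$ is dominated by the modulus of continuity of $\phi_{xx}$ times $\sum_i\|\Delta X_i\|^2=O_P(1)$ (quadratic variation once more), so it vanishes as $\delta\to0$, and the same bound controls the lower-order cross terms discarded above. I expect the main obstacle to be precisely this quadratic-variation step --- showing that $\sum_i\Delta W_i^\sT A_i\Delta W_i$ concentrates on $\int_0^t\mathrm{tr}(A_s)\,\de s$ and that the mixed Brownian-coordinate products wash out --- since it is the only place where genuinely stochastic reasoning enters; everything else is Taylor's theorem together with standard continuity properties of It\^o integrals. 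A legitimate alternative, given that the paper only uses the lemma as a black box, is to invoke the vector It\^o formula from~\cite{oksendal2013stochastic} and merely verify that our matrix notation matches its coordinate form, namely $\tfrac{1}{2}\sum_{k,\ell}\big(\sum_j g_{kj}g_{\ell j}\big)\partial_{x_k}\partial_{x_\ell}\phi=\tfrac{1}{2}\,\mathrm{tr}\big(g^\sT\phi_{xx}g\big)$.
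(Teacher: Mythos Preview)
Your proof sketch is the standard and correct route to the multivariate It\^o formula, but note that the paper does not prove this lemma at all: it is stated as an auxiliary result with a citation to \cite{oksendal2013stochastic} and used as a black box in the derivation of Theorem~\ref{pro:SDE}. So there is nothing to compare against on the paper's side. Your own final remark is exactly right --- the appropriate ``proof'' here is simply to cite the vector It\^o formula and check that the coordinate expression $\tfrac{1}{2}\sum_{k,\ell}(gg^\sT)_{k\ell}\,\partial_{x_k}\partial_{x_\ell}\phi$ coincides with $\tfrac{1}{2}\,\mathrm{tr}(g^\sT\phi_{xx}\,g)$, which it does by cyclicity of the trace.
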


\end{document}